\definecolor{primary}{HTML}{e41a1c}
\definecolor{secondary}{HTML}{4daf4a}
\DeclareMathOperator*{\argmin}{arg \, min}
\let\originalleft\left
\let\originalright\right
\renewcommand{\left}{\mathopen{}\mathclose\bgroup\originalleft}
\renewcommand{\right}{\aftergroup\egroup\originalright}
\newcommand*{\eqdef}{\ensuremath{\overset{\mathclap{\text{\fontsize{4pt}{4pt}\selectfont def}}}{=}}}
\newcommand*{\vb}[1]{\boldsymbol{#1}}
\newcommand*{\transpose}{\top}
\newcommand*{\expected}{\mathbb{E}}
\newcommand*{\ev}[1]{\mathbb{E}\left[#1 \right] }
\newcommand*{\grad}[2]{\vb{g}_{#2}({#1})}
\newcommand*{\F}[1]{f({#1})}
\newcommand*{\Fi}[2]{f_{#2}({#1})}
\newcommand*{\Fstar}{f_{\star}}
\newcommand*{\Finfimum}{f_{\mathrm{inf}}}
\newcommand*{\fake}[1]{\mathrel{\phantom{#1}}}
\newcommand{\overbar}[1]{\mkern 1.5mu\overline{\mkern-1.5mu#1\mkern-1.5mu}\mkern 1.5mu}
\def\thm@space@setup{%
  \thm@preskip=0.25cm
  \thm@postskip=0.1cm
}
\newtheorem{lemma}{Lemma}
\newtheorem{definition}{Definition}
\newtheorem{assumption}{Assumption}
\title{
Aiding Global Convergence in Federated Learning via Local Perturbation and Mutual Similarity Information}
\author{%
  Emanuel Buttaci\thanks{This work is based on the master's thesis of the author.} \\
  Politecnico di Torino, Italy \\
  \texttt{emanuel.buttaci@studenti.polito.it}
  \And
  Giuseppe Carlo Calafiore \\
  Politecnico di Torino, Italy \\
  \texttt{giuseppe.calafiore@polito.it}
}
\begin{document}

\addtocontents{toc}{\protect\setcounter{tocdepth}{0}}

\maketitle

\setstretch{1.25}

\begin{abstract}
Federated learning has emerged in the last decade as a distributed optimization paradigm due to the rapidly increasing number of portable devices able to support the heavy computational needs related to the training of machine learning models. 
Federated learning utilizes gradient-based optimization to minimize a loss objective shared across participating agents.
To the best of our knowledge, the literature mostly lacks elegant solutions that naturally harness the reciprocal statistical similarity between clients to redesign the optimization procedure.
To address this gap, by conceiving the federated network as a similarity graph, we propose a novel modified framework wherein each client locally performs a perturbed gradient step leveraging prior information about other statistically affine clients.
We theoretically prove that our procedure, due to a suitably introduced adaptation in the update rule, achieves a quantifiable speedup concerning the exponential contraction factor in the strongly convex case compared with popular algorithms \textsc{FedAvg} and \textsc{FedProx}, here analyzed as baselines.
Lastly, we legitimize our conclusions through experimental results on the CIFAR10 and FEMNIST datasets, where we show that our algorithm speeds convergence up to a margin of 30 global rounds compared with \textsc{FedAvg} while modestly improving generalization on unseen data in heterogeneous settings.
\end{abstract}

\section{Introduction}

The large availability of computationally capable devices, such as smartphones or data centers, made the advent of distributed machine learning possible. Federated learning is 
an optimization paradigm enabling a distributed approach to artificial intelligence where a central server coordinates the training of a shared statistical model, such as a convolutional neural network classifier, across several clients whilst preserving their privacy constraints. 
No samples from clients are shared with the server and only locally-computed updates are communicated at each round to optimize the centralized model. 

Formally, a central server coordinates the training of a model $\vb{w} \in \mathbb{R}^D$ across $C$ clients, where each holds its dataset $\mathcal{D}_i$ with samples $\xi_1^i, \xi_2^i, \ldots, \xi_{N_i}^i$ drawn from distribution $\mathcal{P}_i(\vb{x}, \vb{y})$. Given the sample loss $\ell(\vb{w}; \xi)$, each client $i$ solves the local problem $\min_{\vb{w}} \, \mathbb{E}_{\xi \, \sim \, \mathcal{P}_i} [ \ell (\vb{w}; \xi) ]$, expressed by the empirical risk minimization objective $\Fi{\vb{w}}{i}$. 
The server globally minimizes the aggregation function
\begin{align}
    \F{\vb{w}} \eqdef \sum_{i = 1}^{C} p_i \Fi{\vb{w}}{i} \, \textrm{ where each } \, \Fi{\vb{w}}{i} \eqdef \frac{1}{N_i} \sum_{n = 1}^{N_i} \ell (\vb{w}; \xi_n^i) \, \textrm{ and } \, \sum_{i = 1}^C p_i = 1.
\end{align}
The optimization is performed during $T$ training rounds, and participating clients receive the shared model $\overbar{\vb{w}}_{t, 0}$ and complete $E$ steps of stochastic gradient descent on their respective datasets in each round. The local update rule for client $i$ at round $t \ge 0$ and step $k \in \{ \, 0, \ldots, E - 1\, \}$ is $\vb{w}_{t, k + 1}^i = \vb{w}_{t, k}^i - \gamma_{t}  \grad{\vb{w}_{t, k}^i}{i} \label{eq:fed_avg_update_rule}$ where $\grad{\vb{w}_{t, k}^i}{i} = \nabla \Fi{{\vb{w}_{t, k}^i; \xi_{t, k}^i}}{i}$ is the stochastic gradient and $\gamma_{t} $ is the step size. From \cite{local_sgd_average_iterate_sequence}, it is common to study the convergence using the average iterative rule
\begin{align}
    \overbar{\vb{w}}_{t, k + 1} = \overbar{\vb{w}}_{t, k} - \gamma_{t} \sum_{i \, \in \, \mathcal{S}_t} p_{i} \grad{\vb{w}_{t, k}^i}{i}.
    \label{eq:fed_avg_update_rule_averaged}
\end{align} 
The average iterate is defined as $\overbar{\vb{w}} \eqdef \sum_{i = 1}^C p_{i} \vb{w}_i$. The set $\mathcal{S}_t$ denotes the clients participating in server round $t$. Globally, this results in inexact stochastic gradient descent, because client $i$ computes the stochastic gradient in the local iterate $\vb{w}_{t, k}^i$ rather than $\overbar{\vb{w}}_{t, k}$. We remind that $\vb{w}_{t, 0}^i = \overbar{\vb{w}}_{t, 0}$ holds for every client at the beginning of any round $t$. In practice, the server collects the local updates and then aggregates them to update the centralized model at the end of every global round.

\paragraph{Contributions}
\label{sec:contribution}

Motivated by the scarcity of literature that questions whether convergence and generalization might benefit from integrating mutual client similarity information in the standard federated training process, we simultaneously answer the following unrelated topic questions.
\begin{enumerate}[leftmargin=*]
\item \textit{Can we improve generalization by leveraging local information about statistically similar clients?}
\item \textit{Can we devise a locally perturbed scheme to directly influence the global convergence speed?}
\end{enumerate}
\begin{wrapfigure}{R}{0.49\textwidth}
    \vspace*{-3mm}
    \centering
    %\scalebox{0.5}{
\resizebox{0.48 \textwidth}{!}{
\begin{tikzpicture}
    \definecolor{primary}{HTML}{e41a1c} %4daf4a
    \definecolor{secondary}{HTML}{377eb8}
    
    \usetikzlibrary{shapes.geometric, arrows, automata, positioning, fadings, through}
    \tikzstyle{c1} = [rectangle, rounded corners, minimum width = 1.5cm, minimum height = 1.5cm, text centered, draw = black, line width = 0.5mm, left color = primary!20, right color = secondary!20]

    \tikzstyle{c2} = [rectangle, rounded corners, minimum width = 1.5cm, minimum height = 1.5cm, text centered, draw = black, line width = 0.5mm, fill = primary!20]

    \tikzstyle{c3} = [rectangle, rounded corners, minimum width = 1.5cm, minimum height = 1.5cm, text centered, draw = black, line width = 0.5mm, fill = secondary!20]

    \draw[line width = 0.5mm] (0, 0) -- (7, 0) node [midway, below, sloped, yshift = -2] {$p_{12} \propto [\boldsymbol{A}]_{12} \gg 0$};

    \draw[line width = 0.5mm, draw opacity = 0.3] (0, 0.75) edge node[sloped, anchor = center, above, yshift = 2] (arrow13) {$p_{13} \propto [\boldsymbol{A}]_{13} > 0$} (2.75, 4.5);

    \draw[line width = 0.5mm, draw opacity = 0.10] (7, 0.75) edge node[sloped, anchor = center, above, yshift = 2] (arrow23) {$p_{23} \propto [\boldsymbol{A}]_{23} \approx 0$} (4.25, 4.5);

    \draw[dashed, line width = 0.5mm, ->] (1, 0.75) -- (2.8, 1.5) node [midway, above, sloped] {$\vb{w}_{t - 1, E}^1$};

    \draw[dashed, line width = 0.5mm, <-] (1, 0.5) -- (2.8, 1.25) node [midway, below, sloped] (arrow12) {$\vb{u}_{t}^1$, $\overline{\vb{w}}_{t, 0}$};

    \draw[dashed, line width = 0.5mm, <-] (4.2, 1.5) -- (6, 0.75) node [midway, above, sloped] {$\vb{w}_{t - 1, E}^2$};

    \draw[dashed, line width = 0.5mm, ->] (4.2, 1.25) -- (6, 0.5) node [midway, below, sloped] (arrow12) {$\vb{u}_{t}^2$, $\overline{\vb{w}}_{t, 0}$};

    \draw[dashed, line width = 0.5mm, <-] (3.375, 2.2) -- (3.375, 3.5) node [midway, above, sloped, yshift = 2] {$\vb{w}_{t - 1, E}^3$};

    \draw[dashed, line width = 0.5mm, ->] (3.625, 2.2) -- (3.625, 3.5) node [midway, below, sloped, yshift = -2] {$\vb{u}_{t}^3$, $\overline{\vb{w}}_{t, 0}$};

    \node[draw] at (0, 0) (client1) [c1] {};
    \node[draw] at (7, 0) (client2) [c2] {};
    \node[draw] at (3.5, 4.5) (client3) [c3] {};
    \draw[line width = 0.5mm, fill = white] (3.5, 1.5) circle (0.5) {};
    \draw[fill = black] (3.5, 1.5) circle (0.25) {};

    \node at (-1.25, 0.5) {$\mathcal{D}_1$};
    \node at (8.25, 0.5) {$\mathcal{D}_2$};
    \node at (4.75, 5) {$\mathcal{D}_3$};

    \draw[fill = secondary, rounded corners = 2] (3, 4) rectangle (3.35, 4.35);
    \draw[fill = secondary, rounded corners = 2] (3, 4.6) rectangle (3.35, 4.95);
    \draw[fill = secondary, rounded corners = 2] (3.65, 4) rectangle (4, 4.35);
    \draw[fill = secondary, rounded corners = 2] (3.65, 4.6) rectangle (4, 4.95);

    \draw[fill = secondary, rounded corners = 2] (0.1, 0.1) rectangle (0.45, 0.45);
    \draw[fill = primary] (-0.3, 0.25) circle (0.175);
    \draw[fill = primary] (-0.3, -0.25) circle (0.175);
    \draw[fill = primary] (0.3, -0.25) circle (0.175);

    \draw[fill = primary] (6.7, 0.25) circle (0.175);
    \draw[fill = primary] (7.3, 0.25) circle (0.175);
    \draw[fill = primary] (6.7, -0.25) circle (0.175);
    \draw[fill = primary] (7.3, -0.25) circle (0.175);
\end{tikzpicture}
}
    \label{fig:sample-federated-network-representation}
    \caption{Illustration of our framework with three clients having binary class samples, and the server computing $\overline{\vb{w}}_{t, 0}$ and each $\vb{u}_t^i$ at every round $t$.}
    \vspace*{-5mm}
\end{wrapfigure}
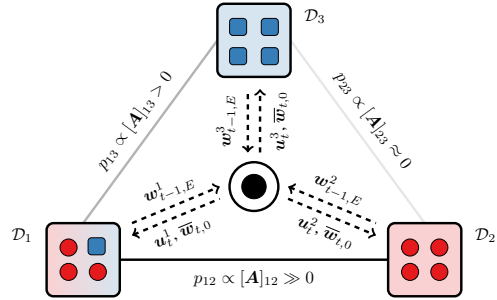
We take inspiration from the concept of node similarity from spectral graph theory to rethink a federated network as a weighted graph based on the statistical similarity between clients' data. Hence, our goal is to devise an elementary procedure that capitalizes on the aforementioned information to carry out a perturbed optimization step that could improve the overall convergence speed as well as the generalization capacity of the learned model.
Our main contributions are summarized as follows.

To establish a baseline, we provide an alternative analysis (see \ref{sec:analysis_fedprox_baseline}) of known algorithm \textsc{FedProx} (\cite{fedprox}) as a generalization of \textsc{FedAvg} with proximal updates. In the strongly convex scenario, we show that \textsc{FedProx} has a slower contraction rate at iteration $t$ with respect to \textsc{FedAvg} for comparable step sizes and any $\alpha > 0$, namely the weight of the proximal term. For nonconvex losses, \textsc{FedProx} presents two extra terms compared with \textsc{FedAvg} of order $\mathcal{O}(G^2 /\sqrt{T})$ and $\mathcal{O}(G^2 / T^{3/2})$, where the former may be a reason of deceleration (see \ref{sec:nonconvex_scenario_discussion}).

In Section \ref{sec:our_algorithm_framework_definition}, we conceive our algorithm based on making inexact gradient steps. Our idea of inexactness is connected with choosing an ideal perturbed iterate that locally minimizes the distance from statistically similar clients (neighboring nodes of the graph representation of the network as in Figure \ref{fig:sample-federated-network-representation}). For strongly convex losses (see \ref{sec:contraction_rate_discussion}), its contraction term has provably faster decay than \textsc{FedAvg} (and \textsc{FedProx}) for comparable step sizes and any choice of our perturbation parameter $\beta \in (0, 1)$.
Our algorithm has similar asymptotic terms to $\textsc{FedAvg}$ even without convexity guarantees, and thus performs comparably. However, regardless of convexity, we show that our framework presents a limitation due to unavoidable term $\mathcal{O}(G^2 / \beta^2)$ rapidly growing as we decrease $\beta$ (see \ref{sec:convergence_analysis_discussion}). 

To support our claims on the robustness of our method, we offer empirical evidence on widely used benchmark datasets (see \ref{sec:experimentation}): FEMNIST from \cite{leaf} and CIFAR10 from \cite{cifar10}. We compare its performance with \textsc{FedAvg} as the specific case of \textsc{FedProx} when $\alpha = 0$.

\section{Related works}
\label{sec:related_works}

The pioneering algorithm \textsc{FedAvg} was introduced by \cite{fedaveraging} to classify written digits using a CNN (\cite{cnn}) and predict the next word in a sentence with an LSTM (\cite{lstm}). It was shown to largely outperform single epoch decentralized stochastic gradient descent (\textsc{FedSGD}). \textsc{FedAvg}, also known as \textsc{LocalSGD} and initially proposed by \cite{parallelized_sgd} as SGD (\cite{sgd}) with periodic model averaging, has since been studied to assess its behavior under different assumptions. 
Most works, including ours, assume the local objective losses to be smooth and analyze the convergence of \textsc{FedAvg} leveraging distinct prior information. For instance, \cite{fed_averaging_niid_convergence, local_sgd_average_iterate_sequence} derived the convergence rates of \textsc{FedAvg} for strongly convex losses and diminishing step sizes while bounding the norm of gradients. \cite{fed_averaging_niid_convergence} used the gap between the global optimal value $\Fstar$ and the expectation of local optimal values $\expected \, \Fi{\vb{w}_{\star}^i}{i}$ as heterogeneity measure. On the other hand, \cite{local_sgd_tighter_theory, scaffold} extended the known analysis to generally convex and nonconvex functions. 
The diverse nature of clients' losses, exacerbated by the statistical heterogeneity, induces locally-optimized models to diverge from the global model. This phenomenon, called client drift (or local divergence), has been tackled by multiple studies. In this respect, \cite{fedprox} introduced a generalization of \textsc{FedAvg} named \textsc{FedProx}, which exploits a proximal term in clients' objectives to constrain the local iterates to stay close to the global one as optimization takes place. Due to its elementary nature, we choose \textsc{FedProx} as our baseline algorithm for future comparisons. Furthermore, \cite{scaffold} suggested SCAFFOLD, a framework that employs control variates to reduce the extent of client drift.

\paragraph{Personalized federated learning}

In personalized federated learning, instead of learning a single global model that does not account for the different distributions from which data samples are drawn, each client learns a tailored model that better fits the nature of its dataset. \cite{L2GD} initially proposed algorithm L2GD that mixes local and global models while reducing the overall communication. \cite{maml} suggested \textsc{Per-FedAvg} as a personalized version of \textsc{FedAvg} that easily alters the global model to suit local datasets, and \cite{personalized_moreau_envelopes} conceived \textsc{pFedMe}, which regularizes local losses using Moreau envelopes. In particular, our work shares a similarity with algorithm \textsc{FedU}, introduced by \cite{fed_u_laplacian_regularization} and proposed as a generalization of several works made in the direction of personalized federated learning. Specifically, \textsc{FedU} performs a regularization step at the end of each round that uses a generic laplacian graph representation of the federated network to smooth local iterates. 
On the other hand, by relying on a specific graph representation based on statistical clients' similarities (Section \ref{sec:our_algorithm_framework_definition}), we disrupt the local optimization structure by exploiting a perturbed gradient update whose argument minimizes the local variability relative to the iterates of neighboring clients, and we show that this strategy expedites convergence by some margin.
Lastly, \textsc{FedU} is entirely decentralized, and clients independently optimize their local objectives. In contrast, our approach is fully centralized.

\paragraph{Multi-task federated learning} 

Multi-task federated learning aims to learn multiple models concomitantly where each corresponds to a task (node in a network). This strategy leverages existing relationships between the nodes of a network, such as statistical affinity or availability. For instance, \cite{mocha} introduced the multi-task framework \textsc{MOCHA} that accounts for issues related to communication expense or partial participation. \cite{multi_task_mixture_distribution} proposed an EM-based algorithm that assumes that local samples belong to a mixture of unknown data distributions. In relation to multi-task learning, our novel framework explores how convergence and generalization benefit from defining node relationships as mutual statistical similarities based on an inherent graph structure.

\section{Our local variability-based algorithmic framework}
\label{sec:our_algorithm_framework_definition}

Before introducing the idea behind our algorithm, we present a simple model of a federated network as a similarity graph, where nodes are represented by clients and each edge quantifies the statistical similarity between their local datasets.

\subsection{A graph-based model to quantify inter-client similarity}
\label{sec:graph_based_model}

The concept of a relationship between clients heavily translates to the idea of a discrepancy between their data distributions. Since we cannot directly work on probabilistic data distributions, we must infer such statistical properties from their datasets. In our scenario, we allow each client $i$ to share an initial message $\vb{m}_i$, coded as a unitary vector, that embodies information about his local data distribution $\mathcal{P}_i(\vb{x}, \vb{y})$, computed through his dataset $\mathcal{D}_i$.
\begin{definition}
\label{eq:client_misalignment}   
Let $i, n \in \mathcal{C}$ be two clients with datasets $\mathcal{D}_i, \mathcal{D}_n$, respectively. The client misalignment $\mathrm{mis}: \mathbb{R}^D \times \mathbb{R}^D \rightarrow [0, 1]$ is defined as the distance measure $\mathrm{mis}(i, n) \eqdef (1 - \vb{m}_i^{\transpose} \vb{m}_n) / 2 
$ where unitary vectors $\vb{m}_i, \vb{m}_n$ are messages initially shared from clients $i$ and $n$.
\end{definition}
Such a metric is devised to gauge inter-client dissimilarity efficiently. Given that messages $\vb{m}_i, \vb{m}_n$ are unitary vectors, computing client misalignment is equivalent to the squared norm of the distance $\|\vb{m}_i - \vb{m}_n\|^2 / 4$. which is strictly related to the cosine dissimilarity between the given messages. 

However, which message should clients exchange? For simplicity, we consider the first principal component of each client's local dataset as the message that encodes information about the statistical distribution of the data. In this regard, on each local dataset, we compute an unscaled and uncentered version of the principal component analysis to capture information about the scale and intercept of the first principal component, which represents the direction of the highest variation in the data.

The construction of a similarity graph naturally depends on the idea of the relationship between the nodes. Common approaches, such as \cite{laplacian_eigenmaps}, involve building the $\varepsilon$-similarities graph assigning connection weights which are $0/1$ or computed through a Gaussian kernel. Our formulation of the graph representation of a federated network is expressed through its adjacency matrix.
\begin{definition}
\label{eq:federated_network_adjacency_matrix}
Given a federated network with clients $\mathcal{C} = \{ \, 1, \ldots, C \, \}$, we define the adjacency matrix $\vb{A}$ associated to its (undirected and complete) graph representation as $[\vb{A}]_{in} \eqdef -\ln(\mathrm{mis}(i, n)) \cdot \mathds{1}_{i \ne n}$.
\end{definition}
The chosen definition of connection strength between nodes, namely clients, leverages the logarithmic scale to have a better distinction between couples of very similar clients and very dissimilar ones.

\subsection{Intuitive algorithm idea}

The key principle of our algorithm is to complete an inexact local optimization round on each client. This translates to making perturbed moves at each local iteration $k$. Specifically, the update step is performed by taking into consideration the minimization of the variability against other neighboring clients. The idea of neighborhood refers to those clients who share a remarkable statistical similarity. When updating the current local iterate, each client $i$ computes the local stochastic gradient in a shifted coordinate $\widetilde{\vb{w}}_{t, k}^i$. This encodes information about the current local iterate $\vb{w}_{t, k}^i$ and the latest progress made by each neighboring client $n \in \mathcal{N}_i$ in the previous $(t - 1)$-th round.

\subsection{Formulation}
\label{sec:our_algorithm_formulation}

Formally, we are interested in implementing an inexact local update rule of the form
\begin{align}
    \vb{w}_{t, k + 1}^i &= \vb{w}_{t, k}^i - \gamma_{t} \grad{\widetilde{\vb{w}}_{t, k}^i}{i}
    \label{ex:our_algorithm_local_update_rule}
\end{align}
at step $k$ of round $t$. Variable $\widetilde{\vb{w}}_{t, k}^i$ is the perturbed iterate in which the stochastic gradient is evaluated. This forces the update that minimizes $\Fi{\vb{w}}{i}$ to be executed in an imprecise direction in relation to the starting point $\vb{w}_{t, k}^i$. We carry out this investigation to comprehend whether this would benefit or harm the convergence to a stationary point. Particularly, the nature of $\widetilde{\vb{w}}_{t, k}^i$ is fundamental and determines the properties of our algorithm. In this regard, we consider the graph-based representation of a federated network from paragraph \ref{sec:graph_based_model}, and we wish to choose $\widetilde{\vb{w}}_{t, k}^i$ as the solution of the problem
\begin{align}
    \min_{\vb{w} \, \in \, \mathbb{R}^D} \, \left\{ \, \frac{\beta}{2}\left\|\vb{w} - \vb{w}_{t, k}^i\right\|^2 + \frac{1 - \beta}{2 p_i} \sum_{n \, \in \, \mathcal{N}_i} p_{in} \left\| \vb{w} - \vb{w}_{t - 1, E}^n \right\|^2 \, \right\}
    \label{sec:our_algorithm_w_tilde_minimization_problem}
\end{align}
where $\vb{w}_{t - 1, E}^n$ is the last iterate of neighboring client $n$ from the previous round. The solution to this formulation minimizes the distance from the exact iterate $\vb{w}_{t, k}^i$ as well as the local variation, namely the sum of squared deviations from the models of neighbors. In this respect, each iterate $\vb{w}_{t - 1, E}^n$ is weighted according to the similarity measure $p_{in} \propto  [\vb{A}]_{in}$ between $i$ and $n$. However, such weights are normalized by $p_i \eqdef \sum_{n \, \in \, \mathcal{N}_i} p_{in}$, which directly corresponds to the concept of degree of client $i$, when interpreted as a graph-node. Interestingly, we also choose $p_i$ as the weighting factor for client $i$ during aggregation. This favors clients that have a higher degree, namely those who share many statistically similar neighbors. Additionally, those who are generally dissimilar and are not representative of the majority will be given less importance. As a solution to problem \eqref{sec:our_algorithm_w_tilde_minimization_problem}, we obtain
\begin{align}
    \widetilde{\vb{w}}_{t, k}^i = \beta \vb{w}_{t, k}^i + (1 - \beta) \vb{u}_{t}^i  \quad \mathrm{where} \quad \vb{u}_{t}^i \eqdef \frac{1}{p_i} \sum_{n \, \in \, \mathcal{N}_i}^C p_{in} \vb{w}_{t - 1, E}^n. 
    \label{eq:our_algorithm_perturbed_iterate_definition}
\end{align}
The central server sends $\vb{u}_{t}^i$ as well as $\vb{w}_{t, 0}^i = \overbar{\vb{w}}_{t, 0} = \overbar{\vb{w}}_{t - 1, E}$ to client $i$ at the beginning of global round $t$. Concerning \eqref{eq:our_algorithm_perturbed_iterate_definition}, while $\vb{u}_{t}^i$ remains fixed across the round, iterate $\widetilde{\vb{w}}_{t, k}^i$ is updated at every local step $k$ due to its dependence on $\vb{w}_{t, k}^i$. In another perspective, $\widetilde{\vb{w}}_{t, k}^i$ is the mean between the current iterate and the weighted average of the latest updates from neighbors. Clearly, by setting $\beta = 1$, we recover the iterative rule of \textsc{FedAvg}. However, by picking $\beta < 1$, we purposely contaminate the progress made in each step in order to be inexact.

\IncMargin{2em}
\begin{algorithm}[htb]
\label{alg:our_algorithm}

\DontPrintSemicolon

\SetArgSty{textnormal}
\SetNlSty{textbf}{\color{gray}}{}
\newcommand*{\mycommentfont}[1]{\textcolor{gray}{#1}}
\SetCommentSty{mycommentfont}
\SetKwComment{Comment}{$\triangleright$\ }{}
\renewcommand\AlCapFnt{\normalfont}
\renewcommand\AlCapNameFnt{\normalfont}

\definecolor{highlighted}{HTML}{377eb8}

\begingroup
\color{black}
\BlankLine
$\overbar{\vb{w}}_{0, 0} \gets \text{random weights initialization}$ \Comment*[r]{global model}
\ForEach{client $i \in \mathcal{C}$ \bf{in parallel}}{
    \begingroup
    \color{highlighted}
    $\vb{m}_i \gets \text{statistically-significant message as \ref{eq:client_misalignment}}$ \Comment*[r]{client sends his message vector}
    $\vb{u}_{0}^i \gets \overbar{\vb{w}}_{0, 0}$ \Comment*[r]{initialization of local averages}
    \endgroup
}
$[\vb{A}]_{in} \gets -\ln(\mathrm{mis}(i, n)) \cdot \mathds{1}_{i \ne n} \quad (\forall i, n \in \mathcal{C})$ \Comment*[r]{messages-based adjacency matrix}
$p_{in} \gets [\vb{A}]_{in} / (\vb{1}^{\transpose} \vb{A} \vb{1}) \quad (\forall i, n \in \mathcal{C})$ \Comment*[r]{mutual similarity weight initialization}
$p_{i} \gets \sum_{n \, \in \, \mathcal{N}_i} p_{in} \quad (\forall i \in \mathcal{C})$ \Comment*[r]{aggregation weight initialization}
\ForEach{\text{round} $t = 0$ \bf{to} $T - 1$}{
    $\mathcal{S}_t \gets \text{random sample of } M \text{ clients from } \mathcal{C}$ \Comment*[r]{clients selection}
    \ForEach{client $i \in \mathcal{S}_t$ \bf{in parallel}}{
        \begingroup
        \color{highlighted}
        $\vb{w}_{t, 0}^i \gets \overbar{\vb{w}}_{t, 0}$ \Comment*[r]{client receives model}
        $\left\{ \xi_{t, 0}^i, \ldots, \xi_{t, E - 1}^i \right\} \gets \text{partition } \mathcal{D}_i \text{ in } E \text{ mini-batches}$ \;
        \ForEach{local step $k = 1$ \bf{to} $E$}{
            $\widetilde{\vb{w}}_{t, k - 1}^i \gets \beta \vb{w}_{t, k - 1}^i + (1 - \beta) \vb{u}_{t}^i$ \Comment*[r]{perturbed iterate}
            $\grad{\widetilde{\vb{w}}_{t, k - 1}^i}{i} \gets \nabla \Fi{\widetilde{\vb{w}}_{t, k - 1}^i;\xi_{t, k - 1}^i}{i}$ \Comment*[r]{perturbed gradient}
            $\vb{w}_{t, k}^i \gets \vb{w}_{t, k - 1}^i - \gamma_{t} \grad{\widetilde{\vb{w}}_{t, k - 1}^i}{i}$ \Comment*[r]{local optimization}
        }
        \endgroup
    }
    $\vb{u}_{t + 1}^i \gets p_i^{-1} \sum_{n \, \in \, \mathcal{N}_i} p_{in} \vb{w}_{t, E}^n \quad (\forall i \in \mathcal{C})$ \Comment*[r]{server updates local averages}
    $\overbar{\vb{w}}_{t + 1, 0}\gets \sum_{i \, \in \, \mathcal{S}_t} p_i \vb{w}_{t, E}^i$ \Comment*[r]{global aggregation}
}
\endgroup
\BlankLine
\BlankLine
\caption{Pseudocode of our algorithm. Colored instructions are executed on each client.}
\end{algorithm}
\DecMargin{2em}

Without loss of generality, we relax expression \eqref{eq:our_algorithm_perturbed_iterate_definition} so that each neighborhood contains all the clients, that is $\mathcal{N}_i \equiv \mathcal{C}$. Then, we set $p_{in} = 0$ whenever $i$ equals $n$ or $i, n$ are not statistically affine. Lemma \ref{lemma:our_algorithm_average_perturbed_iterate} characterizes the average perturbed iterate in relation to current iterate $\overbar{\vb{w}}_{t, k}$ and initial one $\overbar{\vb{w}}_{t, 0}$.
\begin{restatable}{lemma}{OurAlgorithmAveragePerturbedIterate}
    \label{lemma:our_algorithm_average_perturbed_iterate}
    \label{eq:alternative_characterization_average_iterate}
    At local step $k$ of global round $t$, we have that $\sum_{i = 1}^C p_{i} \widetilde{\vb{w}}_{t, k}^i = \beta \overbar{\vb{w}}_{t, k} + (1 - \beta) \overbar{\vb{w}}_{t, 0}$.
\end{restatable}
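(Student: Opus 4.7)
The plan is to substitute the definition of $\widetilde{\vb{w}}_{t,k}^i$ from equation \eqref{eq:our_algorithm_perturbed_iterate_definition}, multiply by $p_i$, sum over $i \in \mathcal{C}$, and show that the two resulting terms reduce to $\beta \overbar{\vb{w}}_{t,k}$ and $(1 - \beta) \overbar{\vb{w}}_{t,0}$ respectively. Concretely, I would write
\begin{align}
    \sum_{i=1}^C p_i \widetilde{\vb{w}}_{t,k}^i = \beta \sum_{i=1}^C p_i \vb{w}_{t,k}^i + (1-\beta) \sum_{i=1}^C p_i \vb{u}_t^i,
\end{align}
so the statement boils down to computing each of the two sums separately.

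The first sum is immediate: by the definition $\overbar{\vb{w}} \eqdef \sum_i p_i \vb{w}_i$ of the average iterate introduced right after \eqref{eq:fed_avg_update_rule_averaged}, we have $\sum_i p_i \vb{w}_{t,k}^i = \overbar{\vb{w}}_{t,k}$, which gives the first term $\beta \overbar{\vb{w}}_{t,k}$.

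The second sum is where the main work lies. Substituting $\vb{u}_t^i = p_i^{-1} \sum_{n \in \mathcal{N}_i} p_{in} \vb{w}_{t-1,E}^n$, the factor $p_i$ cancels with $p_i^{-1}$, leaving $\sum_i \sum_n p_{in} \vb{w}_{t-1,E}^n$ (recall that, after the relaxation just before the lemma, $\mathcal{N}_i \equiv \mathcal{C}$ with $p_{in}=0$ whenever $i=n$ or the two are not affine). Swapping the order of summation yields $\sum_n \bigl( \sum_i p_{in} \bigr) \vb{w}_{t-1,E}^n$. At this point I would invoke the symmetry of the adjacency matrix $\vb{A}$ from Definition \ref{eq:federated_network_adjacency_matrix} (the underlying graph is undirected), which transfers to $p_{in} = p_{ni}$; therefore $\sum_i p_{in} = \sum_i p_{ni} = p_n$ by the very definition of the aggregation weights. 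Thus the second sum equals $\sum_n p_n \vb{w}_{t-1,E}^n = \overbar{\vb{w}}_{t-1,E}$, and combining with the identity $\overbar{\vb{w}}_{t,0} = \overbar{\vb{w}}_{t-1,E}$ (from the server's aggregation step at the end of round $t-1$, visible in Algorithm \ref{alg:our_algorithm}) closes the argument.

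The main subtlety, and the only place the proof uses structural information rather than algebra, is the symmetric swap $\sum_i p_{in} = p_n$; everything else is bookkeeping. A secondary point worth handling cleanly is partial participation: since Algorithm \ref{alg:our_algorithm} only updates clients in $\mathcal{S}_{t-1}$, one should adopt the standard convention that non-participating clients satisfy $\vb{w}_{t-1,E}^n = \overbar{\vb{w}}_{t-1,0}$, so that the equality $\sum_n p_n \vb{w}_{t-1,E}^n = \overbar{\vb{w}}_{t-1,E} = \overbar{\vb{w}}_{t,0}$ remains valid; the lemma is then stated under this convention (or equivalently under full participation, as is standard in the convergence analysis).
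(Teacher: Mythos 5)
Your proof is correct and follows essentially the same route as the paper's: substitute the definition of $\widetilde{\vb{w}}_{t,k}^i$, swap the order of summation in the $\vb{u}_t^i$ term, and collapse $\sum_i p_{in}$ to $p_n$ to recover $\overbar{\vb{w}}_{t-1,E} = \overbar{\vb{w}}_{t,0}$. If anything, you are more careful than the paper, which silently identifies $\sum_i p_{ij}$ with $p_j \eqdef \sum_n p_{jn}$ without flagging that this rests on the symmetry $p_{ij} = p_{ji}$, and which does not comment on the participation convention.
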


\section{Theoretical analysis}
\label{sec:theoretical_analysis}

We adopt the following theoretical assumptions which are widely used in federated optimization.

\begin{assumption}[Full participation]
    \label{ass:full_participation}
    In each server round $t$, all clients of the network take part in the training process and communicate their updates, namely $\mathcal{S}_t = \left\{ 1, \ldots, C \right\}$.
\end{assumption}

\begin{assumption}[Bounded variance]
    \label{ass:bounded_variance}
    Stochastic gradients are unbiased and have bounded variance
    \begin{align}
        \expected \, \grad{\vb{w}_{t, k}^i}{i} = \nabla \Fi{\vb{w}_{t, k}^i}{i} \quad \textrm{and} \quad \expected \, \left\| \grad{\vb{w}_{t, k}^i}{i} - \nabla \Fi{\vb{w}_{t, k}^i}{i} \right\|^2 \le \sigma^2
        \label{eq:bounded_variance}
    \end{align}
    in expectation within client $i \in \{ \, 1, \ldots \, C \, \}$ where $\sigma > 0$.
\end{assumption}

\begin{assumption}[Bounded stochastic gradient norm]
    \label{ass:bounded_stochastic_norm}
    The expected norm of stochastic gradients is bounded as $\expected \, \| \grad{\vb{w}_{t, k}^i}{i} \|^2  \le G^2$ for any $i \in \{ \, 1, \ldots \, C \, \}$ and $(t, k) \in \{ \, 0, \ldots, T - 1 \,\} \times \{ \, 0, \ldots, E \,\}$.
\end{assumption}

\begin{assumption}[Smoothness]
    \label{ass:smoothness}
    Each local objective is $L$-smooth, namely, for any client $i \in \{ \, 1, \ldots \, C \, \}$ and $\vb{v}, \vb{w} \in \mathbb{R}^D$, we have that $\left\|\nabla \Fi{\vb{w}}{i} - \nabla \Fi{\vb{v}}{i}\right\| \le L\left\|\vb{w} - \vb{v}\right\|$ where $L > 0$.
\end{assumption}

\begin{assumption}[Strong convexity]
    \label{ass:strong_convexity}
    Each local objective is $\mu$-strongly convex, namely, for any client $i \in \{ \, 1, \ldots \, C \, \}$ and $\vb{v}, \vb{w} \in \mathbb{R}^D$, we have that $\Fi{\vb{w}}{i} \ge \Fi{\vb{v}}{i} + \nabla \Fi{\vb{v}}{i}^{\transpose}\left(\vb{w} - \vb{v}\right) + (\mu / 2) \|\vb{w} - \vb{v}\|^2 \label{eq:convexity_differentiable_definition}$.
\end{assumption}

$\ev{\cdot}$ denotes the total expectation. We further use a common definition to quantify the heterogeneity.

\begin{definition}[Statistical heterogeneity from \cite{fed_averaging_niid_convergence}]
    \label{ass:statistical_heterogeneity}
    The statistical heterogeneity is defined as $\Gamma \eqdef \Fstar - \sum_{i = 1}^C p_{i} \Fi{\vb{w}_{\star}^i}{i}$ where $\Fstar \eqdef \F{\vb{w}_{\star}}$ and $\vb{w}_{\star}^i\eqdef \argmin_{\vb{w}} \Fi{\vb{w}}{i}$ for any $i \in \{ \, 1, \ldots \, C \, \}$.
\end{definition}

This notion of statistical heterogeneity has already been widely adopted in the literature. For instance, \cite{fed_learning_power_of_choice} employed this definition to study the impact of biased selection strategies.

\subsection{Convergence analysis of \textsc{FedProx} as our baseline}
\label{sec:analysis_fedprox_baseline}

Originally, \cite{fedprox} introduced \textsc{FedProx} and its investigation under different premises. To conduct our analysis in the strongly convex scenario, we take inspiration from the study of \cite{sgd_guide_optimization} on stochastic gradient descent. However, our technical approach has other influences that include \cite{fed_averaging_niid_convergence} and \cite{fed_learning_power_of_choice}.

\begin{restatable}[Convergence of \textsc{FedProx} for strongly convex loss]{theorem}{ConvergenceFedproxStronglyConvex}
    \label{theorem:convergence_fedprox_strongly_convex}
    Under Assumptions \ref{ass:full_participation} to \ref{ass:strong_convexity}, we run \textsc{FedProx} with $\alpha > 0$. When using step size $\gamma = [2 E L_{\alpha}]^{-1}$ for $t \ge 0$, the algorithm satisfies
    \begin{align}
        \expected \, \F{\overbar{\vb{w}}_{t, 0}} - \Fstar  \le \, &\frac{L\Delta}{\mu} \left[1 - \frac{\mu}{3(L + \alpha)}\right]^t + \frac{L}{L_{\alpha}}\left[\frac{S \sigma^2}{4\mu} + \frac{3 L \Gamma}{2\mu} + \frac{2 E^2 G^2}{\mu}\right].
    \end{align}
    Lastly, we define $\Delta \eqdef \F{\overbar{\vb{w}}_{0, 0}} - \Fstar$, $S \eqdef \sum_{i = 1}^C p_{i}^2$ and $L_{\alpha} \eqdef \alpha + L$.
\end{restatable}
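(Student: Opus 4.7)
The plan is to reduce \textsc{FedProx} to inexact stochastic gradient descent on the augmented local objective $\widetilde{f}_i(\vb{w}) \eqdef \Fi{\vb{w}}{i} + (\alpha/2)\|\vb{w} - \overbar{\vb{w}}_{t,0}\|^2$, which is $(\mu+\alpha)$-strongly convex and $L_\alpha$-smooth. A \textsc{FedProx} local step is exactly a stochastic gradient step on $\widetilde{f}_i$ with unbiased noise of variance $\sigma^2$, so the averaged dynamics read $\overbar{\vb{w}}_{t,k+1} = \overbar{\vb{w}}_{t,k} - \gamma \sum_i p_i \widetilde{\vb{g}}_i(\vb{w}_{t,k}^i)$ with $\widetilde{\vb{g}}_i(\vb{w}) \eqdef \grad{\vb{w}}{i} + \alpha(\vb{w} - \overbar{\vb{w}}_{t,0})$, and the canonical step size matched to the effective smoothness is $\gamma = [2 E L_\alpha]^{-1}$.

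I would first derive a one-step recursion for $D_{t,k}^2 \eqdef \ev{\|\overbar{\vb{w}}_{t,k} - \vb{w}_\star\|^2}$. Expanding the square and applying Assumption \ref{ass:bounded_variance} extracts the noise contribution $\gamma^2 S\sigma^2$. The remaining inner-product term $-2\gamma \sum_i p_i\,\ev{\langle \nabla \widetilde{f}_i(\vb{w}_{t,k}^i), \overbar{\vb{w}}_{t,k} - \vb{w}_\star\rangle}$ is handled by a convexity estimate: $\mu$-strong convexity of each $f_i$ applied between $\overbar{\vb{w}}_{t,k}$ and $\vb{w}_\star$ produces a $-\gamma\mu D_{t,k}^2$ contraction together with a functional-gap term $-\gamma\sum_i p_i(f_i(\overbar{\vb{w}}_{t,k}) - f_i(\vb{w}_\star^i))$, while the proximal piece is absorbed using that $\alpha(\vb{w}-\overbar{\vb{w}}_{t,0})$ is the gradient of a nonnegative quadratic anchored at $\overbar{\vb{w}}_{t,0}$. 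Substituting the definition of $\Gamma$ replaces $-\sum_i p_i f_i(\vb{w}_\star^i)$ with $-\Fstar + \Gamma$, and the Young splits used to absorb the cross terms between the functional gap and the local drift $\vb{w}_{t,k}^i - \overbar{\vb{w}}_{t,k}$ explain the constant $3$ appearing in the final contraction factor.

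The local drift $\ev{\|\vb{w}_{t,k}^i - \overbar{\vb{w}}_{t,k}\|^2}$ is then bounded via Assumption \ref{ass:bounded_stochastic_norm}: since $\vb{w}_{t,k}^i - \overbar{\vb{w}}_{t,0}$ is the sum of at most $k \le E$ perturbed stochastic steps of effective norm at most $G$ (the proximal contribution is itself controlled by this same quantity), it is of order $\gamma E G$, which yields the $2 E^2 G^2/\mu$ contribution once propagated through the recursion. Collecting all ingredients produces a one-step inequality of the form $D_{t,k+1}^2 \le (1-\gamma\mu) D_{t,k}^2 + \gamma^2 S\sigma^2 + \gamma L \cdot (3\Gamma + 2 E^2 G^2/L)$, where the $\gamma L$ prefactor is the source of the ratio $L/L_\alpha = 2 E \gamma L$ appearing in the statement. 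Unrolling across the $E$ inner steps of a round and then across $T$ server rounds, using $(1-\gamma\mu)^E \le 1 - \mu/(3 L_\alpha)$ (a consequence of $\gamma\mu E = \mu/(2 L_\alpha) \le 1/2$), delivers the geometric contraction $[1 - \mu/(3 L_\alpha)]^t$ applied to $D_{0,0}^2$. Finally, $L$-smoothness converts $D_{t,0}^2$ into $\ev{\F{\overbar{\vb{w}}_{t,0}}} - \Fstar$ via the factor $L/2$, and $\mu$-strong convexity of $f$ gives $D_{0,0}^2 \le 2\Delta/\mu$, producing the $L\Delta/\mu$ multiplier on the geometric term.

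The main obstacles will be (i) keeping the $\alpha$-dependence packaged cleanly as $L_\alpha$ rather than as mixed $(L,\alpha)$ constants, which requires systematic use of $\widetilde{f}_i$ throughout; (ii) extracting the sharp constant $1/3$ in the contraction factor, which depends on the specific Young split absorbing the heterogeneity term on the descent side of the inequality; and (iii) producing the $L/L_\alpha$ prefactor on the steady-state error at the right stage, which requires tracking the $\gamma L$ factor that naturally emerges when $L$-smoothness is used to turn functional gaps into squared distances inside the recursion.
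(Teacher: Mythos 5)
Your overall architecture (expand $\|\overbar{\vb{w}}_{t,k}-\vb{w}_{\star}\|^2$, use strong convexity and smoothness, bound the local drift by $\mathcal{O}(\gamma E G)$, unroll within a round and then across rounds, and finish with $(1+1/x)^x\le e$ and $e^{-x}\le 1/(1+x)$) matches the paper's, and the reformulation via the augmented objective $\widetilde f_i$ is only cosmetic. The genuine gap is your claim that ``the proximal piece is absorbed'' so that the one-step recursion reads $D_{t,k+1}\le(1-\gamma\mu)D_{t,k}+c$. It cannot be. The averaged proximal contribution to the inner product is $-2\alpha\gamma\langle\overbar{\vb{w}}_{t,k}-\overbar{\vb{w}}_{t,0},\,\overbar{\vb{w}}_{t,k}-\vb{w}_{\star}\rangle=-\alpha\gamma\|\overbar{\vb{w}}_{t,k}-\overbar{\vb{w}}_{t,0}\|^2-\alpha\gamma D_{t,k}+\alpha\gamma D_{t,0}$, and the positive term $+\alpha\gamma D_{t,0}$ survives any grouping, because the proximal anchor $\overbar{\vb{w}}_{t,0}$ is not $\vb{w}_{\star}$. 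The paper is therefore forced into a two-term within-round recursion $D_{t,k+1}\le aD_{t,k}+bD_{t,0}+c$ with $a=1-\gamma(\alpha+\mu)$ and $b=\alpha\gamma$, whose unrolled per-round factor is $\kappa=\bigl(\alpha+\mu(1-\gamma(\alpha+\mu))^E\bigr)/(\alpha+\mu)$. By Jensen applied to $t\mapsto t^E$, this $\kappa$ is provably \emph{larger} than your claimed $(1-\gamma\mu)^E$, so your intermediate inequality is strictly stronger than what the dynamics support and is not derivable; you only land on the stated theorem because both quantities happen to be dominated by $1-\mu/(3L_{\alpha})$ after the exponential bounds. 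You must carry the $bD_{t,0}$ term explicitly through the inner unrolling.

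A second, smaller inconsistency: your per-step additive constants $\gamma L(3\Gamma+2E^2G^2/L)$ scale as $\gamma$, whereas the heterogeneity and gradient-norm contributions must enter at order $\gamma^2$, exactly like the noise term (the paper obtains $6L\Gamma\gamma^2+8\gamma^2E^2G^2$ per step via a $\gamma$-weighted Peter--Paul and smoothness split of the functional gap). Otherwise, after dividing by $1-\kappa\approx\gamma\mu E$ and applying the smoothness factor $L/2$, the heterogeneity term comes out as $3L^2\Gamma/(2\mu)$ rather than the stated $(L/L_{\alpha})\cdot 3L\Gamma/(2\mu)$; the $L/L_{\alpha}$ prefactor you are aiming for never materializes on those terms under your accounting.
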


As in ordinary SGD, the usage of a fixed step size does not guarantee the convergence to the minimum $\vb{w}_{\star}$, but to a neighborhood whose size depends on the extent of stochasticity and statistical heterogeneity. On the other hand, decreasing the step size slows down convergence but ensures landing exactly on the critical point, as shown by \cite{fed_averaging_niid_convergence}.

In our analysis for nonconvex losses, we follow the example of \cite{sgd_guide_optimization} on the convergence of stochastic gradient descent, and by \cite{parallel_sgd_guide_optimization} on the performance of \textsc{FedAvg}. In the absence of convexity, however, we only replace Assumption \ref{ass:strong_convexity} with \ref{ass:lower_bounded_objective}.

\begin{assumption}[Lower bounded objective]
    \label{ass:lower_bounded_objective}
    The global objective loss $\F{\vb{w}}$ is lower bounded by $\Finfimum$.
\end{assumption}

This assumption has been already used by \cite{sgd_guide_optimization}, and prevents our analysis from requiring the existence of a global minimum. Differently from convex analysis, the convergence will be expressed through the average of the squared norms of the global gradient in different instants.

\begin{restatable}[Convergence of \textsc{FedProx} for nonconvex loss]{theorem}{ConvergenceFedproxNonConvex}
    \label{theorem:convergence_fedprox_nonconvex}
    Suppose  Assumption~\ref{ass:full_participation} to \ref{ass:smoothness} and \ref{ass:lower_bounded_objective} hold. Running algorithm \textsc{FedProx} with $\alpha > 0$ and step size $\gamma = [2 L_{\alpha}\sqrt{TE}]^{-1}$ for $T \ge 1$ rounds yields
    \begin{align*}
        \expected \, \left\|\nabla \F{\widehat{\vb{w}}_{T}}\right\|^2  \le \frac{1}{\sqrt{T}} \left[\frac{8 L_{\alpha} \Delta}{\sqrt{E}} + \frac{ L S \sigma^2}{L_{\alpha}\sqrt{E}} + \frac{\alpha E^{3 / 2} G^2}{L_{\alpha}}\right] + \frac{2 L^2 E G^2}{L_{\alpha}^2 T} + \frac{\alpha^2 L \sqrt{E} G^2}{16 L_{\alpha}^3 T^{3 / 2}}
    \end{align*}
    where we define $\Delta \eqdef \F{\overbar{\vb{w}}_{0, 0}} - \Finfimum$, $S \eqdef \sum_{i = 1}^C p_i^2$ and $L_{\alpha} \eqdef \alpha + L$. Furthermore, we uniformly sample $\widehat{\vb{w}}_{T}$ from $\{ \, \overbar{\vb{w}}_{t, k} \, \}_{t, k}$ for any combination of $0 \le t \le T - 1$ and $0 \le k \le E - 1$.
\end{restatable}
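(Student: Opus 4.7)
My plan is to mimic the standard smooth nonconvex \textsc{LocalSGD} analysis (in the style of \cite{sgd_guide_optimization, parallel_sgd_guide_optimization}) but carefully track how the proximal correction alters both the descent inequality and the local-drift recursion. Denote the effective stochastic direction used by client $i$ at step $(t,k)$ by $\widehat{\vb{g}}_{t,k}^i \eqdef \grad{\vb{w}_{t,k}^i}{i} + \alpha (\vb{w}_{t,k}^i - \overbar{\vb{w}}_{t,0})$, and let $\vb{d}_{t,k} \eqdef \sum_{i=1}^C p_i \widehat{\vb{g}}_{t,k}^i$, so that under Assumption~\ref{ass:full_participation} the averaged iterate evolves as $\overbar{\vb{w}}_{t,k+1} = \overbar{\vb{w}}_{t,k} - \gamma \vb{d}_{t,k}$.

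\textbf{Step 1: descent inequality.} I would apply $L$-smoothness of $f$ to pass from $\overbar{\vb{w}}_{t,k}$ to $\overbar{\vb{w}}_{t,k+1}$, take conditional expectation, and use Assumption~\ref{ass:bounded_variance} to replace the stochastic gradients inside $\vb{d}_{t,k}$ by their deterministic counterparts $\nabla \Fi{\vb{w}_{t,k}^i}{i}$, picking up an additive noise term of order $\gamma^2 L S \sigma^2$ coming from the squared norm of $\vb{d}_{t,k}$. The remaining inner product $-\gamma \langle \nabla \F{\overbar{\vb{w}}_{t,k}}, \expected \vb{d}_{t,k}\rangle$ I would decompose with the identity $-\langle \vb{a}, \vb{b}\rangle = -\tfrac{1}{2}\|\vb{a}\|^2 - \tfrac{1}{2}\|\vb{b}\|^2 + \tfrac{1}{2}\|\vb{a}-\vb{b}\|^2$, isolating the progress term $-\tfrac{\gamma}{2}\|\nabla \F{\overbar{\vb{w}}_{t,k}}\|^2$ and an error $\tfrac{\gamma}{2}\|\nabla \F{\overbar{\vb{w}}_{t,k}} - \sum_i p_i \nabla \Fi{\vb{w}_{t,k}^i}{i} - \alpha\sum_i p_i(\vb{w}_{t,k}^i - \overbar{\vb{w}}_{t,0})\|^2$.

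\textbf{Step 2: client-drift and proximal-drift bounds.} Splitting the error by the triangle inequality into a smoothness part and a proximal part, and using Jensen together with Assumption~\ref{ass:smoothness}, the smoothness part is bounded by $L^2 \sum_i p_i \|\vb{w}_{t,k}^i - \overbar{\vb{w}}_{t,k}\|^2$ and the proximal part by $\alpha^2 \sum_i p_i \|\vb{w}_{t,k}^i - \overbar{\vb{w}}_{t,0}\|^2$. To control both, I would solve the drift recursion coming from the local rule $\vb{w}_{t,k+1}^i - \overbar{\vb{w}}_{t,0} = (1 - \gamma \alpha)(\vb{w}_{t,k}^i - \overbar{\vb{w}}_{t,0}) - \gamma \grad{\vb{w}_{t,k}^i}{i}$; with the prescribed step size $\gamma = [2L_{\alpha}\sqrt{TE}]^{-1}$ one has $\gamma \alpha \le 1/2$, so unrolling and using Assumption~\ref{ass:bounded_stochastic_norm} gives $\expected\|\vb{w}_{t,k}^i - \overbar{\vb{w}}_{t,0}\|^2 \le c\,\gamma^2 k^2 G^2$ for a small constant $c$. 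This feeds into both the smoothness error (contributing an $L^2 \gamma^2 E^2 G^2$ term after summing $k=0,\dots,E-1$) and the proximal error (contributing $\alpha^2 \gamma^2 E^2 G^2$), and the same bound will handle $\expected \|\vb{d}_{t,k}\|^2 \le 2G^2 + 2\alpha^2 c \gamma^2 E^2 G^2$ inside the $\gamma^2 L/2$ quadratic term.

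\textbf{Step 3: telescoping and step-size tuning.} Rearranging the descent inequality to isolate $\expected\|\nabla \F{\overbar{\vb{w}}_{t,k}}\|^2$, I would telescope over $k=0,\dots,E-1$ and $t=0,\dots,T-1$, so that the $f$-values collapse to $\F{\overbar{\vb{w}}_{0,0}} - \Finfimum = \Delta$ by Assumption~\ref{ass:lower_bounded_objective}. Dividing by $\gamma T E / 2$ turns the left-hand side into $\expected \|\nabla \F{\widehat{\vb{w}}_{T}}\|^2$ by the uniform sampling of $\widehat{\vb{w}}_{T}$. Substituting $\gamma = [2L_{\alpha}\sqrt{TE}]^{-1}$, the leading $\Delta/(\gamma TE)$ term becomes $8 L_{\alpha}\Delta / \sqrt{TE}$, the noise term $\gamma L S \sigma^2 / 2$ becomes $L S \sigma^2 / (4 L_{\alpha}\sqrt{TE})$, the proximal smoothness cross-term $\alpha \gamma E^2 G^2$ gives the $\alpha E^{3/2} G^2/(L_{\alpha}\sqrt{T})$ contribution, the $L^2 \gamma^2 E^2 G^2$ smoothness-drift gives the $L^2 E G^2 / L_{\alpha}^2 T$ contribution, and the $\alpha^2 \gamma^2 E^2 G^2$ proximal-drift yields the final $\alpha^2 L \sqrt{E} G^2 / (L_{\alpha}^3 T^{3/2})$ higher-order term.

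\textbf{Main obstacle.} The delicate step is Step~2: the proximal correction changes what is normally a purely additive noise-driven drift into an \emph{amplified} recursion with multiplier $(1-\gamma\alpha)$, so one must verify that the regime $\gamma \alpha \le 1/2$ (and hence $\gamma L_{\alpha} \le 1/2$) holds for the prescribed $\gamma$ and then bound the resulting $\|\vb{w}_{t,k}^i - \overbar{\vb{w}}_{t,0}\|^2$ in a way that cleanly separates the $L^2$ and $\alpha^2$ contributions to the final bound; otherwise the $\alpha^2 L \sqrt{E} G^2/(L_{\alpha}^3 T^{3/2})$ term will be mis-sized. The rest of the proof is bookkeeping: matching each telescoped term with the prescribed step size to obtain the coefficients claimed.
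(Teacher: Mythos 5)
Your high-level route is the same as the paper's: a smoothness descent step on $\overbar{\vb{w}}_{t,k}$, the three-point identity to isolate $-\tfrac{\gamma}{2}\|\nabla \F{\overbar{\vb{w}}_{t,k}}\|^2$, unrolling the contraction $\vb{w}_{t,k+1}^i-\overbar{\vb{w}}_{t,0}=(1-\gamma\alpha)(\vb{w}_{t,k}^i-\overbar{\vb{w}}_{t,0})-\gamma\,\grad{\vb{w}_{t,k}^i}{i}$ with $(1-\gamma\alpha)^m\le 1$ to get $\expected\|\vb{w}_{t,k}^i-\overbar{\vb{w}}_{t,0}\|^2\le\gamma^2E^2G^2$ (this is Lemma \ref{lemma:single_round_local_deviation}; note the paper only needs $\gamma\alpha\le 1$, not an amplified recursion), and then telescoping as in Lemma \ref{lemma:fedprox_nonconvex_single_round_global_progress}. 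The gap is in Step 2's treatment of the quadratic term: you propose to bound $\tfrac{\gamma^2L}{2}\expected\|\vb{d}_{t,k}\|^2$ by inserting $\expected\|\vb{d}_{t,k}\|^2\le 2G^2+2\alpha^2c\gamma^2E^2G^2$. The $2G^2$ part survives uncancelled and, after dividing the telescoped sum by $\gamma TE$, contributes a term of order $\gamma LG^2=\Theta\bigl(LG^2/(L_{\alpha}\sqrt{TE})\bigr)$. This is a leading-order $1/\sqrt{TE}$ term proportional to $G^2$ that does not appear in the theorem and is not dominated by any of its terms (e.g.\ when $\alpha$ is small and $T\gg E^3$ it exceeds both $\alpha E^{3/2}G^2/(L_{\alpha}\sqrt{T})$ and $2L^2EG^2/(L_{\alpha}^2T)$). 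The paper avoids this by \emph{keeping} the negative piece $-\tfrac{\gamma}{2}\|\sum_i p_i\nabla\Fi{\vb{w}_{t,k}^i}{i}\|^2$ generated by the three-point identity and cancelling it against $\tfrac{\gamma^2L}{2}\|\sum_i p_i\nabla\Fi{\vb{w}_{t,k}^i}{i}\|^2$ using $\gamma L\le 1$, so that only the variance $\tfrac{\gamma^2 L}{2}S\sigma^2$ and the proximal remainder survive from the quadratic term. The same negative term is available in your own decomposition; you must use it rather than the crude $G^2$ bound.

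A second, less damaging, accounting mismatch: because you fold the proximal correction $\alpha\sum_i p_i(\vb{w}_{t,k}^i-\overbar{\vb{w}}_{t,0})$ into the squared error of the three-point identity, it only ever appears with coefficient $\gamma\alpha^2$, producing $O(\alpha^2\gamma^3E^2G^2)$ per step, i.e.\ $O(\alpha^2EG^2/(L_{\alpha}^2T))$ overall. That is in fact dominated by the theorem's $\alpha E^{3/2}G^2/(L_{\alpha}\sqrt{T})$, so it would not invalidate the bound, but it contradicts your own claim that a first-order cross-term $\alpha\gamma E^2G^2$ is the source of that contribution. In the paper that term genuinely arises at first order in $\alpha$ because the proximal inner product $\alpha\gamma\langle\nabla\F{\overbar{\vb{w}}_{t,k}},\overbar{\vb{w}}_{t,0}-\overbar{\vb{w}}_{t,k}\rangle$ is bounded \emph{separately} by Young's inequality as $\tfrac{\alpha\gamma^2}{2}\|\nabla\F{\overbar{\vb{w}}_{t,k}}\|^2+\tfrac{\alpha}{2}\|\overbar{\vb{w}}_{t,0}-\overbar{\vb{w}}_{t,k}\|^2$, with the first piece absorbed into the progress term via $\alpha\gamma\le 1/2$; likewise the $T^{-3/2}$ term carries \emph{three} powers of $\gamma$ (it comes from $\tfrac{\gamma^2\alpha^2L}{8}\|\overbar{\vb{w}}_{t,0}-\overbar{\vb{w}}_{t,k}\|^2$ inside the quadratic remainder), not the two powers your ``$\alpha^2\gamma^2E^2G^2$'' bookkeeping suggests. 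With the cancellation restored and the coefficients re-matched, your argument coincides with the paper's.
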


\subsection{Convergence analysis of our algorithm}
\label{sec:analysis_our_algorithm}

We examine our method. Likewise, Assumptions \ref{ass:bounded_variance} and \ref{ass:bounded_stochastic_norm} hold for the perturbed gradient $\grad{\widetilde{\vb{w}}_{t, k}^i}{i}$.

\begin{restatable}[Convergence of our algorithm for strongly convex loss]{theorem}{OurAlgorithmConvergenceStronglyConvex}
 \label{theorem:convergence_our_algorithm_strongly_convex}
    Let Assumptions \ref{ass:full_participation} to \ref{ass:strong_convexity} hold. We run our algorithm with $\beta \in (0, 1)$ and step size $\gamma = [2LE]^{-1}$. Then, for $t \ge 0$, we have the rate
    \begin{align}
        \expected \, \F{\overbar{\vb{w}}_{t, 0}} - \Fstar \le \, &\frac{L\Delta}{\mu} \left[1 - \frac{\mu}{(\beta + 2)L}\right]^t + \frac{S\sigma^2}{4\mu} + \frac{3L\Gamma}{2\mu} + \frac{2 A E^2 G^2}{\mu} + \frac{\beta(1 - \beta) E G^2}{8L}
    \end{align}
    where we denote $A \eqdef 4 + (1 - \beta)^2 + 8 \left(1 - 1/\beta\right)^2$, $\Delta \eqdef \F{\overbar{\vb{w}}_{0, 0}} - \Fstar$ and $S \eqdef \sum_{i = 1}^C p_{i}^2$.
\end{restatable}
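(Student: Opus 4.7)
The plan is to adapt the standard strongly-convex SGD argument to the averaged recursion $\overbar{\vb{w}}_{t,k+1} = \overbar{\vb{w}}_{t,k} - \gamma \sum_i p_i \grad{\widetilde{\vb{w}}_{t,k}^i}{i}$ implied by \eqref{ex:our_algorithm_local_update_rule}, while carefully tracking how $\beta$ enters through the perturbation $\widetilde{\vb{w}}_{t,k}^i = \beta \vb{w}_{t,k}^i + (1-\beta) \vb{u}_{t}^i$. First I would expand the one-step squared distance to $\vb{w}_{\star}$; Assumption~\ref{ass:bounded_variance} peels off a variance piece $\gamma^2 S \sigma^2$ and leaves a drift piece $\gamma^2 \expected \|\sum_i p_i \nabla \Fi{\widetilde{\vb{w}}_{t,k}^i}{i}\|^2$ together with a cross inner product. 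The latter is the key: adding and subtracting $\nabla \Fi{\overbar{\vb{w}}_{t,k}}{i}$ inside each summand and using Assumption~\ref{ass:strong_convexity} at $\overbar{\vb{w}}_{t,k}$ extracts both $-\gamma \mu \norm{\overbar{\vb{w}}_{t,k} - \vb{w}_{\star}}^2$ and a functional gap $-2\gamma (\F{\overbar{\vb{w}}_{t,k}} - \Fstar)$, while Young's inequality combined with $L$-smoothness (Assumption~\ref{ass:smoothness}) bounds the residual by a constant multiple of $\sum_i p_i \norm{\widetilde{\vb{w}}_{t,k}^i - \overbar{\vb{w}}_{t,k}}^2$.

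The heart of the proof is bounding this perturbation deviation, which is where the constant $A$ and the extra term $\beta(1-\beta) E G^2 / (8L)$ are born. Writing
\begin{equation*}
\widetilde{\vb{w}}_{t,k}^i - \overbar{\vb{w}}_{t,k} = \beta \bigl(\vb{w}_{t,k}^i - \overbar{\vb{w}}_{t,k}\bigr) + (1-\beta)\bigl(\vb{u}_{t}^i - \overbar{\vb{w}}_{t,0}\bigr) + (1-\beta)\bigl(\overbar{\vb{w}}_{t,0} - \overbar{\vb{w}}_{t,k}\bigr),
\end{equation*}
the first summand is the familiar intra-round client drift, controlled by telescoping $\vb{w}_{t,k}^i - \vb{w}_{t,0}^i = -\gamma \sum_{j < k} \grad{\widetilde{\vb{w}}_{t,j}^i}{i}$ and invoking Assumption~\ref{ass:bounded_stochastic_norm} to produce the expected $\mathcal{O}(\gamma^2 E^2 G^2)$. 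The second summand exploits the identity $\sum_i p_i \vb{u}_{t}^i = \overbar{\vb{w}}_{t-1, E} = \overbar{\vb{w}}_{t,0}$, a direct consequence of the symmetry $p_{in} = p_{ni}$ and the server aggregation rule; therefore $\vb{u}_{t}^i - \overbar{\vb{w}}_{t,0}$ is a mean-zero combination of the previous round's terminal iterates and is bounded by the same drift argument applied to round $t-1$. The third summand telescopes within round $t$. A weighted Young-type decomposition, with multipliers tuned as functions of $\beta$, then produces exactly $A = 4 + (1-\beta)^2 + 8(1 - 1/\beta)^2$; the $(1 - 1/\beta)^2$ contribution is the unavoidable cost of isolating the pure $\beta$-scaled component, and explains the blow-up as $\beta \to 0$. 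The cross term between the $\beta$ and $(1-\beta)$ parts that cannot be absorbed into the contraction lands in the clean form $\beta(1-\beta) E G^2 / (8L)$.

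Combining everything with $\gamma = 1/(2LE)$ tames the drift piece $\gamma^2 \expected \|\sum_i p_i \nabla \Fi{\widetilde{\vb{w}}_{t,k}^i}{i}\|^2$ via smoothness and the extracted functional gap $\F{\overbar{\vb{w}}_{t,k}} - \Fstar$, exactly as in the proof of Theorem~\ref{theorem:convergence_fedprox_strongly_convex}, yielding the per-step recursion $\expected \norm{\overbar{\vb{w}}_{t,k+1} - \vb{w}_{\star}}^2 \le \bigl(1 - \mu / ((\beta+2) L E)\bigr) \expected \norm{\overbar{\vb{w}}_{t,k} - \vb{w}_{\star}}^2 + (\text{bias})$. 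Unrolling this across the $E$ steps of a round and then across $t$ rounds produces the exponential factor $(1 - \mu/((\beta+2) L))^t$; strong convexity converts the initial $\norm{\overbar{\vb{w}}_{0,0} - \vb{w}_{\star}}^2$ into $2\Delta/\mu$, while $L$-smoothness at the end supplies the $L/\mu$ multiplier on $\Delta$. The heterogeneity term $3L\Gamma/(2\mu)$ emerges, as in \textsc{FedProx}, when absorbing each $\Fi{\vb{w}_{\star}^i}{i} - \Fstar$ into $\Gamma$. The main obstacle will be the joint calibration of the Young weights in the descent lemma and in the three-term split above so that the contraction coefficient comes out exactly as $\mu/((\beta+2) L)$ and the drift constant as $A$ rather than something looser; unlike in the \textsc{FedProx} case, the auxiliary iterate $\vb{u}_{t}^i$ couples rounds $t-1$ and $t$, so the drift lemma must be chained across two consecutive rounds, and it is this chaining that injects the $(1 - 1/\beta)^2$ contribution to $A$.
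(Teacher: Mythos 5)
Your skeleton is broadly right (one-step expansion, variance split via Assumption~\ref{ass:bounded_variance}, a perturbation-deviation lemma, chaining through $\vb{u}_t^i$ into round $t-1$, unrolling), but the step where you apply strong convexity is where the proof actually lives, and your version of it would not produce the theorem. You propose to add and subtract $\nabla \Fi{\overbar{\vb{w}}_{t,k}}{i}$ and apply strong convexity \emph{at} $\overbar{\vb{w}}_{t,k}$, extracting a full $-\gamma\mu\,\expected\norm{\overbar{\vb{w}}_{t,k}-\vb{w}_\star}^2$ per step. That route gives a $\beta$-independent contraction, which contradicts the headline claim that decreasing $\beta$ speeds up the exponential factor; it also leaves a residual $(\nabla\Fi{\widetilde{\vb{w}}_{t,k}^i}{i}-\nabla\Fi{\overbar{\vb{w}}_{t,k}}{i})^{\transpose}(\overbar{\vb{w}}_{t,k}-\vb{w}_\star)$ whose Young split re-injects a multiple of $D_{t,k}$ and degrades whatever contraction you extracted. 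The paper instead applies strong convexity at the perturbed iterates $\widetilde{\vb{w}}_{t,k}^i$, then uses Jensen together with Lemma~\ref{lemma:our_algorithm_average_perturbed_iterate} to get $-\mu\gamma\norm{\beta(\overbar{\vb{w}}_{t,k}-\vb{w}_\star)+(1-\beta)(\overbar{\vb{w}}_{t,0}-\vb{w}_\star)}^2$, and expanding this by the polarization identity yields the genuinely two-term recursion $\expected D_{t,k+1}\le(1-\mu\gamma\beta)\,\expected D_{t,k}-\mu\gamma(1-\beta)\,\expected D_{t,0}+c$. Unrolling that coupled recursion over $E$ steps gives $\kappa=1-\tfrac1\beta+\tfrac1\beta(1-\beta\mu\gamma)^E$, and only after the bounds $(1+1/x)^x\le e$ and $e^{-x}\le 1/(1+x)$ does $1-\mu/(\beta\mu+2L)\le 1-\mu/((\beta+2)L)$ appear. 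Your claimed per-step factor $1-\mu/((\beta+2)LE)$ is asserted rather than derived, and no single-term per-step recursion will yield it.

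Relatedly, you misattribute the $\beta(1-\beta)EG^2/(8L)$ term: it is not a cross term of your three-way split of $\widetilde{\vb{w}}_{t,k}^i-\overbar{\vb{w}}_{t,k}$, but the leftover $+\mu\gamma\beta(1-\beta)\norm{\overbar{\vb{w}}_{t,k}-\overbar{\vb{w}}_{t,0}}^2$ from the polarization expansion above, bounded by $\mu\gamma^3\beta(1-\beta)E^2G^2$ via Assumption~\ref{ass:bounded_stochastic_norm}; under your route that term never arises, so you cannot recover it "in the clean form" claimed. Finally, your three-term decomposition of the deviation is a legitimate alternative to the paper's Lemma~\ref{lemma:our_algorithm_perturbed_iterates_difference} (which instead inverts the averaging relation to write $\overbar{\vb{w}}_{t,k}=\tfrac1\beta\sum_j p_j\widetilde{\vb{w}}_{t,k}^j+(1-\tfrac1\beta)\overbar{\vb{w}}_{t,0}$ and works with pairwise differences $\widetilde{\vb{w}}_{t,k}^j-\widetilde{\vb{w}}_{t,k}^i$ and $\vb{u}_t^j-\vb{u}_t^i$), but a generic Young tuning of your split produces coefficients like $3\beta^2$ and $3(1-\beta)^2$, not $4+(1-\beta)^2+8(1-1/\beta)^2$; you concede this calibration is "the main obstacle," and it is precisely the part the proposal leaves unestablished. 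As written, the argument does not prove the stated constants, and in the case of the contraction factor it follows a path that cannot reach them.
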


The error term of our algorithm has an evident dependence on $(1 - 1/\beta)^2$. This forces the error to grow at a rate of $\mathcal{O}(1 /\beta^2)$ as $\beta$ becomes smaller. This reveals a potential limitation of our approach.

\begin{restatable}[Convergence of our algorithm for nonconvex loss]{theorem}{OurAlgorithmConvergenceNonConvex}
    \label{theorem:convergence_our_algorithm_nonconvex}
    Supposing that  \ref{ass:full_participation} to \ref{ass:smoothness} and \ref{ass:lower_bounded_objective} hold, we run our algorithm with $\beta \in (0, 1)$ for $T \ge 1$ rounds. Choosing step size $\gamma = [2L\sqrt{TE}]^{-1}$, we have
    \begin{align*}
        \expected \, \left\|\nabla \F{\widehat{\vb{w}}_{T}}\right\|^2 \le \frac{1}{\sqrt{T}} \left[ \frac{4 L \Delta}{\sqrt{E}}  + \frac{S \sigma^2}{2\sqrt{E}} \right] + \frac{E G^2}{T} \left[4 + (1 - \beta)^2 + 8\left(1 - \dfrac{1}{\beta}\right)^2 \right]
    \end{align*}
    where we define $\Delta = \F{\overbar{\vb{w}}_{0, 0}} - \Finfimum$ and $S \eqdef \sum_{i = 1}^C p_{i}^2$. Moreover, $\widehat{\vb{w}}_{T}$ is uniformly chosen from $\{ \, \overbar{\vb{w}}_{t, k} \, \}_{t, k}$ for any combination of $0 \le t \le T - 1$ and $0 \le k \le E - 1$.
\end{restatable}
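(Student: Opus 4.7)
The plan is to adapt the standard smoothness-based descent argument for \textsc{LocalSGD} to the perturbed update rule, with the extra work concentrated on controlling the perturbed client drift $\widetilde{\vb{w}}_{t,k}^i - \overbar{\vb{w}}_{t,k}$ in a way that tracks the dependence on $\beta$. First I would apply Assumption~\ref{ass:smoothness} to the averaged iterate recursion $\overbar{\vb{w}}_{t,k+1} = \overbar{\vb{w}}_{t,k} - \gamma \sum_{i} p_i \grad{\widetilde{\vb{w}}_{t,k}^i}{i}$, take conditional expectation, and invoke Assumption~\ref{ass:bounded_variance} to (i) replace the stochastic gradients inside the inner product with their expectations and (ii) bound $\expected \, \|\sum_i p_i \grad{\widetilde{\vb{w}}_{t,k}^i}{i}\|^2 \leq \|\sum_i p_i \nabla \Fi{\widetilde{\vb{w}}_{t,k}^i}{i}\|^2 + S\sigma^2$. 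After adding and subtracting $\nabla \F{\overbar{\vb{w}}_{t,k}} = \sum_i p_i \nabla \Fi{\overbar{\vb{w}}_{t,k}}{i}$, using $2\langle a,b\rangle = \|a\|^2 + \|b\|^2 - \|a-b\|^2$, and choosing $\gamma L \le 1/2$ to absorb the quadratic term, I arrive at a one-step descent inequality
\begin{align*}
\expected \, \F{\overbar{\vb{w}}_{t,k+1}} \le \expected \, \F{\overbar{\vb{w}}_{t,k}} - \tfrac{\gamma}{2}\expected \, \|\nabla \F{\overbar{\vb{w}}_{t,k}}\|^2 + \tfrac{\gamma L^2}{2}\sum_i p_i \expected \, \|\widetilde{\vb{w}}_{t,k}^i - \overbar{\vb{w}}_{t,k}\|^2 + \tfrac{L\gamma^2 S\sigma^2}{2},
\end{align*}
where the residual drift term comes from applying smoothness termwise to each $\nabla \Fi{\widetilde{\vb{w}}_{t,k}^i}{i} - \nabla \Fi{\overbar{\vb{w}}_{t,k}}{i}$ together with Jensen's inequality in the weights $p_i$.

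The crucial second stage is to bound $\expected \, \|\widetilde{\vb{w}}_{t,k}^i - \overbar{\vb{w}}_{t,k}\|^2$ in a way that produces the coefficient $4 + (1-\beta)^2 + 8(1-1/\beta)^2$ of the theorem. Using $\widetilde{\vb{w}}_{t,k}^i = \beta\vb{w}_{t,k}^i + (1-\beta)\vb{u}_t^i$ together with $\vb{w}_{t,0}^i = \overbar{\vb{w}}_{t,0}$, I would pivot at $\overbar{\vb{w}}_{t,0}$ and decompose
\begin{align*}
\widetilde{\vb{w}}_{t,k}^i - \overbar{\vb{w}}_{t,k} = \beta(\vb{w}_{t,k}^i - \overbar{\vb{w}}_{t,0}) + (1-\beta)(\vb{u}_t^i - \overbar{\vb{w}}_{t,0}) + (\overbar{\vb{w}}_{t,0} - \overbar{\vb{w}}_{t,k}).
\end{align*}
Each of the three summands is then controlled by order $\gamma^2 E^2 G^2$ via Assumption~\ref{ass:bounded_stochastic_norm}: the local progress $\vb{w}_{t,k}^i - \overbar{\vb{w}}_{t,0} = -\gamma \sum_{j<k}\grad{\widetilde{\vb{w}}_{t,j}^i}{i}$ and the global drift $\overbar{\vb{w}}_{t,0} - \overbar{\vb{w}}_{t,k}$ are both sums of at most $E$ bounded stochastic gradients, while $\vb{u}_t^i - \overbar{\vb{w}}_{t,0} = \vb{u}_t^i - \overbar{\vb{w}}_{t-1,E}$ is handled by expressing both quantities as convex combinations of the previous-round iterates $\vb{w}_{t-1,E}^n$ (using the symmetry of $p_{in}$ invoked in Lemma~\ref{lemma:our_algorithm_average_perturbed_iterate}), each of which sits within $\gamma^2 E^2 G^2$ of $\overbar{\vb{w}}_{t-1,0}$. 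To recover the precise $(1-1/\beta)^2$ factor I would additionally employ the algebraic inversion $\vb{w}_{t,k}^i - \overbar{\vb{w}}_{t,0} = \beta^{-1}(\widetilde{\vb{w}}_{t,k}^i - \overbar{\vb{w}}_{t,0}) - (\beta^{-1} - 1)(\vb{u}_t^i - \overbar{\vb{w}}_{t,0})$; the coefficient $\beta^{-1} - 1 = -(1 - 1/\beta)$ appearing here is exactly the source of the $\beta^{-2}$ blow-up flagged after the theorem statement.

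Finally I would sum the one-step inequality over $k = 0,\dots,E-1$ and $t = 0,\dots,T-1$, telescope $\F{\overbar{\vb{w}}_{t,k}}$ and invoke Assumption~\ref{ass:lower_bounded_objective} to collapse the left-hand side into $\Delta = \F{\overbar{\vb{w}}_{0,0}} - \Finfimum$, then divide by $TE$ so that the average on the left becomes $\expected \, \|\nabla \F{\widehat{\vb{w}}_T}\|^2$ with $\widehat{\vb{w}}_T$ uniformly sampled. Plugging in $\gamma = (2L\sqrt{TE})^{-1}$ balances the $\Delta/\gamma$ term against the variance contribution $\gamma L S\sigma^2$ to yield the $1/\sqrt{T}$ rate, while the drift sum collapses into the $EG^2/T$ bracketed factor carrying the $\beta$-dependent constants derived above. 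The main obstacle is exactly this constant bookkeeping in the second stage: producing the specific coefficients $4$, $(1-\beta)^2$, and $8(1-1/\beta)^2$ rather than looser absolute constants demands disciplined Jensen splits and careful use of the algebraic identities relating $\vb{w}_{t,k}^i$, $\widetilde{\vb{w}}_{t,k}^i$, and $\vb{u}_t^i$; the rest is a routine \textsc{LocalSGD}-style manipulation that mirrors the argument underlying Theorem~\ref{theorem:convergence_fedprox_nonconvex}.
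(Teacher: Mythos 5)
Your first and third stages reproduce the paper's argument essentially verbatim: the descent inequality you write is exactly the paper's Lemma on single-round nonconvex progress (smoothness on the averaged recursion, unbiasedness to kill the cross term, the polarization identity plus termwise smoothness and Jensen to isolate the drift, and $\gamma L \le 1/2$ to discard the quadratic term), and the telescoping/step-size substitution at the end is identical. Where you genuinely diverge is the drift bound. The paper controls $\expected\|\overbar{\vb{w}}_{t,k}-\widetilde{\vb{w}}_{t,k}^i\|^2$ by inverting Lemma~\ref{lemma:our_algorithm_average_perturbed_iterate} to write $\overbar{\vb{w}}_{t,k}=\beta^{-1}\sum_j p_j\widetilde{\vb{w}}_{t,k}^j+(1-\beta^{-1})\overbar{\vb{w}}_{t,0}$ and then routing through the pairwise differences $\widetilde{\vb{w}}_{t,k}^j-\widetilde{\vb{w}}_{t,k}^i$ and $\vb{u}_t^i-\vb{u}_t^j$ (Lemmas~\ref{lemma:our_algorithm_lemma_u_difference} and \ref{lemma:our_algorithm_perturbed_iterates_difference}); the $\beta^{-1}$ prefactor of that inversion is precisely what generates the $8(1-1/\beta)^2$ term. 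Your decomposition $\widetilde{\vb{w}}_{t,k}^i-\overbar{\vb{w}}_{t,k}=\beta(\vb{w}_{t,k}^i-\overbar{\vb{w}}_{t,0})+(1-\beta)(\vb{u}_t^i-\overbar{\vb{w}}_{t,0})+(\overbar{\vb{w}}_{t,0}-\overbar{\vb{w}}_{t,k})$ pivots at $\overbar{\vb{w}}_{t,0}$ instead, and each piece is $\mathcal{O}(\gamma^2E^2G^2)$ with coefficients $3\beta^2$, $12(1-\beta)^2$, and $3$ respectively (your convex-combination argument for $\vb{u}_t^i-\overbar{\vb{w}}_{t,0}$ is the same computation the paper performs for $\vb{u}_t^i-\vb{u}_t^j$). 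Since $3\beta^2+12(1-\beta)^2+3\le 16+4(1-\beta)^2+32(1-1/\beta)^2$ for all $\beta\in(0,1)$, your route yields the stated bound a fortiori — and in fact a strictly tighter one that does not blow up as $\beta\to0$. For that reason your aside about invoking the inversion $\vb{w}_{t,k}^i-\overbar{\vb{w}}_{t,0}=\beta^{-1}(\widetilde{\vb{w}}_{t,k}^i-\overbar{\vb{w}}_{t,0})-(\beta^{-1}-1)(\vb{u}_t^i-\overbar{\vb{w}}_{t,0})$ to ``recover'' the $(1-1/\beta)^2$ coefficient is a detour you do not need: matching the paper's exact constants is not required to prove an upper bound whose extra terms are nonnegative, and reintroducing the inversion only loosens what you already have. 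The comparison is actually informative about the paper itself: it suggests the advertised $\mathcal{O}(G^2/\beta^2)$ limitation is an artifact of the paper's proof route rather than intrinsic to the algorithm.
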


Furthermore, as \cite{parallel_sgd_guide_optimization} did in the analysis of local stochastic gradient descent for \textsc{FedAvg}, we might question if there is an optimal $E_{\mathrm{opt}}$ depending on known $T$ that further minimizes the error complexity. The following result affirmatively answers our inquiry.

\begin{restatable}{corollary}{optimalNumberLocalStepsNonConvex}
   Consider Theorem~\ref{theorem:convergence_our_algorithm_nonconvex}, and choose a number of local steps $E = \mathcal{O}(T^{1 / 3})$. Then, the error asymptotically decreases as $\mathcal{O}(T^{-2 / 3})$.
\end{restatable}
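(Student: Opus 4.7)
The plan is to substitute $E = \mathcal{O}(T^{1/3})$ directly into the rate of Theorem~\ref{theorem:convergence_our_algorithm_nonconvex} and verify that each of the two $T$-dependent pieces collapses to the same order $\mathcal{O}(T^{-2/3})$. Since $\beta \in (0,1)$ is regarded as a constant and $L, \Delta, S, \sigma, G$ are all independent of $T$, the bracketed constants $[4 L\Delta/\sqrt{E} + S\sigma^2/(2\sqrt{E})]$ and $[4 + (1-\beta)^2 + 8(1 - 1/\beta)^2]$ can be safely absorbed into the $\mathcal{O}$-notation.

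The key observation is that the right-hand side of the bound in Theorem~\ref{theorem:convergence_our_algorithm_nonconvex} decomposes into two asymptotic terms in $(T,E)$: a stochastic/initialization term of order $\mathcal{O}(1/\sqrt{TE})$, arising from the $1/\sqrt{T}$ factor together with the hidden $1/\sqrt{E}$ inside the bracket, and a local-drift term of order $\mathcal{O}(E/T)$. I would first write the bound schematically as
\begin{align}
    \expected \, \left\|\nabla \F{\widehat{\vb{w}}_{T}}\right\|^2 \le \mathcal{O}\!\left(\frac{1}{\sqrt{TE}}\right) + \mathcal{O}\!\left(\frac{E}{T}\right),
\end{align}
and then balance the two terms in $E$. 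Setting $1/\sqrt{TE} = E/T$ gives $T = E^3$, i.e. $E_{\mathrm{opt}} = \Theta(T^{1/3})$, which is the claimed choice.

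I would finish by substituting $E = c T^{1/3}$ for some constant $c > 0$ into both terms: the first becomes $(TE)^{-1/2} = (T \cdot c T^{1/3})^{-1/2} = c^{-1/2} T^{-2/3}$ and the second becomes $E/T = c T^{-2/3}$. Adding them yields $\mathcal{O}(T^{-2/3})$, proving the claim. No step is an actual obstacle; the only care needed is to confirm that the asymptotic analysis is legitimate, i.e., that the hidden constants do not depend on $T$ (they depend only on $L, \Delta, S, \sigma, G, \beta$), which is immediate from the statement of Theorem~\ref{theorem:convergence_our_algorithm_nonconvex}.
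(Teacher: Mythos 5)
Your proposal is correct and follows essentially the same route as the paper: both rewrite the bound as $A/\sqrt{TE} + BE/T$ and substitute $E \propto T^{1/3}$ to get $\mathcal{O}(T^{-2/3})$, the only cosmetic difference being that the paper finds $E_{\mathrm{opt}}$ by differentiating $r(E)$ while you balance the two terms directly.
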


\section{Discussion}
\label{sec:convergence_analysis_discussion}

We point out the main insights from the analysis of our algorithm and our baseline \textsc{FedProx}.

\paragraph{The antithetic role of $\alpha$ and $\beta$ in the strongly convex case}
\label{sec:contraction_rate_discussion}

We inspect the contraction rate of \textsc{FedProx} and our algorithm from Theorems \ref{theorem:convergence_fedprox_strongly_convex} and \ref{theorem:convergence_our_algorithm_strongly_convex}, respectively. Interestingly, we have
\begin{align}
    \underbrace{\left(1 - \frac{\mu}{(\beta + 2)L}\right)^t}_{\textrm{Ours}} \le \underbrace{\left(1 - \frac{\mu}{3L}\right)^t}_{\textsc{FedAvg}} \le \underbrace{\left(1 - \frac{\mu}{3(\alpha + L)}\right)^t}_{\textsc{FedProx}}.
    \label{eq:contraction_rate_upper_bound_comparison}
\end{align}
We attain the same contraction rate when $\beta = 1$ and $\alpha = 0$, that is the \textsc{FedAvg} base case. Curiously, this suggests that any choice of $\alpha > 0$ would worsen the contraction rate for \textsc{FedProx}, making it inevitably larger than \textsc{FedAvg}'s one. On the other hand, choosing positive $\beta < 1$, would improve the contraction rate for our algorithm as it would be smaller than $(1 - \mu/(3L))^t$. Does this mean that our algorithm has generally a better convergence rate than \textsc{FedAvg} and \textsc{FedProx}? Not exactly, since decreasing $\beta$ boosts the contraction factor but dramatically aggravates the asymptotic error due to the term depending on $1/\beta^2$, being an undeniable theoretical drawback of our algorithm. Contrarily, for \textsc{FedProx}, increasing $\alpha$ would shrink its asymptotic error because of factor $L/(L + \alpha)$.

\paragraph{Attaining $\epsilon$-accuracy in the nonconvex scenario}
\label{sec:nonconvex_scenario_discussion}

Considering Theorems \ref{theorem:convergence_fedprox_nonconvex} and \ref{theorem:convergence_our_algorithm_nonconvex}, we question how the minimum number of iterations $T_{\epsilon}$ changes across the studied algorithms in order to achieve an $\epsilon$-accuracy, namely $\expected \, \|\nabla \F{\widehat{\vb{w}}_{T_{\epsilon}}}\|^2 \le \epsilon$. In this respect, the iteration complexity of \textsc{FedProx} is
\begin{align*}
    \mathcal{O}\left(\frac{64 L_{\alpha}^2 \Delta^2}{E\epsilon^2}\right) +  \mathcal{O}\left(\frac{ L^2 S^2 \sigma^4}{L_{\alpha}^2 E \epsilon^2}\right) +  \mathcal{O}\left(\frac{\alpha^2 E^{3} G^4}{L_{\alpha}^2 \epsilon^2} + \frac{2 L^2 E G^2}{L_{\alpha}^2 \epsilon} + \frac{\alpha^{4/3} L^{2/3} E^{1/3} G^{4/3}}{16^{2/3} L_{\alpha}^2 \epsilon^{2 / 3}}\right).
\end{align*}
The existence of terms $\mathcal{O}(G^4 / \epsilon^2)$ (potentially harmful) and $\mathcal{O}(G^{4/3} / \epsilon^{2 / 3})$ is uniquely motivated by parameter $\alpha > 0$. Indeed, these additive contributions are absent in \textsc{FedAvg}. Contrarily, our algorithm has the following complexity to satisfy the same $\epsilon$-accuracy.
\begin{align*}
    \mathcal{O}\left(\frac{16 L^2 \Delta^2}{E \epsilon^2}\right)  + \mathcal{O}\left(\frac{S^2 \sigma^4}{4 E \epsilon^2}\right) + \mathcal{O}\left(\frac{E G^2}{\epsilon} \left[4 + (1 - \beta)^2 + 8\left(1 - \dfrac{1}{\beta}\right)^2 \right]\right)
\end{align*}
Under the hypothesis of a limited magnitude of the last term for a given value of $\beta \in (0, 1)$, we observe that the iteration complexity is asymptotically inferior compared with $\textsc{FedProx}$. However, we remark that any $\beta$ close to $0$ would degrade the bound with a rate of $\mathcal{O}(G^2 / (\beta^2\epsilon))$.

\section{Experimentation}
\label{sec:experimentation}

By conducting multiple experiments on different datasets, we aim to empirically measure both the convergence speed and generalization capacity of our algorithm and chosen baseline on unseen data.
\begin{table}[t]
    \caption{Numerical results on FEMNIST and CIFAR10. We compare \textsc{FedAvg} (\textsc{FedProx} with parameter $\alpha = 0$), and our algorithm $\textrm{Ours}_{\{\beta\}}$ with parameter $\beta \in \{ \, 0.5, 0.7, 0.9 \, \}$.}
    \renewcommand\arraystretch{1}
    \setlength{\tabcolsep}{5pt}
    \centering
    \small
    \begin{tabular}[t]{lccccc}
        \toprule
        & \textbf{Convex} & \multicolumn{4}{c}{\textbf{Testing accuracy (\%) / Rounds to converge (speedup)}} \\
        \midrule
        & & \multicolumn{2}{c}{\textbf{CIFAR10}} & \multicolumn{2}{c}{\textbf{FEMNIST}} \\
        \cmidrule{3-6}
        & & \textbf{Balanced} & \textbf{Imbalanced} & \textbf{Balanced} & \textbf{Imbalanced}
        \\
        \midrule
        \multirow{2}{*}{\textsc{FedAvg}} & Yes & \textbf{38.70} / 11 $\textcolor{gray}{(1.0\times)}$ & 36.53 / 20 $\textcolor{gray}{(1.0\times)}$ & 84.42 / 28 $\textcolor{gray}{(1.0\times)}$ & 80.20 / 80 $\textcolor{gray}{(1.0\times)}$ \\
        & No & 36.14 / 68 $\textcolor{gray}{(1.0\times)}$ & 34.04 / 74 $\textcolor{gray}{(1.0\times)}$ & 86.07 / 44 $\textcolor{gray}{(1.0\times)}$ & 80.94 / 94 $\textcolor{gray}{(1.0\times)}$ \\
        \midrule
        \multirow{2}{*}{$\textrm{Ours}_{\{0.9\}}$} & Yes & 38.68 / 11 $\textcolor{gray}{(1.0\times)}$ & 36.67 / 19 $\textcolor{gray}{(1.0\times)}$ & 84.48 / 27 $\textcolor{gray}{(1.0\times)}$ & 80.40 / 79 $\textcolor{gray}{(1.0\times)}$ \\
        & No & 36.15 / 68 $\textcolor{gray}{(1.0\times)}$ & 34.26 / 72 $\textcolor{gray}{(1.0\times)}$ & 86.15 / 43 $\textcolor{gray}{(1.0\times)}$ & 81.05 / 87 $\textcolor{gray}{(1.1\times)}$ \\
        \cmidrule{2-6}
        \multirow{2}{*}{$\textrm{Ours}_{\{0.7\}}$} & Yes & \textbf{38.70} / 11 $\textcolor{gray}{(1.0\times)}$ & 36.98 / 18 $\textcolor{gray}{(1.1\times)}$ & 84.54 / 24 $\textcolor{gray}{(1.2\times)}$ & 80.81 / 67 $\textcolor{gray}{(1.2\times)}$ \\
        & No & \textbf{36.16} / 68 $\textcolor{gray}{(1.0\times)}$ & 34.87 / 68 $\textcolor{gray}{(1.1\times)}$ & 86.25 / 41 $\textcolor{gray}{(1.1\times)}$ & 81.21 / 80 $\textcolor{gray}{(1.2\times)}$ \\
        \cmidrule{2-6}
        \multirow{2}{*}{$\textrm{Ours}_{\{0.5\}}$} & Yes & 38.67 / 11 $\textcolor{gray}{(1.0\times)}$ & \textbf{37.49} / \textbf{15}  $\textcolor{gray}{(1.3\times)}$ & \textbf{84.57} / \textbf{23} $\textcolor{gray}{(1.2\times)}$ & \textbf{81.24} / \textbf{50} $\textcolor{gray}{(1.6\times)}$ \\
        & No & 36.13 / 68 $\textcolor{gray}{(1.0\times)}$ & \textbf{35.37} / \textbf{63} $\textcolor{gray}{(1.2\times)}$ & \textbf{86.26} / \textbf{39} $\textcolor{gray}{(1.1\times)}$ & \textbf{81.43} / \textbf{80} $\textcolor{gray}{(1.2\times)}$ \\
        \bottomrule
    \end{tabular}
    \label{tab:algorithms_experiments_comparison}
\end{table}
\begin{figure}[b]
    \centering
    \includegraphics[width=1.0 \textwidth]{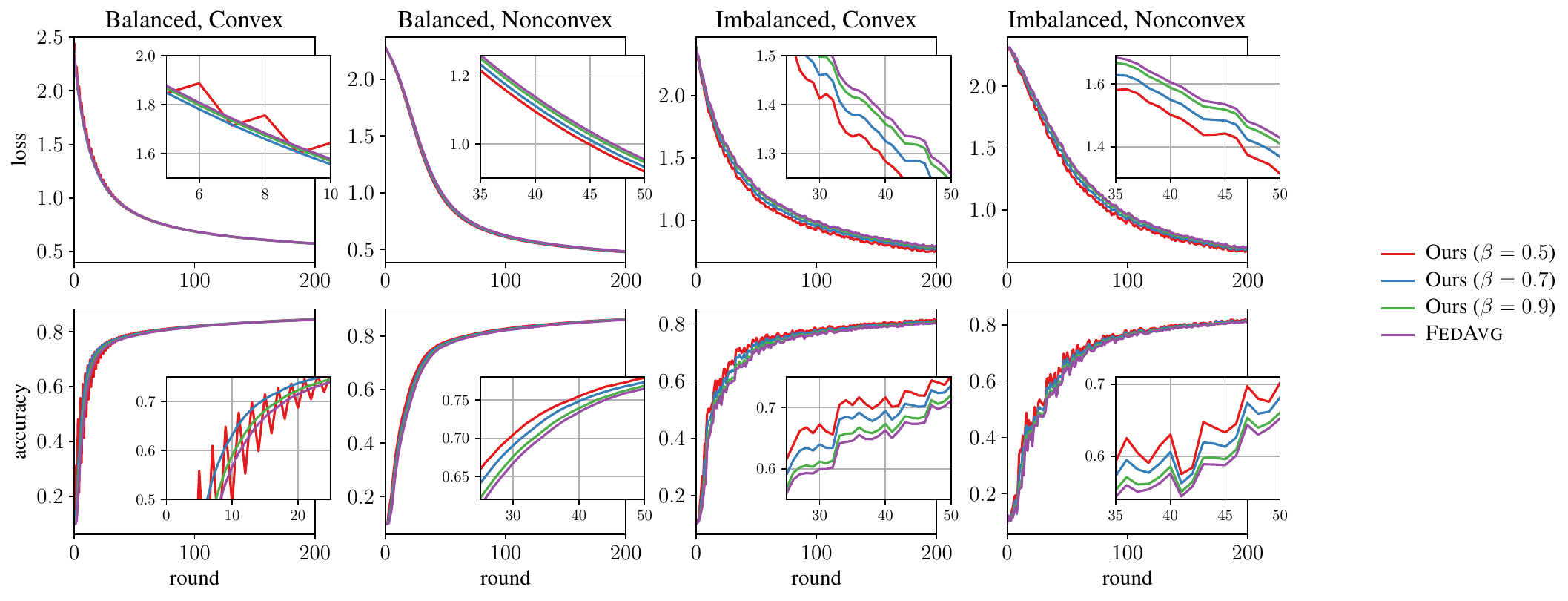}
    \caption{Simulation on balanced and imbalanced FEMNIST with the convex and nonconvex loss.}
    \label{fig:femnist-simulation}
\end{figure}

\paragraph{Settings}

We repeat our main experiment on two datasets. Specifically, we run $T = 200$ rounds on $C = 100$ clients where each participates and undertakes $E = 10$ local optimization epochs on its dataset. In addition, we leverage a local and fixed step size $\gamma = 10^{-3}$ and a $L_2$ penalty coefficient $\lambda = 10^{-4}$. No gradient clipping is applied, even though this would ensure that Assumption \ref{ass:bounded_stochastic_norm} holds. Finally, we implement the \textsc{Adjacency} scheme to aggregate the updates from the clients. This scheme defines clients' weights $p_i$ and similarities $p_{ij}$ as in Subsection \ref{sec:graph_based_model} and Algorithm \ref{alg:our_algorithm}.

\paragraph{Datasets}

Our tested datasets are FEMNIST from \cite{leaf} and CIFAR10 from \cite{cifar10}. We partition each dataset into training and testing subgroups (approximately $80$:$20$ split). We further divide each subgroup among $C = 100$ clients. We employ training clients to learn the model while testing clients for its evaluation on local subsets. In the balanced case, all clients hold the same amount of samples per class and the same portion of data. In the imbalanced scenario, as established by \cite{measuring_niid_data}, for each client, we sample the number of images of a specific class from a Dirichlet distribution and the dataset size from a log-normal distribution.

\paragraph{Loss objective}

We employ a multinomial logistic regression model with $L_2$ penalty as strongly convex and smooth loss. In a nonconvex scenario, we choose an elementary neural network with a single hidden layer composed of $128$ neurons using ReLU activation whose weights are $L_2$ penalized.

\paragraph{Validation metrics}

To measure how well each algorithm generalizes on unseen (testing) clients, we compute the accuracy of the learned model as the weighted average of each client's accuracy using the local amount of samples as weights. Additionally, to evaluate the convergence speed, we count the required number of rounds such that the accuracy exceeds 75\% (30\%) on FEMNIST (CIFAR10).

\subsection{Results}
\label{sec:results}

In this section, we analyze the results of our simulations in relation to our theoretical claims.

\paragraph{Decreasing $\beta$ does improve convergence}

Table \ref{tab:algorithms_experiments_comparison} shows that diminishing $\beta$ positively and consistently affects convergence on unseen data. Indeed, having $\beta = 0.5$ quite significantly hastens convergence by 30 rounds and yields a $1.6$ times speedup compared with \textsc{FedAvg}. As pictured in Figure \ref{fig:femnist-simulation} and \ref{fig:femnist-varying-E-simulation}, such an empirical outcome corroborates our theoretical result concerning strongly convex objectives. As already discussed, concerning the exponentially decaying term, our algorithm has a faster contraction factor than \textsc{FedAvg} ($\alpha = 0$) or \textsc{FedProx} ($\alpha > 0$) for comparable step sizes, which explains why it accelerates as we increase the perturbation by decreasing $\beta$.
\begin{figure}[h]
    \centering
    \includegraphics[width=1.0 \textwidth]{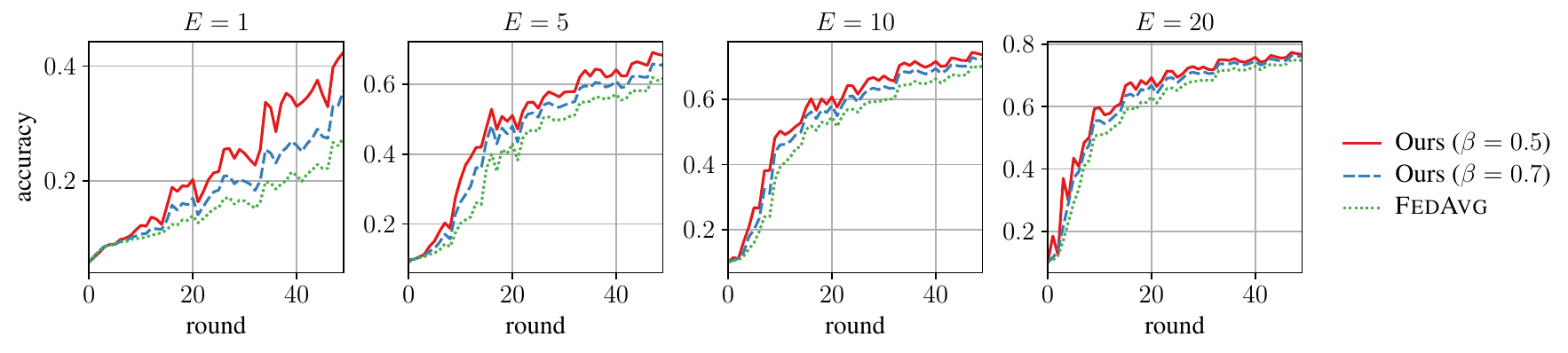}
    \caption{Strongly convex simulation varying the number of local epochs $E$ on imbalanced FEMNIST.}
    \label{fig:femnist-varying-E-simulation}
\end{figure}

\paragraph{Our algorithm is more effective on heterogeneous data}

Contrarily to the imbalanced case, Table \ref{tab:algorithms_experiments_comparison} evinces that the relative improvement shown by our algorithm in the analyzed balanced scenario is marginal, and the gain in accuracy over unseen clients is almost nonexistent after $T$ rounds. Figure \ref{fig:femnist-simulation} highlights that the trajectory of the convex loss is relatively stable and smooth as far as $\beta \in \{ \, 0.7, 0.9 \,\}$. Instead, when $\beta = 0.5$, the same curve becomes comparatively unstable. Reducing $\beta$ magnifies error term $\mathcal{O}(G^2 / \beta^2)$ (see \ref{theorem:convergence_our_algorithm_strongly_convex}). Why is this episode less relevant in the imbalanced case? If we adapt our empirical result to the theory, we might expect heterogeneity term $\mathcal{O}(\Gamma)$ to significantly outweigh $\mathcal{O}(G^2 / \beta^2)$ in imbalanced settings. Contrarily, when $\Gamma \approx 0$ in balanced contexts, $\mathcal{O}(G^2 / \beta^2)$ becomes dominant for smaller $\beta$, and our algorithm possibly outshoots at every step, which could explain the repeated trajectory correction that generates visible oscillations in the testing accuracy plot.

\section{Limitations}
\label{sec:limitations}

We already argued that the major theoretical defect of our method is given by term $\mathcal{O}(G^2 / \beta^2)$, which worsens the convergence rates from Theorems \ref{theorem:convergence_our_algorithm_strongly_convex} and \ref{theorem:convergence_our_algorithm_nonconvex}. It is also worth mentioning that our procedure has a higher communication cost than \textsc{FedAvg} or \textsc{FedProx} since the amount of data exchanged over the network includes the current global iterate $\overbar{\vb{w}}_{t, 0} \in \mathbb{R}^{D}$ plus an additional component $\vb{u}_{t}^i \in \mathbb{R}^{D}$ related to neighboring information, thus the expense still increases linearly with the dimensionality of the data. Practically, exchanging an initial message $\vb{m}_i$ summarizing the statistical nature of the local dataset might be problematic in relation to privacy constraints and data leak risks. Accordingly, advanced strategies should be undertaken to preserve confidentiality in real-world applications.

\section{Conclusions and future research}

We formalize a possible way to define a federated network of clients as a graph whose connections quantify the statistical affinity between clients' datasets, and we leverage this graph representation as the foundation of our novel federated algorithm, which forces every client to perform an inexact gradient-based optimization step. To implement the inexact update rule, we compute the gradient in a shifted coordinate, which minimizes the local variability of each client relative to other statistically similar clients. Our strongly convex analysis shows that our algorithm provides faster exponential contraction than the chosen baseline as we vary the parameter $\beta$ that controls the perturbation degree. The 
empirical results consistently support our claims, where our algorithm exhibits the expected gain in convergence speed. Future lines of research involve studying other forms of perturbation induced by mutual similarities and finding a better balance between perturbation and convergence stability.

\begingroup
\setstretch{1.0}

\endgroup

\addtocontents{toc}{\protect\setcounter{tocdepth}{2}}

\newpage

\appendix

\renewcommand{\contentsname}{\Huge{Appendices}\vspace{25pt}}

{
    \hypersetup{hidelinks}
    \tableofcontents
}

\newpage

\section{Proof of Theorem \ref{theorem:convergence_fedprox_strongly_convex} and \ref{theorem:convergence_fedprox_nonconvex}}

\label{appendix:analysis_of_fed_prox}

This appendix is dedicated to our analysis of \textsc{FedProx}. In this regard, we show all the related results and provide our deferred proofs.

\subsection{Preliminary results}

The optimization procedure of \textsc{FedProx} differs from plain \textsc{FedAvg} as follows. The update rule on iterate  $\vb{w}_{t, k}^i$ tries to solve inexactly the local problem
\begin{align}
    \argmin_{\vb{w} \, \in \, \mathbb{R}^D} \, \left\{ \, \Fi{\vb{w}}{i} + \frac{\alpha}{2}\left\|\vb{w} - \overbar{\vb{w}}_{t, 0}\right\|^2 \, \right\}
\end{align}
to compute the next iterate $\vb{w}_{t, k + 1}^i$. Therefore, by updating $\vb{w}_{t, k}^i$ in the negative direction of the quantity $\grad{\vb{w}_{t, k}^i}{i} + \alpha (\vb{w}_{t, k}^i - \overbar{\vb{w}}_{t, 0})$, the update rule is defined as
\begin{align}
    \vb{w}_{t, k + 1}^i &= (1 - \alpha \gamma_{t} )\vb{w}_{t, k}^i + \alpha \gamma_{t} \overbar{\vb{w}}_{t, 0} - \gamma_{t}  \grad{\vb{w}_{t, k}^i}{i}
    \label{eq:fedprox_update_rule_problem}
\end{align}
Parameter $\alpha > 0$ controls the proximal term, which should counteract the local divergence effect of each client. When $\alpha = 0$, we obtain the formulation of \textsc{FedAvg}. Under the premise of full participation, the average iterate sequence becomes
\begin{align}
    \overbar{\vb{w}}_{t, k + 1} = (1 - \alpha \gamma_{t} )\overbar{\vb{w}}_{t, k} + \alpha \gamma_{t} \overbar{\vb{w}}_{t, 0} - \gamma_{t} \sum_{i = 1}^C p_{i} \grad{\vb{w}_{t, k}^i}{i}
    \label{eq:fed_prox_update_rule}
\end{align}
The following lemmas help us obtain the main outcomes of our analysis. In particular, Lemma \ref{lemma:single_round_local_deviation} bounds the deviation of local iterate $\vb{w}_{t, k}^i$ for each client $i$ from the initial global iterate $\overbar{\vb{w}}_{t, 0}$ at the beginning of each round $t$.
\begin{lemma}[Single round local deviation of \textsc{FedProx}]
    \label{lemma:single_round_local_deviation}
    Assuming that $\gamma_{t} \le 1/\alpha$, and \ref{ass:full_participation} to \ref{ass:smoothness} hold, then the local deviation in one global round satisfies
    \begin{align}
        \expected \, \left\|\vb{w}_{t, k}^i - \overbar{\vb{w}}_{t, 0} \right\|^2 \le \gamma_{t}^2 E^2 G^2
    \end{align}
\end{lemma}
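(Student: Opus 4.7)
The natural approach is to track the deviation $\vb{d}_k \eqdef \vb{w}_{t,k}^i - \overbar{\vb{w}}_{t,0}$ directly. Since $\vb{w}_{t,0}^i = \overbar{\vb{w}}_{t,0}$, we start with $\vb{d}_0 = \vb{0}$, and substituting the \textsc{FedProx} local update rule \eqref{eq:fedprox_update_rule_problem} into the definition of $\vb{d}_{k+1}$, the terms $\alpha\gamma_t \overbar{\vb{w}}_{t,0}$ and the $\overbar{\vb{w}}_{t,0}$ I subtract recombine cleanly into the proximal contraction, yielding the one-step recursion
\begin{align}
    \vb{d}_{k+1} = (1 - \alpha\gamma_t)\,\vb{d}_k - \gamma_t \grad{\vb{w}_{t,k}^i}{i}.
\end{align}
This is the only place where the FedProx-specific structure enters; the rest of the argument is a standard unrolling.

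Next, I would unroll the recursion from $\vb{d}_0 = \vb{0}$ to obtain
\begin{align}
    \vb{d}_k = -\gamma_t \sum_{j=0}^{k-1} (1 - \alpha\gamma_t)^{k-1-j}\,\grad{\vb{w}_{t,j}^i}{i},
\end{align}
and then bound its squared norm. Here the hypothesis $\gamma_t \le 1/\alpha$ is exactly what I need: it guarantees $1 - \alpha\gamma_t \in [0,1]$, so every geometric coefficient in the sum is at most one and can be dropped without changing the sign of the inequality. Applying Cauchy--Schwarz in the form $\|\sum_{j=0}^{k-1}\vb{y}_j\|^2 \le k \sum_{j=0}^{k-1}\|\vb{y}_j\|^2$ then gives
\begin{align}
    \|\vb{d}_k\|^2 \le \gamma_t^2\, k \sum_{j=0}^{k-1} \left\|\grad{\vb{w}_{t,j}^i}{i}\right\|^2.
\end{align}

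Finally, taking total expectation and applying Assumption~\ref{ass:bounded_stochastic_norm} bounds each $\expected\|\grad{\vb{w}_{t,j}^i}{i}\|^2$ by $G^2$, so the right-hand side becomes $\gamma_t^2 k^2 G^2$, which is at most $\gamma_t^2 E^2 G^2$ because $k \in \{0,\dots,E\}$. This concludes the bound. The only subtle step is the first one, where one must spot that subtracting $\overbar{\vb{w}}_{t,0}$ turns the seemingly awkward proximal coefficient into a benign contraction factor; everything afterwards is a routine unrolling plus Cauchy--Schwarz argument, and I do not expect any real obstacle beyond verifying that $\gamma_t \le 1/\alpha$ is used precisely to keep the geometric factors bounded by one.
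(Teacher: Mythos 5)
Your proposal is correct and follows essentially the same route as the paper: the same recursion for $\vb{d}_k$, the same unrolling from $\vb{d}_0=\vb{0}$, and the same Cauchy--Schwarz/Jensen step followed by Assumption~\ref{ass:bounded_stochastic_norm}. The one nit is the order of operations in the middle step: you cannot literally drop the factors $(1-\alpha\gamma_t)^{k-1-j}$ inside the vector sum before taking norms (that inequality is false for signed summands); you must first apply Cauchy--Schwarz to the weighted sum, obtaining $\|\vb{d}_k\|^2 \le \gamma_t^2 k \sum_j (1-\alpha\gamma_t)^{2(k-1-j)}\|\grad{\vb{w}_{t,j}^i}{i}\|^2$, and only then use $\gamma_t\le 1/\alpha$ to bound each coefficient by one --- which is exactly the order the paper uses and which yields your stated bound.
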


\begin{proof}
    \label{lemma:fedprox_single_round_local_deviation_proof}
    From the definition of the local update rule
    \begin{align*}
        \vb{w}_{t, k + 1}^i - \overbar{\vb{w}}_{t, 0} = (1 - \alpha \gamma_{t} )\left(\vb{w}_{t, k}^i - \overbar{\vb{w}}_{t, 0}\right) - \gamma_{t}  \grad{\vb{w}_{t, k}^i}{i}
    \end{align*}
    we use $d_{t, k}$ to denote $\vb{w}_{t, k}^i - \overbar{\vb{w}}_{t, 0}$, and we obtain by recursion
    \begin{align*}
        d_{t, k + 1}  &= (1 - \alpha \gamma_{t})d_{t, k} - \gamma_{t} \grad{\vb{w}_{t, k}^i}{i} \\
        &= (1 - \alpha \gamma_{t})\left((1 - \alpha \gamma_{t})d_{t, k - 1} - \gamma_{t} \grad{\vb{w}_{t, k - 1}^i}{i} \right) - \gamma_{t} \grad{\vb{w}_{t, k}^i}{i} \\
        &\ldots \\
        &= (1 - \alpha\gamma_{t})^{k + 1} d_{t, 0} -\gamma_{t} \sum_{m = 0}^{k} (1 - \alpha \gamma_{t})^m   \grad{\vb{w}_{t, k - m}^i}{i}
    \end{align*}
    Since $\vb{w}_{t, 0}^i = \overbar{\vb{w}}_{t, 0}$, by definition $d_{t, 0} = \vb{0}$.  Therefore,
    \begin{align}
        \left\|\vb{w}_{t, k}^i - \overbar{\vb{w}}_{t, 0} \right\|^2 &= \gamma_{t}^2 \left\| \sum_{m = 0}^{k - 1} (1 - \alpha \gamma_{t})^m   \grad{\vb{w}_{t, k - 1 - m}^i}{i} \right\|^2 \\
        &\le \gamma_{t}^2 \left(\sum_{m = 0}^{k - 1} (1 - \alpha \gamma_{t})^m \right) \sum_{m = 0}^{k - 1} (1 - \alpha \gamma_{t})^{m} \left\| \grad{\vb{w}_{t, k - 1 - m}^i}{i} \right\|^2 \label{eq:lemma_local_deviation_jensen} \\
        &\le \gamma_{t}^2 k \sum_{m = 0}^{k - 1} \left\| \grad{\vb{w}_{t, k - 1 - m}^i}{i} \right\|^2 \label{eq:lemma_local_deviation_approx_with_geometric}
    \end{align}
    where we apply Jensen's inequality in equation \eqref{eq:lemma_local_deviation_jensen}, and we notice that $(1 - \alpha\gamma_{t})^m \le 1$ in \eqref{eq:lemma_local_deviation_approx_with_geometric}. Finally, we bound norms of gradients recalling Assumption \ref{ass:bounded_stochastic_norm}.
    \begin{align*}
        \expected \, \left\|\vb{w}_{t, k}^i - \overbar{\vb{w}}_{t, 0} \right\|^2 &\le \gamma_{t}^2 k \sum_{m = 0}^{k - 1} \expected \, \left\| \grad{\vb{w}_{t, k - 1 - m}^i}{i} \right\|^2 \\
        &\le \gamma_{t}^2 k^2 G^2 \\
        &\le \gamma_{t}^2 E^2 G^2
    \end{align*}
    Taking total expectation concludes our proof.
\end{proof}

On the other hand, the following result from Lemma \ref{lemma:single_round_local_divergence} describes how client drift is bounded during a single round of training. A similar result was found by \cite{parallel_sgd_guide_optimization} concerning parallel stochastic gradient descent with periodic averaging, alias \textsc{FedAvg}. As in Lemma \ref{lemma:single_round_local_deviation}, the choice of the step size $\gamma_{t}$ depends on parameter $\alpha$ which controls the proximal regularization term.

\begin{lemma}[Single round local divergence of \textsc{FedProx}]
    \label{lemma:single_round_local_divergence}
    Assuming that $\gamma_{t} \le 1/\alpha $, and \ref{ass:full_participation} to \ref{ass:smoothness} hold, then the local divergence in one global round is bounded as
    \begin{align}
        \expected \, \left\|\overbar{\vb{w}}_{t, k} - \vb{w}_{t, k}^i \right\|^2 \le 4\gamma_{t}^2 E^2 G^2
    \end{align}
\end{lemma}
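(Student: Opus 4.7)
The plan is to reduce this bound to the single-round local deviation already controlled by Lemma~\ref{lemma:single_round_local_deviation}, using the triangle inequality together with the initial condition $\vb{w}_{t,0}^i = \overbar{\vb{w}}_{t,0}$, which gives all the local iterates and the global average a common origin at the start of round $t$. Since $\overbar{\vb{w}}_{t, k} = \sum_{j=1}^{C} p_j \vb{w}_{t, k}^j$ under Assumption~\ref{ass:full_participation}, both $\overbar{\vb{w}}_{t,k}$ and $\vb{w}_{t,k}^i$ can be expressed as offsets from $\overbar{\vb{w}}_{t,0}$, which is exactly the object Lemma~\ref{lemma:single_round_local_deviation} controls.

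Concretely, first I would insert $\pm \overbar{\vb{w}}_{t, 0}$ and invoke the elementary inequality $\|a - b\|^2 \le 2\|a\|^2 + 2\|b\|^2$ to split
\[
    \left\|\overbar{\vb{w}}_{t,k} - \vb{w}_{t,k}^i\right\|^2 \le 2\left\|\overbar{\vb{w}}_{t,k} - \overbar{\vb{w}}_{t,0}\right\|^2 + 2\left\|\vb{w}_{t,k}^i - \overbar{\vb{w}}_{t,0}\right\|^2.
\]
The second summand is directly handled by Lemma~\ref{lemma:single_round_local_deviation}, giving $\gamma_t^2 E^2 G^2$ in expectation. For the first, I would rewrite $\overbar{\vb{w}}_{t,k} - \overbar{\vb{w}}_{t,0} = \sum_{j=1}^{C} p_j (\vb{w}_{t,k}^j - \overbar{\vb{w}}_{t,0})$ using the definition of the averaged iterate and the fact that $\sum_j p_j \overbar{\vb{w}}_{t,0} = \overbar{\vb{w}}_{t,0}$, then apply Jensen's inequality with the weights $\{p_j\}$ (which sum to one) to get
\[
    \left\|\overbar{\vb{w}}_{t,k} - \overbar{\vb{w}}_{t,0}\right\|^2 \le \sum_{j=1}^{C} p_j \left\|\vb{w}_{t,k}^j - \overbar{\vb{w}}_{t,0}\right\|^2.
\]
Taking expectations and applying Lemma~\ref{lemma:single_round_local_deviation} to each term again yields $\gamma_t^2 E^2 G^2$, since the bound is uniform over clients and $\sum_j p_j = 1$.

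Combining these gives $\expected \|\overbar{\vb{w}}_{t,k} - \vb{w}_{t,k}^i\|^2 \le 2\gamma_t^2 E^2 G^2 + 2\gamma_t^2 E^2 G^2 = 4\gamma_t^2 E^2 G^2$, which is the claim. There is no real obstacle here, the argument is mechanical: the only conceptual step is recognizing that the client-drift quantity and the local-deviation quantity share the anchor $\overbar{\vb{w}}_{t,0}$ because of full participation and the round-start reset, which transforms a bound on divergence from a moving reference into two bounds on deviations from a fixed reference. The step-size condition $\gamma_t \le 1/\alpha$ is inherited automatically since it is exactly what Lemma~\ref{lemma:single_round_local_deviation} requires.
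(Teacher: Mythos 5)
Your proof is correct and reaches the exact constant $4\gamma_t^2 E^2 G^2$, but it takes a genuinely different route from the paper. The paper does not pass through the anchor $\overbar{\vb{w}}_{t,0}$ at all: it first applies Jensen's inequality to reduce $\expected\,\|\overbar{\vb{w}}_{t,k} - \vb{w}_{t,k}^i\|^2$ to the weighted average of pairwise terms $\expected\,\|\vb{w}_{t,k}^j - \vb{w}_{t,k}^i\|^2$, then unrolls the FedProx recursion for the pairwise difference $\delta_{t,k}^{ij} = \vb{w}_{t,k}^j - \vb{w}_{t,k}^i$ (which satisfies $\delta_{t,k+1}^{ij} = (1-\alpha\gamma_t)\delta_{t,k}^{ij} - \gamma_t \Delta g_{t,k}^{ij}$ with $\delta_{t,0}^{ij} = \vb{0}$), and finally bounds $\|\Delta g_{t,k}^{ij}\|^2 \le 2\|\grad{\vb{w}_{t,k}^i}{i}\|^2 + 2\|\grad{\vb{w}_{t,k}^j}{j}\|^2 \le 4G^2$ via Young's inequality. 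In effect the paper re-derives the deviation argument of Lemma~\ref{lemma:single_round_local_deviation} for differences of iterates rather than reusing that lemma. Your decomposition through the round-start anchor $\overbar{\vb{w}}_{t,0}$, combined with $\overbar{\vb{w}}_{t,k} = \sum_j p_j \vb{w}_{t,k}^j$ and two applications of Lemma~\ref{lemma:single_round_local_deviation}, is more modular and shorter, and it correctly inherits the step-size condition. What the paper's pairwise route buys is independence from the averaging structure: it bounds $\expected\,\|\vb{w}_{t,k}^j - \vb{w}_{t,k}^i\|^2$ for an arbitrary pair of clients as an intermediate product, and it retains the $(1-\alpha\gamma_t)^m$ contraction factors up to the last step (though it then discards them, so neither argument is tighter in the end). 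Both are valid; yours is the cleaner reduction.
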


\begin{proof}
    \label{lemma:fedprox_single_round_local_divergence_proof}
    We apply Jensen's inequality for $\|\cdot\|^2$.
    \begin{align}
        \expected \, \left\|\overbar{\vb{w}}_{t, k + 1} - \vb{w}_{t, k + 1}^i \right\|^2 &= \expected \, \left\| \sum_{j = 1}^C p_{i} \vb{w}_{t, k + 1}^j - \vb{w}_{t, k + 1}^i \right\|^2 \\
        &\le \sum_{j = 1}^C p_{i} \, \expected \, \left\| \vb{w}_{t, k + 1}^j - \vb{w}_{t, k + 1}^i \right\|^2
        \label{eq:initial_jensen_inequality_local_divergence}
    \end{align}
    Under \ref{ass:full_participation}, we denote $\delta_{t, k}^{ij} = \vb{w}_{t, k}^j - \vb{w}_{t, k}^i$ and $\Delta g_{t, k}^{ij} = \grad{\vb{w}_{t, k}^i}{i} - \grad{\vb{w}_{t, k}^j}{j}$. Thus, from the definition of the local update rule, we have
    \begin{align*}
        \delta_{t, k + 1}^{ij} = (1 - \alpha \gamma_{t} )\delta_{t, k}^{ij} - \gamma_{t}  \Delta g_{t, k}^{ij}
    \end{align*}
    Similarly to proof of Lemma \ref{lemma:single_round_local_deviation}, we obtain by recursion
    \begin{align*}
        \delta_{t, k + 1}  &= (1 - \alpha \gamma_{t})\delta_{t, k} - \gamma_{t} \Delta g_{t, k}^{ij} \\
        &= (1 - \alpha \gamma_{t})\left((1 - \alpha \gamma_{t})\delta_{t, k - 1} - \gamma_{t} \Delta g_{t, k - 1}^{ij} \right) - \gamma_{t} \Delta g_{t, k}^{ij} \\
        &\ldots \\
        &= (1 - \alpha\gamma_{t})^{k + 1} \delta_{t, 0} - \gamma_{t} \sum_{m = 0}^{k} (1 - \alpha \gamma_{t})^m   \Delta g_{t, k - m}^{ij}
    \end{align*}
    Since $\vb{w}_{t, 0}^i = \vb{w}_{t, 0}^j = \overbar{\vb{w}}_{t, 0}$, by definition $d_{t, 0} = \vb{0}$. Hence,
    \begin{align}
        \delta_{t, k} &= \gamma_{t}^2 \left\| \sum_{m = 0}^{k - 1} (1 - \alpha \gamma_{t})^m   \Delta g_{t, k - 1 - m}^{ij} \right\|^2 \\
        &\le \gamma_{t}^2 \left(\sum_{m = 0}^{k - 1} (1 - \alpha \gamma_{t})^m \right) \sum_{m = 0}^{k - 1} (1 - \alpha \gamma_{t})^{m} \left\| \Delta g_{t, k - 1 - m}^{ij} \right\|^2 \label{lemma_local_divergence_jensen} \\
        &= \gamma_{t}^2 k \sum_{m = 0}^{k - 1} \left\| \Delta g_{t, k - 1 - m}^{ij} \right\|^2 \label{lemma_local_divergence_geometric_approximation}
    \end{align}
    In equation \eqref{lemma_local_divergence_jensen}, we recall Jensen's inequality, while, in equation \eqref{lemma_local_divergence_geometric_approximation}, we leverage the fact that $(1 - \alpha\gamma_{t})^m \le 1 $.
    Therefore, we bound $\|\Delta g_{t, k - 1 - m}^{ij} \|^2$ using Young's inequality and Assumption \ref{ass:bounded_stochastic_norm}.
    \begin{align*}
        \expected \, \left\|\vb{w}_{t, k + 1}^j - \vb{w}_{t, k + 1}^i \right\|^2
        &\le \gamma_{t}^2 k \sum_{m = 0}^{k - 1} \expected \, \left\|\Delta g_{t, k - 1 - m}^{ij} \right\|^2 \\
        &\le 2\gamma_{t}^2 k \sum_{m = 0}^{k - 1} \ev{ \left\|\grad{\vb{w}_{t, k}^i}{i}\right\|^2 + \left\|\grad{\vb{w}_{t, k}^j}{j}\right\|^2 } \\
        &\le 4\gamma_{t}^2 k^2 G^2 \\
        &\le 4\gamma_{t}^2 E^2 G^2
    \end{align*}
    After combining into \ref{eq:initial_jensen_inequality_local_divergence} and taking total expectation, we conclude our proof.
\end{proof}

\subsection{Main results for strongly convex analysis}

Lemma \ref{lemma:single_round_global_progress} expresses the global progress made in a single round of communication. This result is limited to the scenario in which the local objectives are strongly convex. We feel inspired by \cite{fed_optimization_guide} to provide this intermediate lemma to aid the development of further statements. The following result highlights the issue related to the choice of the step size $\gamma_{t}$. Indeed, a larger step size will rapidly shrink the previous distance from the global minimum $\vb{w}_{\star}$ while amplifying the effect of term $A$, which reflects all the pathological properties of the federated setting, namely the statistical heterogeneity of the network and the stochastic gradient behavior.

\begin{lemma}[Single round progress of \textsc{FedProx} for strongly convex loss]
    \label{lemma:single_round_global_progress}
    Assume that
    \begin{align}
        \label{lemma:one_step_progress_step_decay}
        \gamma_{t} \le \min{\left\{ \frac{1}{2L}, \frac{1}{\alpha + \mu} \right\}}
    \end{align}
    and \ref{ass:full_participation} to \ref{ass:strong_convexity} hold, then the progress in one global round satisfies
    \begin{align}
        \expected \, \left\|\overbar{\vb{w}}_{t + 1, 0} - \vb{w}_{\star}\right\|^2 \le \kappa \, \expected \, \left\|\overbar{\vb{w}}_{t, 0} - \vb{w}_{\star}\right\|^2 + A
    \end{align}
    where we define $\kappa = \dfrac{\alpha + \mu \left( 1 - \gamma_{t}(\alpha + \mu) \right)^E }{\alpha + \mu} \le 1 - \gamma_{t}\mu$ and
    \begin{align}
        A = \gamma_{t}^2 E \sigma^2 \sum_{i = 1}^C p_{i}^2 + 6\gamma_{t}^2 LE\Gamma + 8 \gamma_{t}^2 E^3 G^2
    \end{align}
\end{lemma}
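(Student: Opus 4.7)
The plan is to obtain the bound by deriving a one-step recursion on $r_k := \expected \|\overbar{\vb{w}}_{t,k} - \vb{w}_{\star}\|^2$ and then unrolling it across the $E$ local steps of round $t$. Starting from the averaged update rule~\eqref{eq:fed_prox_update_rule}, I would write
\begin{align*}
\overbar{\vb{w}}_{t,k+1} - \vb{w}_{\star} = (1 - \alpha\gamma_{t})(\overbar{\vb{w}}_{t,k} - \vb{w}_{\star}) + \alpha\gamma_{t}(\overbar{\vb{w}}_{t,0} - \vb{w}_{\star}) - \gamma_{t} \sum_{i} p_{i} \grad{\vb{w}_{t,k}^i}{i},
\end{align*}
take squared norms and a conditional expectation to peel off a noise contribution bounded by $\gamma_{t}^2 S\sigma^2$ (using Assumption~\ref{ass:bounded_variance} and independence of the $\xi_{t,k}^i$ across clients). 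The remaining deterministic-gradient part is best analyzed by viewing the FedProx step as exact SGD on the augmented objectives $f_i^{(t)}(\vb{w}) := \Fi{\vb{w}}{i} + (\alpha/2)\|\vb{w} - \overbar{\vb{w}}_{t,0}\|^2$, each of which is $(\alpha+\mu)$-strongly convex and $L_\alpha$-smooth.

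After expanding the squared norm, I would split the cross term as $\sum_i p_i \langle \overbar{\vb{w}}_{t,k} - \vb{w}_{\star}, \nabla f_i^{(t)}(\vb{w}_{t,k}^i)\rangle = \sum_i p_i \langle \overbar{\vb{w}}_{t,k} - \vb{w}_{t,k}^i, \nabla f_i^{(t)}(\vb{w}_{t,k}^i)\rangle + \sum_i p_i \langle \vb{w}_{t,k}^i - \vb{w}_{\star}, \nabla f_i^{(t)}(\vb{w}_{t,k}^i)\rangle$. Strong convexity of $f_i^{(t)}$ applied to the second summand produces the contraction $-(\alpha+\mu)\gamma_{t}\|\overbar{\vb{w}}_{t,k} - \vb{w}_{\star}\|^2$ together with a function-gap term $\sum_i p_i [f_i^{(t)}(\vb{w}_{t,k}^i) - f_i^{(t)}(\vb{w}_{\star})]$, which I lower bound through the definition of $\Gamma$ (using $\Fi{\vb{w}_{\star}^i}{i} \le \Fi{\vb{w}_{t,k}^i}{i}$). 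The first summand and the residual $\gamma_{t}^2 \|\sum_i p_i \nabla f_i^{(t)}(\vb{w}_{t,k}^i)\|^2$ are controlled via $L_\alpha$-smoothness combined with Lemma~\ref{lemma:single_round_local_divergence}; the hypothesis $\gamma_{t} \le 1/(2L)$ is precisely what lets the negative quadratic from strong convexity absorb the positive $\gamma_{t}^2$ smoothness terms. The net one-step bound should then take the form
\begin{align*}
\expected \|\overbar{\vb{w}}_{t,k+1} - \vb{w}_{\star}\|^2 \le (1 - (\alpha+\mu)\gamma_{t}) \expected \|\overbar{\vb{w}}_{t,k} - \vb{w}_{\star}\|^2 + \alpha\gamma_{t} \expected \|\overbar{\vb{w}}_{t,0} - \vb{w}_{\star}\|^2 + B_k,
\end{align*}
where $B_k$ collects the per-step noise $\gamma_{t}^2 S \sigma^2$, a heterogeneity piece of order $\gamma_{t}^2 L\Gamma$, and an $\mathcal{O}(\gamma_{t}^2 E^2 G^2)$ piece from Lemma~\ref{lemma:single_round_local_divergence}.

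Finally I would unroll this linear recursion from $k=0$ to $E-1$. The coefficient of $\expected \|\overbar{\vb{w}}_{t,0} - \vb{w}_{\star}\|^2$ simplifies via the telescoping identity
\begin{align*}
(1 - (\alpha+\mu)\gamma_{t})^E + \alpha\gamma_{t} \sum_{j=0}^{E-1} (1 - (\alpha+\mu)\gamma_{t})^j = \frac{\mu}{\alpha+\mu}(1 - (\alpha+\mu)\gamma_{t})^E + \frac{\alpha}{\alpha+\mu},
\end{align*}
which is exactly the claimed $\kappa$, while $\sum_{j=0}^{E-1}(1-(\alpha+\mu)\gamma_{t})^{E-1-j} B_j \le E \max_k B_k$ yields the stated $A = \gamma_{t}^2 E\sigma^2 S + 6\gamma_{t}^2 LE\Gamma + 8\gamma_{t}^2 E^3 G^2$. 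The announced bound $\kappa \le 1-\gamma_{t}\mu$ follows from $(1-x)^E \le 1-x$ applied to $x = \gamma_{t}(\alpha+\mu) \in [0,1]$ with $E \ge 1$. The main obstacle I anticipate is the tight one-step inequality itself: one must simultaneously recover the $(\alpha+\mu)\gamma_{t}$ contraction, the $\alpha\gamma_{t}$ proximal pullback to $\overbar{\vb{w}}_{t,0}$, and control the $\gamma_{t}^2 \|\bar{g}_k^{(t)}\|^2$ term by smoothness, which forces a careful choice of Young's-inequality parameters and is precisely what sets the constants $6$ in front of $L\Gamma$ and $8$ in front of $E^2 G^2$.
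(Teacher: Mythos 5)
Your proposal follows the same skeleton as the paper's proof: a one-step affine recursion $\expected D_{t,k+1}\le a\,\expected D_{t,k}+b\,\expected D_{t,0}+c$ with $a=1-\gamma_t(\alpha+\mu)$ and $b=\alpha\gamma_t$, unrolled over $E$ steps, with the geometric sum collapsing to exactly the stated $\kappa$ and the error terms crudely bounded by $E\max_k B_k$; your telescoping identity and the argument for $\kappa\le 1-\gamma_t\mu$ are both correct and identical to the paper's. The one genuine difference is how you derive the one-step inequality: you view the FedProx step as SGD on the augmented objective $f_i^{(t)}=\Fi{\cdot}{i}+\tfrac{\alpha}{2}\|\cdot-\overbar{\vb{w}}_{t,0}\|^2$ and invoke its $(\alpha+\mu)$-strong convexity, so the contraction and the $\alpha\gamma_t\,\expected D_{t,0}$ pullback emerge together from one application of convexity (the latter via $f_i^{(t)}(\vb{w}_\star)=\Fi{\vb{w}_\star}{i}+\tfrac{\alpha}{2}\|\vb{w}_\star-\overbar{\vb{w}}_{t,0}\|^2$); the paper instead keeps $\Fi{\cdot}{i}$ un-augmented, expands the update into separate cross terms, and recovers the $\alpha\gamma_t D_{t,0}$ coefficient from the polarization identity $2\vb{u}^{\transpose}\vb{v}=\|\vb{u}\|^2+\|\vb{v}\|^2-\|\vb{u}-\vb{v}\|^2$ plus a Peter--Paul bound on the proximal cross term. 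Your framing is cleaner for the contraction but has one pitfall you should watch: if you also bound the quadratic term $\gamma_t^2\|\sum_i p_i\nabla f_i^{(t)}(\vb{w}_{t,k}^i)\|^2$ by $L_\alpha$-smoothness of the augmented objective, absorbing it into the negative function gap requires $\gamma_t\le 1/(2L_\alpha)$, which is \emph{stronger} than the lemma's hypothesis $\gamma_t\le 1/(2L)$; the paper avoids this by splitting off the $\alpha(\vb{w}_{t,k}^i-\overbar{\vb{w}}_{t,0})$ component and controlling it with the local-deviation Lemma~\ref{lemma:single_round_local_deviation} (which your sketch does not cite, though you do cite the divergence Lemma~\ref{lemma:single_round_local_divergence}), so that only $L$-smoothness of the original $\Fi{\cdot}{i}$ is ever needed. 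Finally, you assert rather than derive the constants $6$ and $8$; reproducing them requires the same heterogeneity add-and-subtract of $\Fi{\vb{w}_\star}{i}$ and $\Fi{\overbar{\vb{w}}_{t,k}}{i}$ that the paper carries out, but this is bookkeeping rather than a missing idea.
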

\begin{proof}
    We denote $\left\|\overbar{\vb{w}}_{t, k} - \vb{w}_{\star}\right\|^2$ as $D_{t, k}$. We aim to arrive at the following inequality.
    \begin{align}
        \expected \, D_{t, k + 1} = a \, \expected \, D_{t, k} + b \, \expected \, D_{t, 0} + c
        \label{eq:recursion_one_progress_folded}
    \end{align}
    where $a, b, c$ are problem-related coefficients. By definition of update rule \eqref{eq:fed_prox_update_rule}, we have that
    \begin{align*}
        \overbar{\vb{w}}_{t, k + 1} - \vb{w}_{\star} &= (1 - \alpha \gamma_{t} )\left(\overbar{\vb{w}}_{t, k} - \vb{w}_{\star}\right) + \alpha \gamma_{t} \left(\overbar{\vb{w}}_{t, 0} - \vb{w}_{\star}\right) - \gamma_{t} \sum_{i = 1}^C p_{i} \grad{\vb{w}_{t, k}^i}{i} \\
        &= (1 - \alpha \gamma_{t} )\left(\overbar{\vb{w}}_{t, k} - \vb{w}_{\star}\right) + \alpha \gamma_{t} \left(\overbar{\vb{w}}_{t, 0} - \vb{w}_{\star}\right) - \gamma_{t} \sum_{i = 1}^C p_{i} \nabla \Fi{\vb{w}_{t, k}^i}{i}  + \\ &\fake{=} \underbrace{\gamma_{t} \sum_{i = 1}^C p_{i} \nabla \Fi{\vb{w}_{t, k}^i}{i} - \gamma_{t} \sum_{i = 1}^C p_{i} \grad{\vb{w}_{t, k}^i}{i}}_{\vb{v}}
    \end{align*}
    We have that $\expected \, \vb{v} = 0$ due to Assumption \ref{ass:bounded_variance} on the unbiased stochastic gradient, thus all mixed products of nature $\ev{2 \vb{v}^{\transpose} \vb{u}}$ are erased under expectation. Therefore
     \begin{align*}
        \expected \, D_{t, k + 1} &= (1 - \alpha \gamma_{t} )^2 \, \expected \, D_{t, k} + (\alpha \gamma_{t})^2 \, \expected \,  D_{t, 0} + \\
        &\fake{=} \ev{\underbrace{2\alpha \gamma_{t}(1 - \alpha \gamma_{t})\left(\overbar{\vb{w}}_{t, k} - \vb{w}_{\star}\right)^{\transpose}\left(\overbar{\vb{w}}_{t, 0} - \vb{w}_{\star}\right)}_{a_1}} + \\
        &\fake{=} \ev{\underbrace{-2\gamma_{t}(1 - \alpha \gamma_{t})\left(\overbar{\vb{w}}_{t, k} - \vb{w}_{\star}\right)^{\transpose}\left[\sum_{i = 1}^C p_{i} \nabla \Fi{\vb{w}_{t, k}^i}{i}\right]}_{a_2}} + \\
        &\fake{=} \ev{\underbrace{-2\alpha\gamma_{t}^2\left(\overbar{\vb{w}}_{t, 0} - \vb{w}_{\star}\right)^{\transpose}\left[\sum_{i = 1}^C p_{i} \nabla \Fi{\vb{w}_{t, k}^i}{i}\right]}_{a_3}} + \\
        &\fake{=} \ev{\underbrace{\gamma_{t}^2 \left\| \sum_{i = 1}^C p_{i} \nabla \Fi{\vb{w}_{t, k}^i}{i}\right\|^2 + \gamma_{t}^2 \left\|\sum_{i = 1}^C p_{i} \nabla \Fi{\vb{w}_{t, k}^i}{i} - \gamma_{t} \sum_{i = 1}^C p_{i} \grad{\vb{w}_{t, k}^i}{i}\right\|^2}_{a_4}}
    \end{align*}
    First, we bound term $a_1$ using the law $2\vb{u}^{\transpose}\vb{v} = \|\vb{u}\|^2 + \|\vb{v}\|^2 - \|\vb{u} - \vb{v}\|^2$. Therefore,
    \begin{align*}
        a_1 &= \alpha \gamma_{t}(1 - \alpha \gamma_{t})\left[ D_{t, k} + D_{t, 0} - \left\|\overbar{\vb{w}}_{t, k} - \overbar{\vb{w}}_{t, 0}\right\|^2 \right] \\
        &\le \alpha \gamma_{t}(1 - \alpha \gamma_{t})\left[ D_{t, k} + D_{t, 0} \right]
    \end{align*}
    We bound term $a_2$ by adding and subtracting term $\vb{w}_{t, k}^i$.
    \begin{align*}
        a_2 &= -2\gamma_{t}(1 - \alpha \gamma_{t}) \left[\sum_{i = 1}^C p_{i} \left(\overbar{\vb{w}}_{t, k} - \vb{w}_{t, k}^i\right)^{\transpose}\nabla \Fi{\vb{w}_{t, k}^i}{i} + \sum_{i = 1}^C p_{i} \left(\vb{w}_{t, k}^i - \vb{w}_{\star}\right)^{\transpose}\nabla \Fi{\vb{w}_{t, k}^i}{i} \right]
    \end{align*}
    We apply Peter-Paul's inequality on the first term of the sum and strong convexity on the second term.
    \begin{align*}
        a_2 &\le \gamma_{t}(1 - \alpha \gamma_{t}) \sum_{i = 1}^C p_{i} \left[\frac{1}{\gamma_{t}} \left\|\overbar{\vb{w}}_{t, k} - \vb{w}_{t, k}^i\right\|^2 + \gamma_{t} \left\|\nabla \Fi{\vb{w}_{t, k}^i}{i}\right\|^2\right] + \\
        &\fake{\le} \gamma_{t}(1 - \alpha \gamma_{t}) \sum_{i = 1}^C p_{i} \left[2\left(\Fi{\vb{w}_{\star}}{i} - \Fi{\vb{w}_{t, k}^i}{i}\right) - \mu\left\| \vb{w}_{t, k}^i - \vb{w}_{\star} \right\|^2 \right]
    \end{align*}
    By convexity of the squared norm, we have
    \begin{align*}
        -\mu\gamma_{t}(1 - \alpha \gamma_{t}) \sum_{i = 1}^C p_{i} \left\| \vb{w}_{t, k}^i - \vb{w}_{\star} \right\|^2 \le -\mu\gamma_{t}(1 - \alpha \gamma_{t}) \left\| \overbar{\vb{w}}_{t, k} - \vb{w}_{\star} \right\|^2
    \end{align*}
    and by smoothness of $\Fi{\cdot}{i}$, we have the following.
    \begin{align*}
        \gamma_{t}^2(1 - \alpha \gamma_{t}) \sum_{i = 1}^C p_{i} \left\|\nabla \Fi{\vb{w}_{t, k}^i}{i}\right\|^2 \le 2L\gamma_{t}^2(1 - \alpha \gamma_{t}) \sum_{i = 1}^C p_{i} \left(\Fi{\vb{w}_{t, k}^i}{i} - \Fi{\vb{w}_{\star}^i}{i}\right)
    \end{align*}
    Thus, considering that $\sum_{i = 1}^C p_{i} \Fi{\cdot}{i} = \F{\cdot}$, we obtain
    \begin{align*}
        a_2 &\le (1 - \alpha \gamma_{t}) \sum_{i = 1}^C p_{i} \left\|\overbar{\vb{w}}_{t, k} - \vb{w}_{t, k}^i\right\|^2 + 2L\gamma_{t}^2(1 - \alpha \gamma_{t}) \sum_{i = 1}^C p_{i} \left(\Fi{\vb{w}_{t, k}^i}{i} - \Fi{\vb{w}_{\star}^i}{i}\right) + \\
        &\fake{\le} 2\gamma_{t}(1 - \alpha \gamma_{t}) \sum_{i = 1}^C p_{i} \left(\Fi{\vb{w}_{\star}}{i} - \Fi{\vb{w}_{t, k}^i}{i}\right) - \mu\gamma_{t}(1 - \alpha\gamma_{t}) D_{t, k}
    \end{align*}
    We bind the term $a_3$ in a similar fashion to $a_2$.
    \begin{align*}
        a_3 &\le \alpha\gamma_{t}^2 \sum_{i = 1}^C p_{i} \left[\frac{1}{\gamma_{t}} \left\|\overbar{\vb{w}}_{t, 0} - \vb{w}_{t, k}^i\right\|^2 + \gamma_{t} \left\|\nabla \Fi{\vb{w}_{t, k}^i}{i}\right\|^2\right] + \\
        &\fake{\le} \alpha\gamma_{t}^2 \sum_{i = 1}^C p_{i} \left[2\left(\Fi{\vb{w}_{\star}}{i} - \Fi{\vb{w}_{t, k}^i}{i}\right) - \mu\left\| \vb{w}_{t, k}^i - \vb{w}_{\star} \right\|^2 \right] \\
        &= \alpha \gamma_{t} \sum_{i = 1}^C p_{i} \left\|\overbar{\vb{w}}_{t, k} - \vb{w}_{t, k}^i\right\|^2 + 2L\alpha\gamma_{t}^3 \sum_{i = 1}^C p_{i} \left(\Fi{\vb{w}_{t, k}^i}{i} - \Fi{\vb{w}_{\star}^i}{i}\right) + \\
        &\fake{\le} 2\alpha \gamma_{t}^2 \sum_{i = 1}^C p_{i} \left(\Fi{\vb{w}_{\star}}{i} - \Fi{\vb{w}_{t, k}^i}{i}\right) - \mu\alpha\gamma_{t}^2 D_{t, k}
    \end{align*}
    We bound term $a_4$ directly under expectation.
    \begin{align*}
        \expected \, a_4 &= \gamma_{t}^2 \, \expected \underbrace{\left\|\sum_{i = 1}^C p_{i} \nabla \Fi{\vb{w}_{t, k}^i}{i} - \gamma_{t} \sum_{i = 1}^C p_{i} \grad{\vb{w}_{t, k}^i}{i}\right\|^2}_{a_{41}} + \gamma_{t}^2 \, \expected \underbrace{\left\| \sum_{i = 1}^C p_{i} \nabla \Fi{\vb{w}_{t, k}^i}{i}\right\|^2}_{a_{42}}
    \end{align*}
    To bound term $\|a_{41}\|^2$, again, we apply Assumption \ref{ass:bounded_variance} to erase the dot products between terms in $a_{41}$ and bound the squared norms. Thus, we have
    \begin{align*}
        \expected \, \|a_{41}\|^2 &= \sum_{i = 1}^C p_{i}^2 \, \expected \, \left\|\grad{\vb{w}_{t, k}^i}{i} - \nabla \Fi{\vb{w}_{t, k}^i}{i}\right\|^2 \le \sigma^2 \sum_{i = 1}^C p_{i}^2
    \end{align*}
    Now, to bound term $\|a_{42}\|^2$, we utilize Jensen's inequality since $\|\cdot\|^2$ is convex.
    \begin{align*}
        \expected \, \|a_{42}\|^2 &= \left\|\sum_{i = 1}^C p_{i} \nabla \Fi{\vb{w}_{t, k}^i}{i}\right\|^2 \le \sum_{i = 1}^C p_{i} \, \expected \, \left\| \nabla \Fi{\vb{w}_{t, k}^i}{i}\right\|^2
    \end{align*}
    We combine these results to establish a bound for $a_4$. Finally we apply smoothness on $\| \nabla \Fi{\vb{w}_{t, k}^i}{i}\|^2$.
    \begin{align*}
        \expected \, a_4 &\le \gamma_{t}^2 \sigma^2 \sum_{i = 1}^C p_{i}^2 + \gamma_{t}^2 \sum_{i = 1}^C p_{i} \, \expected \, \left\| \nabla \Fi{\vb{w}_{t, k}^i}{i} \right\|^2 \\
        &\le \gamma_{t}^2 \sigma^2 \sum_{i = 1}^C p_{i}^2 + 2L\gamma_{t}^2 \sum_{i = 1}^C p_{i} \, \ev{\Fi{\vb{w}_{t, k}^i}{i} - \Fi{\vb{w}_{\star}^i}{i}}
    \end{align*}
    Combining all the bounds in the main equation under expectation leads to
    \begin{align*}
        \expected \, D_{t, k + 1} &\le \left( 1 - \gamma_{t}(\alpha + \mu) \right) \expected \, D_{t, k} + \alpha \gamma_{t} \, \expected \, D_{t, 0} + \gamma_{t}^2 \sigma^2 \sum_{i = 1}^C p_{i}^2 + \\
        &\fake{\le} (1 - \alpha \gamma_{t}) \sum_{i = 1}^C p_{i} \, \expected \, \left\|\overbar{\vb{w}}_{t, k} - \vb{w}_{t, k}^i\right\|^2 + \alpha \gamma_{t} \sum_{i = 1}^C p_{i} \, \expected \, \left\|\overbar{\vb{w}}_{t, 0} - \vb{w}_{t, k}^i\right\|^2 + \\
        &\fake{\le} \ev{\underbrace{4L\gamma_{t}^2 \sum_{i = 1}^C p_{i} \left(\Fi{\vb{w}_{t, k}^i}{i} - \Fi{\vb{w}_{\star}^i}{i}\right) + 2\gamma_{t} \sum_{i = 1}^C p_{i} \left(\Fi{\vb{w}_{\star}}{i} -  \Fi{\vb{w}_{t, k}^i}{i} \right)}_{b_1}}
    \end{align*}
    We introduce $(4L\gamma_{t}^2/C) \sum_{i = 1}^C  \Fi{\vb{w}_{\star}}{i}$ in term $b_1$ as
    \begin{align}
        b_1 &= -2\gamma_{t}(1 - 2 L \gamma_{t}) \sum_{i = 1}^C p_{i} \left(\Fi{\vb{w}_{t, k}^i}{i} - \Fi{\vb{w}_{\star}}{i}\right) + 4L\gamma_{t}^2 \sum_{i = 1}^C p_{i} \left(\Fi{\vb{w}_{\star}}{i} - \Fi{\vb{w}_{\star}^i}{i}\right) \\
        &= -2\gamma_{t}(1 - 2 L \gamma_{t}) \sum_{i = 1}^C p_{i} \left(\Fi{\vb{w}_{t, k}^i}{i} - \Fi{\vb{w}_{\star}}{i}\right) + 4\gamma_{t}^2L\Gamma  \label{eq:lemma_fedprox_progress_heterogeneity_substitution}
    \end{align}
    where, in \eqref{eq:lemma_fedprox_progress_heterogeneity_substitution}, we use $\sum_{i = 1}^C p_i \Fi{\cdot}{i} = \F{\cdot}$, and we exploit the Definition \ref{ass:statistical_heterogeneity} of statistical heterogeneity. Adding and subtracting $\Fi{\overbar{{\vb{w}}}_{t, k}}{i}$ in the summation from \eqref{eq:lemma_fedprox_progress_heterogeneity_substitution}, we have
    \begin{align}
        b_1 &= - 2\gamma_{t}(1 - 2 L \gamma_{t}) \sum_{i = 1}^C p_{i} \left(\Fi{\overbar{{\vb{w}}}_{t, k}}{i} - \Fi{\vb{w}_{\star}}{i}\right) + 4\gamma_{t}^2L\Gamma  + \\
        &\fake{=} -2\gamma_{t}(1 - 2 L \gamma_{t}) \sum_{i = 1}^C p_{i} \left(\Fi{\vb{w}_{t, k}^i}{i} - \Fi{\overbar{{\vb{w}}}_{t, k}}{i}\right) \\
        &= - 2\gamma_{t}(1 - 2 L \gamma_{t}) \left(\F{\overbar{{\vb{w}}}_{t, k}} - \F{\vb{w}_{\star}}\right) + 4\gamma_{t}^2L\Gamma  + \\
        &\fake{=} 2\gamma_{t}(1 - 2 L \gamma_{t}) \sum_{i = 1}^C p_{i} \left(\Fi{\overbar{{\vb{w}}}_{t, k}}{i} - \Fi{\vb{w}_{t, k}^i}{i}\right) \\
        &\le - 2\gamma_{t}(1 - 2 L \gamma_{t}) \left(\F{\overbar{{\vb{w}}}_{t, k}} - \F{\vb{w}_{\star}}\right) + 4\gamma_{t}^2L\Gamma  + \\
        &\fake{=} 2\gamma_{t}(1 - 2 L \gamma_{t}) \sum_{i = 1}^C p_{i} \nabla \Fi{\overbar{{\vb{w}}}_{t, k}}{i}^{\transpose}\left(\overbar{{\vb{w}}}_{t, k} - {\vb{w}}_{t, k}^i\right) \label{eq:lemma_fedprox_progress_convexity_usage} \\
        &\le - 2\gamma_{t}(1 - 2 L \gamma_{t}) \left(\F{\overbar{{\vb{w}}}_{t, k}} - \F{\vb{w}_{\star}}\right) + 4\gamma_{t}^2L\Gamma  + \\
        &\fake{=} 2\gamma_{t}(1 - 2 L \gamma_{t}) \sum_{i = 1}^C p_{i} \left[\frac{\gamma_{t}}{2} \left\|\nabla \Fi{\overbar{{\vb{w}}}_{t, k}}{i}\right\|^2 + \frac{1}{2\gamma_{t}} \left\|\overbar{{\vb{w}}}_{t, k} - {\vb{w}}_{t, k}^i\right\|^2 \right] \label{eq:lemma_fedprox_progress_peter_paul_in_B} \\
        &\le - 2\gamma_{t}(1 - 2 L \gamma_{t}) \left(\F{\overbar{{\vb{w}}}_{t, k}} - \F{\vb{w}_{\star}}\right) + 4\gamma_{t}^2L\Gamma  + \\
        &\fake{=} 2\gamma_{t}(1 - 2 L \gamma_{t}) \sum_{i = 1}^C p_{i} \left[L\gamma_{t}\left(\Fi{\overbar{{\vb{w}}}_{t, k}}{i} - \Fi{\vb{w}_{\star}^i}{i}\right) + \frac{1}{2\gamma_{t}} \left\|\overbar{{\vb{w}}}_{t, k} - {\vb{w}}_{t, k}^i\right\|^2 \right] \label{eq:lemma_fedprox_progress_smoothness_in_B}
    \end{align}
    We use convexity in \eqref{eq:lemma_fedprox_progress_convexity_usage}, Peter-Paul's inequality in \eqref{eq:lemma_fedprox_progress_peter_paul_in_B}, and smoothness of $\|\nabla \Fi{\overbar{{\vb{w}}}_{t, k}}{i}\|^2$ in \eqref{eq:lemma_fedprox_progress_smoothness_in_B}.
    \begin{align}
        b_1 &\le -2\gamma_{t}(1 - 2L\gamma_{t})(1 - L\gamma_{t}) \left(\F{\overbar{{\vb{w}}}_{t, k}} - \F{\vb{w}_{\star}}\right) + 4\gamma_{t}^2L\Gamma +  \\
        &\fake{\le} 2L\gamma_{t}^2(1 - 2L\gamma_{t})\left(\F{\vb{w}_{\star}} - \sum_{i = 1}^C p_{i} \Fi{\vb{w}_{\star}^i}{i}\right) + \sum_{i = 1}^C p_{i} \left\|\overbar{{\vb{w}}}_{t, k} - {\vb{w}}_{t, k}^i\right\|^2 \label{eq:lemma_fedprox_progress_1_2L_approx}  \\
        &= -2\gamma_{t}(1 - 2L\gamma_{t})(1 - L\gamma_{t}) \left(\F{\overbar{{\vb{w}}}_{t, k}} - \F{\vb{w}_{\star}}\right) + 2L\Gamma\gamma_{t}^2(3 - 2L\gamma_{t}) + \\
        &\fake{=} \sum_{i = 1}^C p_{i} \left\|\overbar{{\vb{w}}}_{t, k} - {\vb{w}}_{t, k}^i\right\|^2 \label{eq:lemma_fedprox_progress_heterogeneity_substitution_2} \\
        &\le 6L\Gamma\gamma_{t}^2 + \sum_{i = 1}^C p_{i} \left\|\overbar{{\vb{w}}}_{t, k} - {\vb{w}}_{t, k}^i\right\|^2 \label{eq:lemma_fedprox_progress_bound_B_end}
    \end{align}
    In expression \eqref{eq:lemma_fedprox_progress_1_2L_approx}, we recall that $1 - 2L\gamma_{t} \le 1$, and we note that $(1 - 2L\gamma_{t})(1 - L\gamma_{t}) \ge 0$ in equation \eqref{eq:lemma_fedprox_progress_bound_B_end}. These facts follow from Assumption \ref{lemma:one_step_progress_step_decay} on the step size. Eventually, we reuse the definition of heterogeneity in \eqref{eq:lemma_fedprox_progress_heterogeneity_substitution_2}. Therefore, we replace $b_1$ into our main bound.
    \begin{align*}
        \expected \, D_{t, k + 1} &\le \left( 1 - \gamma_{t}(\alpha + \mu) \right) \, \expected \, D_{t, k} + \alpha \gamma_{t} \, \expected \,  D_{t, 0} + \frac{\gamma_{t}^2 \sigma^2}{C} + 6L\Gamma\gamma_{t}^2 + \\
        &\fake{\le} (2 - \alpha \gamma_{t}) \sum_{i = 1}^C p_{i} \, \expected \, \left\|\overbar{\vb{w}}_{t, k} - \vb{w}_{t, k}^i\right\|^2 + \alpha \gamma_{t} \sum_{i = 1}^C p_{i} \, \expected \, \left\|\overbar{\vb{w}}_{t, 0} - \vb{w}_{t, k}^i\right\|^2
    \end{align*}
    Now, we use lemmas \ref{lemma:single_round_local_deviation} and \ref{lemma:single_round_local_divergence} to bound our main term, and we approximate $8 - 3\alpha\gamma_{t} \le 8$.
    \begin{align*}
        \expected \, D_{t, k + 1} &\le
        \left( 1 - \gamma_{t}(\alpha + \mu) \right) \expected \, D_{t, k} + \alpha \gamma_{t} \, \expected \, D_{t, 0} + \gamma_{t}^2 \sigma^2 \sum_{i = 1}^C p_{i}^2 + 6L\Gamma\gamma_{t}^2 + (8 - 3\alpha\gamma_{t} ) \gamma_{t}^2 E^2 G^2 \\
        &\le \left(1 - \gamma_{t}(\alpha + \mu) \right) \expected \, D_{t, k} + \alpha \gamma_{t} \, \expected \, D_{t, 0} + \gamma_{t}^2 \sigma^2 \sum_{i = 1}^C p_{i}^2 + 6L\Gamma\gamma_{t}^2 + 8 \gamma_{t}^2 E^2 G^2
    \end{align*}
    We are interested in relating $\expected \, D_{t + 1, 0} = \expected \, D_{t, E}$ to $\expected \, D_{t, 0}$. Accordingly, we define the constants
    \begin{align*}
        a &= \left( 1 - \gamma_{t}(\alpha + \mu) \right) \\
        b &= \alpha\gamma_{t} \\
        c &= \gamma_{t}^2 \sigma^2 \sum_{i = 1}^C p_{i}^2 + 6L\Gamma\gamma_{t}^2 + 8 \gamma_{t}^2 E^2 G^2
    \end{align*}
    Using parameters $a$, $b$ and $c$, we have an expression of the form \eqref{eq:recursion_one_progress_folded}. The application of recursion leads to the following result. We use a coarser approximation for the summation that multiplies $c$ to preserve factor $\gamma_{t}^2$ within term $c$.
    \begin{align}
        \expected \, D_{t, k + 1} &\le a^{k + 1} \, \expected \, D_{t, 0} + \left(b \sum_{m = 0}^{k} a^m\right) \expected \, D_{t, 0} + c \sum_{m = 0}^{k} a^m \\
        &\le \left[a^{k + 1} + b \frac{1 - a^{k + 1}}{1 - a}\right] \expected \,  D_{t, 0} + c (k + 1) \\
        &= \frac{b + (1 - a - b)a^{k + 1}}{1 - a} \, \expected \, D_{t, 0} + c (k + 1)
    \end{align}
    For the sake of our proof, we replace $k + 1 = E$ since we are interested in one round.
    \begin{align*}
        \expected \, D_{t, E} \le \frac{\alpha + \mu \left( 1 - \gamma_{t}(\alpha + \mu) \right)^E }{\alpha + \mu} \, \expected \, D_{t, 0} + cE
    \end{align*}
    We attain our expected result when substituting $c$ and taking total expectation.
\end{proof}

Here, we present the convergence guarantees of \textsc{FedProx} in case of full participation. Note that by setting proximal parameter $\alpha = 0$ in the following, we recover the optimality gap of \textsc{FedAvg}.

\ConvergenceFedproxStronglyConvex*

\begin{proof}
    To prove the statement, we first apply the principle of recursion on the result of Lemma \ref{lemma:single_round_global_progress} in equation \eqref{eq:convergence_fedprox_constant_step_size_lemma_replacement}. Namely, if we denote $\expected \, \left\|\overbar{\vb{w}}_{t, 0} - \vb{w}_{\star}\right\|^2$ as $D_{t}$, we have
    \begin{align}
        D_{t} &\le \kappa D_{t - 1} + A \label{eq:convergence_fedprox_constant_step_size_lemma_replacement} \\
        &\le \kappa  \left(\kappa D_{t - 2} + A\right) + A \\
        &\ldots \\
        &\le \kappa^{t} D_{0} + A \sum_{m = 0}^{t - 1} \kappa^{m} \\
        &\le \kappa^{t} D_{0} + A \sum_{m = 0}^{t - 1} \frac{1 - \kappa^{t}}{1 - \kappa} \\
        &\le \kappa^{t} D_{0} + A \frac{1 - \kappa^{t}}{1 - (1 - \gamma\mu)} \label{eq:convergence_fedprox_constant_step_size_contraction_replacement} \\
        &\le \kappa^{t} \left(D_{0} - \frac{A}{\gamma\mu}\right) + \frac{A}{\gamma\mu}
    \end{align}
    where we use the coarser but simpler approximation $\kappa \le 1 - \gamma\mu$ in equation \eqref{eq:convergence_fedprox_constant_step_size_contraction_replacement}. Finally, under total expectation, we invoke smoothness for $\F{\overbar{\vb{w}}_{t, 0}}$.
    \begin{align*}
        \expected \, \F{\overbar{\vb{w}}_{t, 0}} - \F{\vb{w}_{\star}} &\le \frac{L}{2} D_{t} \le \frac{L}{2}\kappa^{t} \left(D_{0} -  \frac{A}{\gamma\mu}\right) + \frac{L}{2\gamma\mu}A
    \end{align*}
    In addition, using strong convexity, we have $D_{0} \le 2\left(\F{\overbar{\vb{w}}_{0, 0}} - \F{\vb{w}_{\star}}\right)/\mu$. We now bound term $\kappa$ by replacing step size $\gamma$ with its definition.
    \begin{align}
        \kappa &= \frac{\alpha + \mu \left( 1 - (\alpha + \mu)/(2E (\alpha + L)) \right)^E }{\alpha + \mu} \\
            &= \frac{\alpha + \mu \left[\left( 1 - (\alpha + \mu)/(2E (\alpha + L)) \right)^{-2E(\alpha + L) / (\alpha + \mu)}\right]^{-\frac{\alpha + \mu}{2(\alpha + L)}} }{\alpha + \mu} \\
            &\le \frac{\alpha + \mu e^{-\frac{\alpha + \mu}{2(\alpha + L)}}}{\alpha + \mu} \label{eq:convergence_fedprox_constant_step_size_e_bound} \\
            &\le 1 - \frac{\mu}{3\alpha + 2L + \mu} \label{eq:convergence_fedprox_constant_step_size_e_bound_2}
    \end{align}
    We use fact $(1 + 1/x)^x \le e$ in \eqref{eq:convergence_fedprox_constant_step_size_e_bound}, and $e^{-x} \le 1/(x + 1)$ for any $x > -1$ in \eqref{eq:convergence_fedprox_constant_step_size_e_bound_2}. We further notice that $3\alpha + 2L + \mu \le 3(\alpha + L)$, and we discard the negative term $-A/(\gamma \mu)$ within the bound.
\end{proof}

\subsection{Main results for nonconvex analysis}

As we already did for strongly convex analysis, we begin by formulating the global progress made in a single round in a nonconvex scenario. In contrast to convex analysis, we state the convergence rate in terms of the average of squared gradients computed in each iteration.

\begin{lemma}[Single round progress of \textsc{FedProx} for nonconvex loss]
    \label{lemma:fedprox_nonconvex_single_round_global_progress}
    Assume that
    \begin{align}
        \gamma_{t} \le \min\left\{\frac{1}{L}, \frac{1}{2\alpha}\right\}
        \label{ass:fedprox_nonconvex_single_round_global_progress_step_size_assumption}
    \end{align}
    and Assumptions \ref{ass:full_participation} to \ref{ass:smoothness} and \ref{ass:lower_bounded_objective} hold. Then the global progress in a round  satisfies
    \begin{align}
        \frac{1}{E} \sum_{k = 0}^{E - 1} \expected \, \left\|\nabla \F{\overbar{\vb{w}}_{t, k}}\right\|^2 \le \frac{4}{\gamma_{t} E} \, \ev{\F{\overbar{\vb{w}}_{t, 0}} - \F{\overbar{\vb{w}}_{t + 1, 0}}} + A
    \end{align}
    where we define $A = 2\gamma_{t} L \sigma^2 \sum_{i = 1}^C p_i^2 + 2 \gamma_{t} \alpha E^2 G^2 + 8 \gamma_{t}^2 L^2 E^2 G^2 + \dfrac{\gamma_{t}^3 \alpha^2 L E^2 G^2}{2}$.
\end{lemma}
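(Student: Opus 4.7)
The plan is to establish a one-step descent inequality of the form
\begin{align*}
    \expected \, \F{\overbar{\vb{w}}_{t, k+1}} - \expected \, \F{\overbar{\vb{w}}_{t, k}} \le -\frac{\gamma_t}{4} \, \expected \, \left\|\nabla \F{\overbar{\vb{w}}_{t, k}}\right\|^2 + R_k,
\end{align*}
with $R_k$ collecting deviation-, divergence-, and variance-related residuals, and then telescope it over $k = 0, \ldots, E - 1$ and divide through by $\gamma_t E / 4$. To derive the per-step bound, I start from $L$-smoothness of $f$, substitute the averaged \textsc{FedProx} recursion \eqref{eq:fed_prox_update_rule} into both the inner product and the quadratic piece of the smoothness expansion, and take conditional expectation: by Assumption~\ref{ass:bounded_variance} the stochastic noise zeros out of the linear part and leaves a variance tail $\gamma_t^2 \sigma^2 \sum_i p_i^2$ inside the quadratic part.

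The next move is to rewrite $\sum_i p_i \nabla \Fi{\vb{w}_{t,k}^i}{i} = \nabla \F{\overbar{\vb{w}}_{t,k}} + \sum_i p_i [\nabla \Fi{\vb{w}_{t,k}^i}{i} - \nabla \Fi{\overbar{\vb{w}}_{t,k}}{i}]$ so that the inner product produces the principal descent $-\gamma_t \|\nabla \F{\overbar{\vb{w}}_{t,k}}\|^2$ together with two residual inner products: one pairing the global gradient with $\delta_k \eqdef \overbar{\vb{w}}_{t,k} - \overbar{\vb{w}}_{t,0}$ (carrying the proximal factor $\alpha \gamma_t$), and one pairing it with the client divergence. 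Both are absorbed by Young's inequality with parameters chosen to give back only a fraction of the negative budget, using $\alpha \gamma_t \le 1/2$ to control the proximal residual and $L$-smoothness of each $f_i$ to convert the divergence residual into $\gamma_t L^2 \sum_i p_i \|\vb{w}_{t,k}^i - \overbar{\vb{w}}_{t,k}\|^2$. The quadratic term $(L/2) \|\overbar{\vb{w}}_{t,k+1} - \overbar{\vb{w}}_{t,k}\|^2$ is then bounded by the splitting $\|a + b\|^2 \le 2\|a\|^2 + 2\|b\|^2$, yielding a proximal piece $L \alpha^2 \gamma_t^2 \|\delta_k\|^2$, a gradient-norm piece that is absorbed into the descent thanks to $\gamma_t \le 1/L$, a further local divergence contribution, and the variance tail already isolated above.

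It remains to control the residuals uniformly in $k$. Jensen's inequality applied to $\overbar{\vb{w}}_{t,k} = \sum_i p_i \vb{w}_{t,k}^i$ together with Lemma~\ref{lemma:single_round_local_deviation} gives $\expected \, \|\delta_k\|^2 \le \gamma_t^2 E^2 G^2$, while Lemma~\ref{lemma:single_round_local_divergence} directly yields $\expected \sum_i p_i \|\vb{w}_{t,k}^i - \overbar{\vb{w}}_{t,k}\|^2 \le 4 \gamma_t^2 E^2 G^2$. Summing the per-step inequality over $k = 0, \ldots, E-1$ telescopes the left-hand side to $\ev{\F{\overbar{\vb{w}}_{t,0}} - \F{\overbar{\vb{w}}_{t+1,0}}}$, and dividing through by $\gamma_t E / 4$ produces the factor $4/(\gamma_t E)$ in front of the function gap and rescales the four residual blocks into the four summands of $A$. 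The main obstacle is the bookkeeping in the middle step: the Young's parameters and the expansion of the smoothness quadratic must be calibrated so that, under the stated condition $\gamma_t \le \min\{1/L, 1/(2\alpha)\}$, the net gradient-norm coefficient collapses exactly to $-\gamma_t/4$ while the residual coefficients on $\|\delta_k\|^2$ and on the client divergence match, after the final rescaling by $4/(\gamma_t E)$, the precise constants ($2 \gamma_t L$, $2 \gamma_t \alpha$, $8 \gamma_t^2 L^2$, and $\gamma_t^3 \alpha^2 L / 2$) that appear in $A$.
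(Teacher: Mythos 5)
Your overall skeleton matches the paper's proof: smoothness expansion, substitution of the averaged recursion \eqref{eq:fed_prox_update_rule}, elimination of the noise cross term by Assumption~\ref{ass:bounded_variance}, control of the proximal drift $\|\overbar{\vb{w}}_{t,k}-\overbar{\vb{w}}_{t,0}\|^2$ via Jensen and Lemma~\ref{lemma:single_round_local_deviation}, control of the client divergence via Lemma~\ref{lemma:single_round_local_divergence} after converting gradient differences with per-client smoothness, and a final telescoping over $k$ followed by division by $\gamma_t E/4$. All of that is exactly what the paper does.

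However, there is a genuine gap in the one step you yourself flag as the ``main obstacle,'' and as described your calibration cannot close. You propose to write $\sum_i p_i \nabla \Fi{\vb{w}_{t,k}^i}{i} = \nabla \F{\overbar{\vb{w}}_{t,k}} + r$ and apply Young to the residual inner product, which leaves you with at best $-c\,\gamma_t \left\|\nabla \F{\overbar{\vb{w}}_{t,k}}\right\|^2$ for some $c<1$ and \emph{no} retained negative multiple of $\left\|\sum_i p_i \nabla \Fi{\vb{w}_{t,k}^i}{i}\right\|^2$. The smoothness quadratic then contributes a positive term of order $L\gamma_t^2 \left\|\sum_i p_i \nabla \Fi{\vb{w}_{t,k}^i}{i}\right\|^2$, and the only way to absorb it in your scheme is to re-expand it as at most $2L\gamma_t^2\left\|\nabla \F{\overbar{\vb{w}}_{t,k}}\right\|^2 + 2L\gamma_t^2\|r\|^2$; under the stated condition $\gamma_t \le 1/L$ the first piece is as large as $2\gamma_t\left\|\nabla \F{\overbar{\vb{w}}_{t,k}}\right\|^2$, which exceeds the entire negative budget and flips the sign of the gradient coefficient. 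Closing the argument this way would force a strictly tighter step size (roughly $\gamma_t \lesssim 1/(4L)$), contradicting the lemma's hypothesis. The paper avoids this by applying the exact polarization identity $-2\vb{u}^{\transpose}\vb{v} = \|\vb{u}-\vb{v}\|^2 - \|\vb{u}\|^2 - \|\vb{v}\|^2$ to the inner product, which yields $-\tfrac{\gamma_t}{2}\left\|\nabla \F{\overbar{\vb{w}}_{t,k}}\right\|^2 - \tfrac{\gamma_t}{2}\left\|\sum_i p_i \nabla \Fi{\vb{w}_{t,k}^i}{i}\right\|^2 + \tfrac{\gamma_t}{2}\|r\|^2$; the retained second term exactly cancels the quadratic's gradient piece as $-\tfrac{\gamma_t(1-L\gamma_t)}{2}\left\|\sum_i p_i\nabla \Fi{\vb{w}_{t,k}^i}{i}\right\|^2 \le 0$ under $\gamma_t\le 1/L$, and the remaining $+\tfrac{\alpha\gamma_t^2}{2}\left\|\nabla \F{\overbar{\vb{w}}_{t,k}}\right\|^2$ from the proximal inner product is what degrades $-\gamma_t/2$ to $-\gamma_t(1-\alpha\gamma_t)/2 \le -\gamma_t/4$ using $\alpha\gamma_t\le 1/2$. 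You need this identity (or an equivalent exact completion of the square), not a generic Young step, to obtain the stated constants.
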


\begin{proof}
    Since the only property that we can exploit in the nonconvex analysis is smoothness, we apply its first-order characterization on iterates $\overbar{\vb{w}}_{t, k + 1}$ and $\overbar{\vb{w}}_{t, k}$.
    \begin{align}
        \F{\overbar{\vb{w}}_{t, k + 1}} - \F{\overbar{\vb{w}}_{t, k}} \le \nabla \F{\overbar{\vb{w}}_{t, k}}^{\transpose}\left(\overbar{\vb{w}}_{t, k + 1} - \overbar{\vb{w}}_{t, k}\right) + \frac{L}{2} \left\|\overbar{\vb{w}}_{t, k + 1} - \overbar{\vb{w}}_{t, k}\right\|^2
    \end{align}
    Leveraging the definition of the update rule, we have
    \begin{align}
        \overbar{\vb{w}}_{t, k + 1} - \overbar{\vb{w}}_{t, k} = \alpha\gamma_{t} \left(\overbar{\vb{w}}_{t, 0} - \overbar{\vb{w}}_{t, k} \right) - \gamma_{t} \sum_{i = 1}^C p_i \grad{\vb{w}_{t, k}^i}{i}
    \end{align}
    which we substitute to obtain
    \begin{align}
        \F{\overbar{\vb{w}}_{t, k + 1}} - \F{\overbar{\vb{w}}_{t, k}} &\le \underbrace{\alpha\gamma_{t} \nabla \F{\overbar{\vb{w}}_{t, k}}^{\transpose} \left(\overbar{\vb{w}}_{t, 0} - \overbar{\vb{w}}_{t, k} \right)}_{a_1} + \\
        &\fake{\le} \underbrace{-\gamma_{t} \left[ \sum_{i = 1}^C p_i \nabla \Fi{\vb{w}_{t, k}^i}{i} \right]^{\transpose} \nabla \F{\overbar{\vb{w}}_{t, k}}}_{a_2} + \\
        &\fake{\le} \underbrace{-\gamma_{t} \left[ \sum_{i = 1}^C p_i \left(\grad{\vb{w}_{t, k}^i}{i} - \nabla \Fi{\vb{w}_{t, k}^i}{i}\right) \right]^{\transpose} \nabla \F{\overbar{\vb{w}}_{t, k}}}_{\widetilde{a}_2} + \\
        &\fake{\le} \underbrace{\frac{L}{2}\left\|\alpha\gamma_{t} \left(\overbar{\vb{w}}_{t, 0} - \overbar{\vb{w}}_{t, k} \right) - \gamma_{t} \sum_{i = 1}^C p_i \grad{\vb{w}_{t, k}^i}{i}\right\|^2}_{a_3}
    \end{align}
    We add and subtract $ \sum_{i = 1}^C p_i \nabla \Fi{\vb{w}_{t, k}^i}{i}$ in terms $\widetilde{a}_2$ and $a_2$. When taking expectation over the previous expression, term $\widetilde{a}_2$ is erased because of Assumption \ref{ass:bounded_variance}. We use Peter-Paul's inequality to bound $a_1$.
    \begin{align}
        a_1 &\le \frac{\alpha \gamma_{t}^2}{2} \left\|\nabla \F{\overbar{\vb{w}}_{t, k}}\right\|^2 + \frac{\alpha}{2}\left\|\overbar{\vb{w}}_{t, 0} - \overbar{\vb{w}}_{t, k}\right\|^2
    \end{align}
    Leveraging the law $2\vb{u}^{\transpose}\vb{v} = \|\vb{u}\|^2 + \|\vb{v}\|^2 - \|\vb{u} - \vb{v}\|^2$, we rewrite $a_2$ as follows.
    \begin{align}
        a_2 &= -\gamma_{t} \left[ \sum_{i = 1}^C p_i \nabla \Fi{\vb{w}_{t, k}^i}{i} \right]^{\transpose} \nabla \F{\overbar{\vb{w}}_{t, k}} \\
        &\le \frac{\gamma_{t}}{2}\left\|\sum_{i = 1}^C p_i \nabla \Fi{\vb{w}_{t, k}^i}{i} - \nabla \F{\overbar{\vb{w}}_{t, k}}\right\|^2 + \frac{\gamma_{t}}{2}\left\|\sum_{i = 1}^C p_i \nabla \Fi{\vb{w}_{t, k}^i}{i}\right\|^2 \\
        &\fake{\le} -\frac{\gamma_{t}}{2} \left\|\nabla \F{\overbar{\vb{w}}_{t, k}}\right\|^2 \\
        &= \frac{\gamma_{t}}{2}\left\|\sum_{i = 1}^C p_i \left( \nabla \Fi{\vb{w}_{t, k}^i}{i} - \nabla \Fi{\overbar{\vb{w}}_{t, k}}{i}\right)\right\|^2 - \frac{\gamma_{t}}{2}\left\|\sum_{i = 1}^C p_i \nabla \Fi{\vb{w}_{t, k}^i}{i}\right\|^2 + \\
        &\fake{\le} -\frac{\gamma_{t}}{2} \left\|\nabla \F{\overbar{\vb{w}}_{t, k}}\right\|^2 \label{eq:lemma_fedprox_progress_non_convex_average_gradient_decomposition} \\
        &\le \frac{\gamma_{t}}{2}\sum_{i = 1}^C p_i \left\|\nabla \Fi{\vb{w}_{t, k}^i}{i} - \nabla \Fi{\overbar{\vb{w}}_{t, k}}{i}\right\|^2 - \frac{\gamma_{t}}{2}\left\|\sum_{i = 1}^C p_i \nabla \Fi{\vb{w}_{t, k}^i}{i}\right\|^2 + \\
        &\fake{\le} -\frac{\gamma_{t}}{2} \left\|\nabla \F{\overbar{\vb{w}}_{t, k}}\right\|^2 \label{eq:lemma_fedprox_progress_non_convex_average_gradient_jensen} \\
        &\le \frac{\gamma_{t} L^2}{2}\sum_{i = 1}^C p_i \left\|\vb{w}_{t, k}^i - \overbar{\vb{w}}_{t, k}\right\|^2 - \frac{\gamma_{t}}{2}\left\|\sum_{i = 1}^C p_i \nabla \Fi{\vb{w}_{t, k}^i}{i}\right\|^2 + \\
        &\fake{\le} - \frac{\gamma_{t}}{2} \left\|\nabla \F{\overbar{\vb{w}}_{t, k}}\right\|^2
        \label{eq:lemma_fedprox_progress_non_convex_lipschitz_gradient} 
    \end{align}
    We consider that $\nabla \F{\cdot} = \sum_{i = 1}^C p_i \nabla \Fi{\cdot}{i}$ in \eqref{eq:lemma_fedprox_progress_non_convex_average_gradient_decomposition}, and we leverage Jensen's inequality in \eqref{eq:lemma_fedprox_progress_non_convex_average_gradient_jensen}.
    Additionally, we use the Lipschitz gradient property in equation \eqref{eq:lemma_fedprox_progress_non_convex_lipschitz_gradient} due to the smoothness of the objectives.
    We bound $a_3$ under expectation by applying Peter-Paul's inequality. We use the same strategy from the proof of Lemma \ref{lemma:single_round_global_progress} to bound the squared sum of local stochastic gradients. On the other hand, $a_3$ is bounded in expectation.
    \begin{align}
        \expected \, a_3 &\le \frac{\gamma_{t}^2 L}{2} \, \expected \, \left\| \sum_{i = 1}^C p_i \grad{\vb{w}_{t, k}^i}{i}\right\|^2 + \frac{\gamma_{t}^2 \alpha^2 L}{8} \, \expected \, \left\|\overbar{\vb{w}}_{t, 0} - \overbar{\vb{w}}_{t, k}\right\|^2 \\
        &=  \frac{\gamma_{t}^2 L}{2} \, \expected \, \left\| \sum_{i = 1}^C p_i \left(\grad{\vb{w}_{t, k}^i}{i} - \Fi{\vb{w}_{t, k}^i}{i}\right) + \sum_{i = 1}^C p_i \Fi{\vb{w}_{t, k}^i}{i}\right\|^2 + \\ &\fake{\le} \frac{\gamma_{t}^2 \alpha^2 L}{8} \, \expected \, \left\|\overbar{\vb{w}}_{t, 0} - \overbar{\vb{w}}_{t, k}\right\|^2 \\
        &= \frac{\gamma_{t}^2 L}{2} \, \ev{\left\| \sum_{i = 1}^C p_i \left(\grad{\vb{w}_{t, k}^i}{i} - \Fi{\vb{w}_{t, k}^i}{i}\right) \right\|^2 + \left\| \sum_{i = 1}^C p_i \Fi{\vb{w}_{t, k}^i}{i}\right\|^2} + \\ &\fake{\le} \frac{\gamma_{t}^2 \alpha^2 L}{8} \, \expected \, \left\|\overbar{\vb{w}}_{t, 0} - \overbar{\vb{w}}_{t, k}\right\|^2 \\
        &=  \frac{\gamma_{t}^2 L}{2} \, \ev{\sum_{i = 1}^C p_i^2 \left\| \grad{\vb{w}_{t, k}^i}{i} - \Fi{\vb{w}_{t, k}^i}{i} \right\|^2 + \left\| \sum_{i = 1}^C p_i \Fi{\vb{w}_{t, k}^i}{i}\right\|^2} + \\ &\fake{\le} \frac{\gamma_{t}^2 \alpha^2 L}{8} \, \expected \, \left\|\overbar{\vb{w}}_{t, 0} - \overbar{\vb{w}}_{t, k}\right\|^2 \\
        &\le  \frac{\gamma_{t}^2 L \sigma^2}{2} \sum_{i = 1}^C p_i^2 + \frac{\gamma_{t}^2 L}{2} \, \expected  \left\| \sum_{i = 1}^C p_i \nabla \Fi{\vb{w}_{t, k}^i}{i} \right\|^2 + \frac{\gamma_{t}^2 \alpha^2 L}{8} \, \expected \, \left\|\overbar{\vb{w}}_{t, 0} - \overbar{\vb{w}}_{t, k}\right\|^2
    \end{align}
    Under expectation, we combine all the bounds in the main expression.
    \begin{align}
        \ev{\F{\overbar{\vb{w}}_{t, k + 1}} - \F{\overbar{\vb{w}}_{t, k}}} &\le -\frac{\gamma_{t}(1 - L\gamma_{t})}{2} \, \expected \, \left\| \sum_{i = 1}^C p_i \nabla \Fi{\vb{w}_{t, k}^i}{i} \right\|^2 + \\
        &\fake{\le} - \frac{\gamma_{t}(1 - \alpha\gamma_{t})}{2}\, \expected \, \left\|\nabla \F{\overbar{\vb{w}}_{t, k}}\right\|^2 + \\
        &\fake{\le} \underbrace{\frac{\alpha}{2}\left(1 + \frac{\alpha L \gamma_{t}^2}{4} \right) \expected \, \left\|\overbar{\vb{w}}_{t, 0} - \overbar{\vb{w}}_{t, k}\right\|^2}_{b_1} + \\
        &\fake{\le} \underbrace{\frac{\gamma_{t} L^2}{2} \sum_{i = 1}^C p_i \, \expected \, \left\|\overbar{\vb{w}}_{t, k} - \vb{w}_{t, k}^i\right\|^2}_{b_2} + \frac{\gamma_{t}^2 L \sigma^2}{2} \sum_{i = 1}^C p_i^2
    \end{align}
    Due to Assumption \eqref{ass:fedprox_nonconvex_single_round_global_progress_step_size_assumption}, we have that $-\gamma_{t}(1 - L\gamma_{t}) \le 0$, and $-\gamma_{t}(1 - \alpha\gamma_{t}) \le -1/2$. Furthermore, to bound $b_1$, we apply Jensen's inequality on $\|\cdot\|^2$, and we use Lemma \ref{lemma:single_round_local_deviation}.
    \begin{align}
        b_1 &= \frac{\alpha}{2}\left(1 + \frac{\gamma_{t}^2 \alpha L}{4} \right) \expected \, \left\|\overbar{\vb{w}}_{t, 0} - \sum_{i = 1}^C p_i \vb{w}_{t, k}^i\right\|^2 \\
        &\le \frac{\alpha}{2}\left(1 + \frac{\gamma_{t}^2 \alpha L}{4} \right) \sum_{i = 1}^C p_i \, \expected \, \left\|\overbar{\vb{w}}_{t, 0} - \vb{w}_{t, k}^i\right\|^2 \\
        &\le \frac{\gamma_{t}^2 \alpha  E^2 G^2}{2}\left(1 + \frac{\gamma_{t}^2 \alpha L}{4} \right)
    \end{align}
    Concerning $b_2$, we use Lemma \ref{lemma:single_round_local_divergence}, therefore $b_2 \le 2\gamma_{t}^3 L^2 E^2 G^2$. Eventually, we attain
    \begin{align}
        \ev{\F{\overbar{\vb{w}}_{t, k + 1}} - \F{\overbar{\vb{w}}_{t, k}}} &\le 
        \underbrace{\frac{\alpha \gamma_{t}^2 E^2 G^2}{2}\left(1 + \frac{\alpha L \gamma_{t}^2}{4} \right) + 2\gamma_{t}^3 L^2 E^2 G^2 + \frac{\gamma_{t}^2 L \sigma^2}{2} \sum_{i = 1}^C p_i^2}_{c} + \\
        &\fake{\le} - \frac{\gamma_{t}}{4} \, \expected \, \left\|\nabla \F{\overbar{\vb{w}}_{t, k}}\right\|^2
    \end{align}
    Therefore, we swap the terms and sum over $k$ from $0$ to $E - 1$.
    \begin{align}
        \sum_{k = 0}^{E - 1} \, \expected \, \left\|\nabla \F{\overbar{\vb{w}}_{t, k}}\right\|^2 \le \frac{4}{\gamma_{t}}\, \ev{\F{\overbar{\vb{w}}_{t, 0}} - \F{\overbar{\vb{w}}_{t, E}} } + \frac{4Ec}{\gamma_{t}}
    \end{align}
    We highlight that $\overbar{\vb{w}}_{t, E} \equiv \overbar{\vb{w}}_{t + 1, 0}$. Dividing by $E$ (local steps) concludes our proof.
\end{proof}

The following lemma presents the general convergence guarantee when adopting a fixed step size. As we can observe, choosing the latter is fundamental to balancing the magnitude of the two additive terms in the bound.

\ConvergenceFedproxNonConvex*

\begin{proof}
    In the first place, using fixed step size $\gamma$, we leverage the result of lemma \ref{lemma:fedprox_nonconvex_single_round_global_progress} by summing both sides for $t = 0, 1, \ldots, T - 1$ and dividing by $T$. Then, we use Assumption \ref{ass:lower_bounded_objective} to state that $\F{\overbar{\vb{w}}_{0, 0}} - \F{\overbar{\vb{w}}_{T, 0}} \le \F{\overbar{\vb{w}}_{0, 0}} - \Finfimum$. Finally, we only replace $\gamma$ with the chosen step size. The definition of $\widehat{\vb{w}}_{T}$ ensures that
        \begin{align}
            \expected \, \left\|\nabla \F{\widehat{\vb{w}}_{T}}\right\|^2 &= \frac{1}{T E} \sum_{t = 0}^{T - 1} \sum_{k = 0}^{E - 1} \expected \, \left\|\nabla \F{\overbar{\vb{w}}_{t, k}}\right\|^2
        \end{align}
    which concludes the proof.
\end{proof}

\section{Proof of Theorem \ref{theorem:convergence_our_algorithm_strongly_convex} and \ref{theorem:convergence_our_algorithm_nonconvex}}
\label{appendix:analysis_of_our_algorithm}

We include in this appendix all the results and missing proofs related to the study of our algorithm.

\subsection{Preliminary results}

This first technical fact will support us in stating future claims on our algorithm.

\OurAlgorithmAveragePerturbedIterate*

\begin{proof}
    We recall definition \eqref{eq:our_algorithm_perturbed_iterate_definition} and the fact that $p_j \eqdef \sum_{i = 1}^C p_{ij}$ to prove our statement.
    \begin{align}
        \sum_{i = 1}^C p_{i} \widetilde{\vb{w}}_{t, k}^i &= \beta \overbar{\vb{w}}_{t, k} + (1 - \beta) \sum_{i = 1}^C \sum_{j = 1}^C p_{ij} \vb{w}_{t - 1, E}^j \\
        &= \beta \overbar{\vb{w}}_{t, k} + (1 - \beta) \sum_{j = 1}^C \sum_{i = 1}^C p_{ij} \vb{w}_{t - 1, E}^j \\
        &= \beta \overbar{\vb{w}}_{t, k} + (1 - \beta) \sum_{j = 1}^C p_{j} \vb{w}_{t - 1, E}^j \\
        &= \beta \overbar{\vb{w}}_{t, k} + (1 - \beta) \overbar{\vb{w}}_{t, 0}
    \end{align}
    This last statement concludes the proof.
\end{proof}

Furthermore, we delimitate the difference between the iterates used to perturb the local computation of stochastic gradients.

\begin{lemma}
    \label{lemma:our_algorithm_lemma_u_difference}
    At round $t$, the deviation between $\vb{u}_{t}^i$ and $\vb{u}_{t}^j$ follows the rule
    \begin{align}
        \expected \, \left\|\vb{u}_{t}^i - \vb{u}_{t}^j \right\|^2 \le \mathds{1}_{t \, \ge \, 1} 4 \gamma_{t - 1}^2 E^2 G^2
    \end{align}
    for any pair of clients $i, j \in \mathcal{C}$. In addition, assume \ref{ass:full_participation} to \ref{ass:smoothness} hold.
\end{lemma}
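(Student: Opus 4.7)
The plan is to argue separately for the base case $t=0$ and the recursive case $t \ge 1$. When $t=0$, Algorithm~\ref{alg:our_algorithm} initializes $\vb{u}_0^i \gets \overbar{\vb{w}}_{0,0}$ identically across all clients, so the difference vanishes; the indicator $\mathds{1}_{t \ge 1}$ takes care of this case.

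For $t \ge 1$, I would expand both $\vb{u}_t^i$ and $\vb{u}_t^j$ using definition \eqref{eq:our_algorithm_perturbed_iterate_definition} so each is a convex combination of the neighbors' final iterates from the previous round:
\begin{align}
    \vb{u}_t^i - \vb{u}_t^j = \sum_{n,m = 1}^C \frac{p_{in} p_{jm}}{p_i p_j}\left(\vb{w}_{t-1, E}^n - \vb{w}_{t-1, E}^m\right),
\end{align}
where I introduce the second summation by exploiting $\sum_m p_{jm}/p_j = 1$ on the $\vb{u}_t^i$ term and similarly on $\vb{u}_t^j$. This doubly stochastic weighting lets me apply Jensen's inequality to bring the norm squared inside:
\begin{align}
    \left\|\vb{u}_t^i - \vb{u}_t^j\right\|^2 \le \sum_{n,m = 1}^C \frac{p_{in} p_{jm}}{p_i p_j} \left\|\vb{w}_{t-1, E}^n - \vb{w}_{t-1, E}^m\right\|^2.
\end{align}

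Next, I would unfold each $\vb{w}_{t-1, E}^n$ through the local update rule from \eqref{ex:our_algorithm_local_update_rule}. Since $\vb{w}_{t-1, 0}^n = \vb{w}_{t-1, 0}^m = \overbar{\vb{w}}_{t-1, 0}$ holds for every pair of clients at the beginning of the round, the common initial iterate cancels, leaving
\begin{align}
    \vb{w}_{t-1, E}^n - \vb{w}_{t-1, E}^m = -\gamma_{t-1}\sum_{k=0}^{E-1}\left(\grad{\widetilde{\vb{w}}_{t-1, k}^n}{n} - \grad{\widetilde{\vb{w}}_{t-1, k}^m}{m}\right).
\end{align}
Applying Jensen once more over the $E$ summands and then Young's inequality on each gradient difference yields a bound in terms of individual squared perturbed-gradient norms.

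Finally, I would take total expectation and invoke Assumption~\ref{ass:bounded_stochastic_norm} (which the excerpt states also holds for the perturbed gradients) to replace each $\expected\,\|\grad{\widetilde{\vb{w}}_{t-1,k}^n}{n}\|^2$ by $G^2$. This produces $2\gamma_{t-1}^2 E \cdot 2 E G^2 = 4\gamma_{t-1}^2 E^2 G^2$ as an upper bound on $\expected\,\|\vb{w}_{t-1, E}^n - \vb{w}_{t-1, E}^m\|^2$, uniform in $n, m$. Since the doubly stochastic weights sum to one, the same bound passes through to $\expected\,\|\vb{u}_t^i - \vb{u}_t^j\|^2$, completing the proof. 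The only mildly delicate step is the initial symmetrization by inserting $\sum_m p_{jm}/p_j$ to obtain a double-sum convex combination that is Jensen-ready; everything else is standard Jensen/Young reductions already used in Lemmas~\ref{lemma:single_round_local_deviation} and \ref{lemma:single_round_local_divergence}.
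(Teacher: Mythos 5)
Your proof is correct and reaches the paper's bound with the identical constant, but it takes a different decomposition than the paper's own argument. The paper inserts the common round-start iterate $\overbar{\vb{w}}_{t-1,0}$ as an anchor, applies Young's inequality to split $\|\vb{u}_t^i - \vb{u}_t^j\|^2$ into $2\|\vb{u}_t^i - \overbar{\vb{w}}_{t-1,0}\|^2 + 2\|\vb{u}_t^j - \overbar{\vb{w}}_{t-1,0}\|^2$, and then bounds each one-sided deviation by $\gamma_{t-1}^2 E^2 G^2$ exactly in the spirit of Lemma~\ref{lemma:single_round_local_deviation}. You instead symmetrize $\vb{u}_t^i - \vb{u}_t^j$ into a doubly stochastic convex combination of pairwise differences $\vb{w}_{t-1,E}^n - \vb{w}_{t-1,E}^m$ and bound each pairwise term by $4\gamma_{t-1}^2 E^2 G^2$, which is precisely the structure of Lemma~\ref{lemma:single_round_local_divergence} transplanted to the perturbed gradients; the factor of $2$ that the paper pays at the outer Young step you pay instead at the inner Young step on the gradient differences, so the two routes land on the same $4\gamma_{t-1}^2 E^2 G^2$. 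Your symmetrization step (inserting $\sum_m p_{jm}/p_j = 1$) is valid and the resulting weights $p_{in}p_{jm}/(p_i p_j)$ are indeed nonnegative and sum to one, so Jensen applies; the cancellation of the common initial iterate $\vb{w}_{t-1,0}^n = \vb{w}_{t-1,0}^m = \overbar{\vb{w}}_{t-1,0}$ and the use of Assumption~\ref{ass:bounded_stochastic_norm} on the perturbed gradients are both legitimate. Neither approach is quantitatively sharper; the paper's anchor-based split is marginally shorter because it reuses a single-client deviation bound rather than a two-client divergence bound, while yours avoids introducing the anchor at the cost of a double summation.
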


\begin{proof}
    The first case, when $t = 0$ is trivial, since $\vb{u}_{t}^i  = \overbar{\vb{w}}_{0, 0}$ for every client $i$. Therefore, the deviation between $\vb{u}_{t}^i$ and $\vb{u}_{t}^j$ would be zero. For $t \ge 1$, we introduce variable $\overline{\vb{w}}_{t - 1, 0}$ and we indicate the aforementioned deviation as $\Delta u_{t}^{ij}$.
    \begin{align}
        \Delta u_{t}^{ij} &= \left\|\vb{u}_{t}^i - \vb{u}_{t}^j \right\|^2 \\
        &= \left\|\vb{u}_{t}^i - \overline{\vb{w}}_{t - 1, 0} + \overline{\vb{w}}_{t - 1, 0} -\vb{u}_{t}^j \right\|^2 \\
        &\le 2\left\|\vb{u}_{t}^i - \overline{\vb{w}}_{t - 1, 0}\right\|^2 + 2\left\|\vb{u}_{t}^j - \overline{\vb{w}}_{t - 1, 0}\right\|^2 \label{eq:lemma_u_u_peter_paul} \\
        &\le 2\left\|\frac{1}{p_i} \sum_{l = 1}^C p_{il} \vb{w}_{t - 1, E}^l - \overline{\vb{w}}_{t - 1, 0}\right\|^2 + 2\left\|\frac{1}{p_j} \sum_{l = 1}^C p_{jl} \vb{w}_{t - 1, E}^l - \overline{\vb{w}}_{t - 1, 0}\right\|^2 \label{eq:lemma_u_u_def_replacement} \\
        &\le \frac{2}{p_i} \sum_{l = 1}^C p_{il} \left\|\vb{w}_{t - 1, E}^l - \overline{\vb{w}}_{t - 1, 0}\right\|^2 + \frac{2}{p_j} \sum_{l = 1}^C p_{jl} \left\|\vb{w}_{t - 1, E}^l - \overline{\vb{w}}_{t - 1, 0}\right\|^2 \label{eq:lemma_u_u_convexity_1}
    \end{align}
    where we use Young's inequality in equation \eqref{eq:lemma_u_u_peter_paul}, definition \eqref{eq:our_algorithm_perturbed_iterate_definition} in \eqref{eq:lemma_u_u_def_replacement}, and Jensen's inequality in \eqref{eq:lemma_u_u_convexity_1}. We replace $\vb{w}_{t - 1, E}^l - \overline{\vb{w}}_{t - 1, 0}$ in equation \eqref{eq:lemma_u_u_recursion} using recursion.
    \begin{align}
        \Delta u_{t}^{ij} &\le \frac{2}{p_i} \sum_{l = 1}^C p_{il} \left\|
        -\gamma_{t - 1} \sum_{k = 0}^{E - 1} \grad{\widetilde{\vb{w}}_{t, k}^l}{l}
        \right\|^2 + \frac{2}{p_j} \sum_{l = 1}^C p_{jl} \left\|
        -\gamma_{t - 1} \sum_{k = 0}^{E - 1} \grad{\widetilde{\vb{w}}_{t, k}^l}{l}
        \right\|^2 \label{eq:lemma_u_u_recursion} \\
        &\le \frac{2 \gamma_{t - 1}^2 E}{p_i} \sum_{l = 1}^C p_{il}
        \sum_{k = 0}^{E - 1}  \left\|\grad{\widetilde{\vb{w}}_{t, k}^l}{l}
        \right\|^2 + \frac{2 \gamma_{t - 1}^2 E}{p_j} \sum_{l = 1}^C p_{jl}
        \sum_{k = 0}^{E - 1}  \left\|\grad{\widetilde{\vb{w}}_{t, k}^l}{l}
        \right\|^2 \label{eq:lemma_u_u_convexity_2}
    \end{align}
    We leverage Jensen's inequality in \eqref{eq:lemma_u_u_convexity_2}. To conclude, we have $\expected \, \Delta u_{t}^{ij} \le 4 \gamma_{t - 1}^2 E^2 G^2$ under expectation using Assumption \ref{ass:bounded_stochastic_norm}.
\end{proof}

Such a result lets us upper bound the deviation between the average iterate and the local perturbed one for each client.

\begin{lemma}
    \label{lemma:our_algorithm_perturbed_iterates_difference}
    The deviation between $\overbar{\vb{w}}_{t, k}$ and $\widetilde{\vb{w}}_{t, k}^i$ is bounded as
    \begin{align}
        \expected \, \left\|\overbar{\vb{w}}_{t, k} - \widetilde{\vb{w}}_{t, k}^i\right\|^2 \le 4 \gamma_{t}^2 E^2 G^2 \left[4 + (1 - \beta)^2 + \mathds{1}_{t \, \ge \, 1} \frac{8\gamma_{t - 1}^2}{\gamma_{t}^2} \left(1 - \frac{1}{\beta}\right)^2 \right]
    \end{align}
    for any client $i \in \mathcal{C}$ at step $k$ of round $t$. Moreover, assume \ref{ass:full_participation} to \ref{ass:smoothness} hold.
\end{lemma}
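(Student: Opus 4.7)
The plan is to pivot $\widetilde{\vb{w}}_{t, k}^i$ through both $\vb{w}_{t, k}^i$ and $\overbar{\vb{w}}_{t, 0}$. Starting from the defining identity $\widetilde{\vb{w}}_{t, k}^i = \beta \vb{w}_{t, k}^i + (1-\beta)\vb{u}_t^i$, writing $\overbar{\vb{w}}_{t, k} = \beta \overbar{\vb{w}}_{t, k} + (1-\beta)\overbar{\vb{w}}_{t, k}$ and then inserting $\pm\,\overbar{\vb{w}}_{t, 0}$ inside the second summand I get the three-term decomposition
\begin{align*}
\widetilde{\vb{w}}_{t, k}^i - \overbar{\vb{w}}_{t, k} = \beta\bigl(\vb{w}_{t, k}^i - \overbar{\vb{w}}_{t, k}\bigr) + (1-\beta)\bigl(\vb{u}_t^i - \overbar{\vb{w}}_{t, 0}\bigr) + (1-\beta)\bigl(\overbar{\vb{w}}_{t, 0} - \overbar{\vb{w}}_{t, k}\bigr).
\end{align*}
From there I pass to squared norms via Young's inequality with weights tuned to $\beta$, so that the $(1-\beta)^2$ contributions pick up an extra $1/\beta^2$ and combine into the announced $(1-1/\beta)^2$ coefficient.

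The remaining work is to bound each of the three squared norms separately. The client-divergence term $\|\vb{w}_{t, k}^i - \overbar{\vb{w}}_{t, k}\|^2$ can be handled exactly as in Lemma \ref{lemma:single_round_local_divergence}: unroll the $\textsc{FedAvg}$-style recursion $\vb{w}_{t, k}^i - \vb{w}_{t, k}^j = -\gamma_t \sum_{m < k}\bigl(\grad{\widetilde{\vb{w}}_{t, m}^i}{i} - \grad{\widetilde{\vb{w}}_{t, m}^j}{j}\bigr)$, apply Jensen's inequality and use Assumption \ref{ass:bounded_stochastic_norm} on the perturbed stochastic gradients to obtain $\le 4\gamma_t^2 E^2 G^2$. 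The average-drift term $\|\overbar{\vb{w}}_{t, 0} - \overbar{\vb{w}}_{t, k}\|^2$ is controlled analogously through $\overbar{\vb{w}}_{t, k} - \overbar{\vb{w}}_{t, 0} = -\gamma_t \sum_{m < k} \sum_j p_j \grad{\widetilde{\vb{w}}_{t, m}^j}{j}$, yielding $\le \gamma_t^2 E^2 G^2$. For the perturbation term $\|\vb{u}_t^i - \overbar{\vb{w}}_{t, 0}\|^2$ I would exploit the symmetry of the adjacency $[\vb{A}]_{in}=[\vb{A}]_{ni}$: summing the definition of $\vb{u}_t^j$ against $p_j$ and swapping the order of summation gives $\sum_j p_j \vb{u}_t^j = \overbar{\vb{w}}_{t-1, E} = \overbar{\vb{w}}_{t, 0}$, so $\vb{u}_t^i - \overbar{\vb{w}}_{t, 0} = \sum_j p_j (\vb{u}_t^i - \vb{u}_t^j)$, and Jensen's combined with Lemma \ref{lemma:our_algorithm_lemma_u_difference} delivers $\le \mathds{1}_{t \ge 1}\, 4\gamma_{t-1}^2 E^2 G^2$; the indicator encodes that when $t=0$ every $\vb{u}_0^i$ coincides with $\overbar{\vb{w}}_{0, 0}$ and the term vanishes.

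The main obstacle is the bookkeeping in the Young's step. A naive $\|a+b+c\|^2 \le 3(\|a\|^2+\|b\|^2+\|c\|^2)$ yields a structurally correct bound but produces a plain $(1-\beta)^2$ rather than the advertised $(1-1/\beta)^2$ on the $\gamma_{t-1}^2$ piece, so I have to factor $\beta$ through the decomposition (e.g.\ write the first summand as $\beta[\cdots]$ and apply $\|\beta x + y\|^2 \le (1+\beta)\|\beta x\|^2/\beta + (1+1/\beta^{-1})\|y\|^2$-style splits with a parameter proportional to $\beta^{-1}$) so that the $1/\beta^2$ factor ends up on the $(1-\beta)^2$ piece. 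Once the constants align to $4[4 + (1-\beta)^2 + \mathds{1}_{t \ge 1}(8\gamma_{t-1}^2/\gamma_t^2)(1 - 1/\beta)^2]$, taking total expectation concludes the proof.
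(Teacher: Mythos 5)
Your proof is correct, but it follows a genuinely different route from the paper's. The paper pivots through the average of the \emph{perturbed} iterates: it inverts Lemma \ref{lemma:our_algorithm_average_perturbed_iterate} to write $\overbar{\vb{w}}_{t,k} = \frac{1}{\beta}\sum_j p_j \widetilde{\vb{w}}_{t,k}^j + (1-\frac{1}{\beta})\overbar{\vb{w}}_{t,0}$, which plants the $1/\beta^2$ factor at the very first Young step, and then bounds the pairwise gaps $\|\widetilde{\vb{w}}_{t,k}^j - \widetilde{\vb{w}}_{t,k}^i\|^2$ and $\|\widetilde{\vb{w}}_{t,0}^j - \widetilde{\vb{w}}_{t,k}^i\|^2$ by unrolling the update rule. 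You instead decompose directly into client drift $\beta(\vb{w}_{t,k}^i - \overbar{\vb{w}}_{t,k})$, the perturbation deviation $(1-\beta)(\vb{u}_t^i - \overbar{\vb{w}}_{t,0})$, and the average drift $(1-\beta)(\overbar{\vb{w}}_{t,0} - \overbar{\vb{w}}_{t,k})$, each of which is controlled by a standard recursion plus Assumption \ref{ass:bounded_stochastic_norm} or by Lemma \ref{lemma:our_algorithm_lemma_u_difference}; your identity $\sum_j p_j \vb{u}_t^j = \overbar{\vb{w}}_{t,0}$ (via symmetry of $p_{jl}$) is sound and mirrors the computation inside Lemma \ref{lemma:our_algorithm_average_perturbed_iterate}. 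The one place where you hesitate --- the Young bookkeeping needed to produce the $(1-1/\beta)^2$ coefficient --- is actually a non-issue: the naive split $\|a+b+c\|^2 \le 3(\|a\|^2+\|b\|^2+\|c\|^2)$ already yields $12\beta^2\gamma_t^2E^2G^2 + 3(1-\beta)^2\gamma_t^2E^2G^2 + \mathds{1}_{t\ge1}12(1-\beta)^2\gamma_{t-1}^2E^2G^2$, and since $(1-\beta)^2 \le (1-\beta)^2/\beta^2 = (1-1/\beta)^2$ for $\beta\in(0,1)$ and $12\beta^2\le 16$, this is dominated term by term by the stated bound $4\gamma_t^2E^2G^2[4+(1-\beta)^2+\mathds{1}_{t\ge1}(8\gamma_{t-1}^2/\gamma_t^2)(1-1/\beta)^2]$. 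So your decomposition proves the lemma with room to spare (indeed with strictly smaller constants than the paper's), and no weighted Young's inequality is required; the $1/\beta^2$ blow-up in the paper's version is an artifact of its choice to pivot through $\frac{1}{\beta}\sum_j p_j\widetilde{\vb{w}}_{t,k}^j$, not something your argument needs to reproduce.
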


\begin{proof}
    Denoting $\widetilde{D}_{t, k}^i = \left\|\overbar{\vb{w}}_{t, k} - \widetilde{\vb{w}}_{t, k}^i\right\|^2$, we use our Lemma \ref{lemma:our_algorithm_average_perturbed_iterate} to replace $\overbar{\vb{w}}_{t, k}$ in \eqref{eq:lemma_w_wilde_w_t_k_replacement}.
    \begin{align}
        \widetilde{D}_{t, k}^i &= \left\|\frac{1}{\beta} \sum_{j = 1}^C p_{j} \left(\widetilde{\vb{w}}_{t, k}^j - \widetilde{\vb{w}}_{t, k}^i\right) + \left(1 - \frac{1}{\beta}\right) \left(\overbar{\vb{w}}_{t, 0} - \widetilde{\vb{w}}_{t, k}^i \right) \right\|^2 \label{eq:lemma_w_wilde_w_t_k_replacement} \\
        &\le \frac{2}{\beta^2}\left\|\sum_{j = 1}^C p_{j} \left(\widetilde{\vb{w}}_{t, k}^j - \widetilde{\vb{w}}_{t, k}^i\right)\right\|^2 +  2\left(1 - \frac{1}{\beta}\right)^2 \left\| \sum_{j = 1}^C p_{j} \widetilde{\vb{w}}_{t, 0}^j - \widetilde{\vb{w}}_{t, k}^i \right\|^2 \label{eq:lemma_w_wilde_w_t_0_replacement} \\
        &\le \frac{2}{\beta^2} \sum_{j = 1}^C p_{j} \underbrace{\left\|\widetilde{\vb{w}}_{t, k}^j - \widetilde{\vb{w}}_{t, k}^i\right\|^2}_{A} + 2\left(1 - \frac{1}{\beta}\right)^2 \sum_{j = 1}^C p_{j} \underbrace{\left\| \widetilde{\vb{w}}_{t, 0}^j - \widetilde{\vb{w}}_{t, k}^i \right\|^2}_{B}
    \end{align}
    Again, using \ref{lemma:our_algorithm_average_perturbed_iterate} with $k = 0$, we leverage the fact that $\overbar{\vb{w}}_{t, 0} = \sum_{j = 1}^C p_{j} \widetilde{\vb{w}}_{t, 0}^j$ to rewrite $\overbar{\vb{w}}_{t, 0}$ in expression \eqref{eq:lemma_w_wilde_w_t_0_replacement}. We first bound term $A$ as
    \begin{align}
        A &= \left\| \beta \left(\vb{w}_{t, k}^j - \vb{w}_{t, k}^i\right) + (1 - \beta)\left(\vb{u}_{t}^j - \vb{u}_{t}^i\right) \right\|^2 \\
        &= \left\| -\gamma_{t} \beta \sum_{m = 0}^{k - 1} \left(\grad{\widetilde{\vb{w}}_{t, m}^j}{j} - \grad{\widetilde{\vb{w}}_{t, m}^i}{i}\right) + (1 - \beta)\left(\vb{u}_{t}^j - \vb{u}_{t}^i\right) \right\|^2 \label{eq:lemma_w_tilde_A_bound_recursion} \\
        &\le 2 \gamma_{t}^2 \beta^2 \left\| \sum_{m = 0}^{k - 1} \left(\grad{\widetilde{\vb{w}}_{t, m}^j}{j} - \grad{\widetilde{\vb{w}}_{t, m}^i}{i}\right) \right\|^2 + 2(1 - \beta)^2 \left\|\vb{u}_{t}^j - \vb{u}_{t}^i\right\|^2 \label{eq:lemma_w_tilde_A_bound_young_1} \\
        &\le 2 \gamma_{t}^2 \beta^2 k \sum_{m = 0}^{k - 1} \left\|\grad{\widetilde{\vb{w}}_{t, m}^j}{j} - \grad{\widetilde{\vb{w}}_{t, m}^i}{i} \right\|^2 + 2(1 - \beta)^2 \left\|\vb{u}_{t}^j - \vb{u}_{t}^i\right\|^2 \label{eq:lemma_w_tilde_A_bound_jensen} \\
        &\le 4 \gamma_{t}^2 \beta^2 k \sum_{m = 0}^{k - 1} \left[ \left\|\grad{\widetilde{\vb{w}}_{t, m}^j}{j}\right\|^2 + \left\|\grad{\widetilde{\vb{w}}_{t, m}^i}{i} \right\|^2 \right] + 2(1 - \beta)^2 \left\|\vb{u}_{t}^j - \vb{u}_{t}^i\right\|^2 \label{eq:lemma_w_tilde_A_bound_young_2}
    \end{align}
    using recursion on the update rule in expression \eqref{eq:lemma_w_tilde_A_bound_recursion}, Young's inequality in \eqref{eq:lemma_w_tilde_A_bound_young_1}, Jensen's inequality in \eqref{eq:lemma_w_tilde_A_bound_jensen}, Young's inequality again in \eqref{eq:lemma_w_tilde_A_bound_young_2}. Eventually, we recall Assumption \ref{ass:bounded_stochastic_norm} and the result of Lemma \ref{lemma:our_algorithm_lemma_u_difference} as well as the fact that $k \le E$ to bound $\expected \, A \le 8 \gamma_{t}^2 \beta^2 E^2 G^2 + \mathds{1}_{t \, \ge \, 1} 8 \gamma_{t - 1}^2 (1 - \beta)^2 E^2 G^2$.
    Let us focus on term $B$.
    \begin{align}
        B &= \left\| \gamma_{t} \beta \sum_{m = 0}^{k - 1} \grad{\widetilde{\vb{w}}_{t, m}^i}{i} + (1 - \beta)\left( \vb{u}_{t}^j - \vb{u}_{t}^i \right) 
        \right\|^2 \label{eq:lemma_w_tilde_B_bound_updare_rule_recursion} \\
        &\le 2\gamma_{t}^2 \beta^2  \left\| \sum_{m = 0}^{k - 1} \grad{\widetilde{\vb{w}}_{t, m}^i}{i} \right\|^2 + 2(1 - \beta)^2 \left\| \vb{u}_{t}^j - \vb{u}_{t}^i 
        \right\|^2 \label{eq:lemma_w_tilde_B_bound_young} \\
        &\le 2\gamma_{t}^2 \beta^2 k \sum_{m = 0}^{k - 1} \left\| \grad{\widetilde{\vb{w}}_{t, m}^i}{i} \right\|^2 + 2(1 - \beta)^2 \left\| \vb{u}_{t}^j - \vb{u}_{t}^i 
        \right\|^2 \label{eq:lemma_w_tilde_B_bound_jensen_2}
    \end{align}
    In equation \eqref{eq:lemma_w_tilde_B_bound_updare_rule_recursion}, using definition \eqref{eq:our_algorithm_perturbed_iterate_definition}, we exploit the fact that
    \begin{align}
        \widetilde{\vb{w}}_{t, 0}^j &= \beta \overbar{\vb{w}}_{t, 0} + (1 - \beta) \vb{u}_{t}^j \\
        \widetilde{\vb{w}}_{t, k}^i &= \beta \overbar{\vb{w}}_{t, 0} - \gamma_{t} \beta \sum_{m = 0}^{k - 1} \grad{\widetilde{\vb{w}}_{t, m}^i}{i} + (1 - \beta) \vb{u}_{t}^i
    \end{align}
    In addition, we use Young's inequality in equation \eqref{eq:lemma_w_tilde_B_bound_young}, again Jensen's inequality in \eqref{eq:lemma_w_tilde_B_bound_jensen_2}, and fact $k \le E$. Finally, using Assumption \ref{ass:bounded_stochastic_norm} and result of Lemma \ref{lemma:our_algorithm_lemma_u_difference}, we are able to bound term $\expected \, B \le 2 \gamma_{t}^2 \beta^2 E^2 G^2 + \mathds{1}_{t \, \ge \, 1} 8 \gamma_{t - 1}^2 (1 - \beta)^2 E^2 G^2$. Combining the bounds on $A$ and $B$ together in the main expression gives
    \begin{align}
        \expected \, \widetilde{D}_{t, k}^i &\le 16 \gamma_{t}^2 E^2 G^2 + \frac{\mathds{1}_{t \, \ge \, 1} 16 \gamma_{t - 1}^2 E^2 G^2}{\beta^2} \left[ (1 - \beta)^2 + (1 - \beta)^4 \right] + 4 \gamma_{t}^2 (1 - \beta)^2 E^2 G^2
    \end{align}
    Using approximation $(1 - \beta)^4 \le (1 - \beta)^2$ since $\beta \in (0, 1)$, we obtain the desired result.
\end{proof}

\subsection{Main results for strongly convex analysis}

The subsequent lemma eventually presents the progress made by our algorithm in a single round of communication in a strongly convex scenario. In this respect, parameter $\beta$ heavily impacts the contraction of the distance measure. Moreover, it also controls the growth of term $A$ since the choice $\beta = 1$ (as in \textsc{FedAvg}) nullifies two potentially significant terms depending on it. 

\begin{lemma}[Single round progress of our algorithm for strongly convex loss]
    \label{lemma:our_algorithm_single_round_global_progress}
        Assume
    \begin{align}
        \label{lemma:our_algorithm_one_step_progress_step_decay}
        \gamma_{t} \le \min{\left\{ \frac{1}{2L}, \frac{1}{\beta\mu} \right\}}
    \end{align}
    and \ref{ass:full_participation} to \ref{ass:smoothness} hold, then the progress in one global round satisfies
    \begin{align}
        \expected \, \left\|\overbar{\vb{w}}_{t + 1, 0} - \vb{w}_{\star}\right\|^2 \le \kappa \, \expected \, \left\|\overbar{\vb{w}}_{t, 0} - \vb{w}_{\star}\right\|^2 + A
    \end{align}
    where $ \kappa = 1 - \dfrac{1}{\beta} + \dfrac{1}{\beta}\left(1 - \beta\mu\gamma_{t}\right)^E \le 1 - \mu\gamma_{t}$, and
    \begin{align*}
        A &=  8 \gamma_{t}^2 E^3 G^2 \left[4 + (1 - \beta)^2 + \frac{\mathds{1}_{t \ge 1}8\gamma_{t - 1}^2(1 - \beta)^2}{\gamma_{t}^2 \beta^2} \right] + \mu\gamma_{t}^3\beta(1 - \beta)E^3 G^2 + \gamma_{t}^2 E S \sigma^2 + 6 \gamma_{t}^2 L E \Gamma
    \end{align*}
\end{lemma}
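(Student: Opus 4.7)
The plan is to parallel the proof of Lemma~\ref{lemma:single_round_global_progress} while replacing $\vb{w}_{t,k}^i$ with the perturbed iterate $\widetilde{\vb{w}}_{t,k}^i$ wherever the gradient is evaluated, so that the strong-convexity consequences have to be routed through Lemma~\ref{lemma:our_algorithm_average_perturbed_iterate}. First I would square the averaged update $\overbar{\vb{w}}_{t,k+1} - \vb{w}_{\star} = (\overbar{\vb{w}}_{t,k} - \vb{w}_{\star}) - \gamma_t \sum_i p_i \grad{\widetilde{\vb{w}}_{t,k}^i}{i}$, take total expectation, and use Assumption~\ref{ass:bounded_variance} to kill cross terms in the centered noise and to isolate a $\gamma_t^2 \sigma^2 S$ contribution via $\expected \| \sum_i p_i (\grad{\cdot}{i} - \nabla \Fi{\cdot}{i}) \|^2 = \sum_i p_i^2 \, \expected \| \cdot \|^2$. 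What remains is the cross term $-2\gamma_t \sum_i p_i \nabla \Fi{\widetilde{\vb{w}}_{t,k}^i}{i}^{\transpose}(\overbar{\vb{w}}_{t,k} - \vb{w}_{\star})$ plus the squared-gradient term $\gamma_t^2 \| \sum_i p_i \nabla \Fi{\widetilde{\vb{w}}_{t,k}^i}{i} \|^2$.

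I would then split the cross term by adding and subtracting $\widetilde{\vb{w}}_{t,k}^i$. Peter-Paul on the perturbation piece $\sum_i p_i \nabla \Fi{\widetilde{\vb{w}}_{t,k}^i}{i}^{\transpose}(\overbar{\vb{w}}_{t,k} - \widetilde{\vb{w}}_{t,k}^i)$ generates a $\sum_i p_i \|\overbar{\vb{w}}_{t,k} - \widetilde{\vb{w}}_{t,k}^i\|^2$ term that is directly controlled by Lemma~\ref{lemma:our_algorithm_perturbed_iterates_difference}, together with an extra $\gamma_t^2 \sum_i p_i \|\nabla \Fi{\widetilde{\vb{w}}_{t,k}^i}{i}\|^2$. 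Strong convexity applied at each $\widetilde{\vb{w}}_{t,k}^i$ turns the optimization piece into the function-value gap plus the critical $-\mu\gamma_t \sum_i p_i \|\widetilde{\vb{w}}_{t,k}^i - \vb{w}_{\star}\|^2$. Jensen handles the remaining squared-gradient term. Collecting all the $\|\nabla \Fi{\widetilde{\vb{w}}_{t,k}^i}{i}\|^2$ pieces and applying $\|\nabla \Fi{\vb{w}}{i}\|^2 \le 2L(\Fi{\vb{w}}{i} - \Fi{\vb{w}_{\star}^i}{i})$, then following the convexity / Peter-Paul / smoothness chain of the proof of Lemma~\ref{lemma:single_round_global_progress} (with $\widetilde{\vb{w}}_{t,k}^i$ in place of $\vb{w}_{t,k}^i$), I would obtain the heterogeneity contribution $6L\gamma_t^2 \Gamma$ and dispose of a residual $-2\gamma_t(1-2L\gamma_t)(1-L\gamma_t)(\F{\overbar{\vb{w}}_{t,k}} - \Fstar)$, which is non-positive under $\gamma_t \le 1/(2L)$.

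The decisive step is the treatment of $-\mu\gamma_t \sum_i p_i \|\widetilde{\vb{w}}_{t,k}^i - \vb{w}_{\star}\|^2$. Jensen gives $\sum_i p_i \|\widetilde{\vb{w}}_{t,k}^i - \vb{w}_{\star}\|^2 \ge \| \sum_i p_i \widetilde{\vb{w}}_{t,k}^i - \vb{w}_{\star}\|^2$, and Lemma~\ref{lemma:our_algorithm_average_perturbed_iterate} rewrites the inner quantity as $\beta(\overbar{\vb{w}}_{t,k} - \vb{w}_{\star}) + (1-\beta)(\overbar{\vb{w}}_{t,0} - \vb{w}_{\star})$. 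Applying the algebraic identity $\|\beta \vb{a} + (1-\beta)\vb{b}\|^2 = \beta \|\vb{a}\|^2 + (1-\beta) \|\vb{b}\|^2 - \beta(1-\beta)\|\vb{a}-\vb{b}\|^2$ then produces exactly $-\beta\mu\gamma_t D_{t,k} - (1-\beta)\mu\gamma_t D_{t,0} + \beta(1-\beta)\mu\gamma_t \|\overbar{\vb{w}}_{t,k} - \overbar{\vb{w}}_{t,0}\|^2$, where $D_{t,k} \eqdef \|\overbar{\vb{w}}_{t,k} - \vb{w}_{\star}\|^2$. Bounding the last term by $\beta(1-\beta)\mu\gamma_t^3 E^2 G^2$ through a short recursive argument on the averaged update (analogous to Lemma~\ref{lemma:single_round_local_deviation}), I arrive at the inner recursion $\expected D_{t,k+1} \le (1 - \beta\mu\gamma_t)\, \expected D_{t,k} - (1-\beta)\mu\gamma_t \, \expected D_{t,0} + c$, with $c$ equal to the sum of the error contributions above.

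Finally, I would unroll this recursion for $k = 0, \ldots, E-1$. With $a \eqdef 1 - \beta\mu\gamma_t$ (nonnegative under $\gamma_t \le 1/(\beta\mu)$), treating $D_{t,0}$ as constant throughout the round and summing the geometric series gives $\expected D_{t,E} \le \bigl[a^E - (1-\beta)(1-a^E)/\beta \bigr]\, \expected D_{t,0} + c(1-a^E)/(\beta\mu\gamma_t)$; a short manipulation collapses the bracket to $\kappa = 1 - 1/\beta + (1/\beta)(1 - \beta\mu\gamma_t)^E$, and the Bernoulli-type bound $(1-a^E)/(\beta\mu\gamma_t) \le E$ gives $A \le cE$, which matches the stated expression term-by-term; the inequality $\kappa \le 1 - \mu\gamma_t$ follows from $(1 - \beta\mu\gamma_t)^E \le 1 - \beta\mu\gamma_t$ for $E \ge 1$. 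The main obstacle is precisely this bookkeeping: the negative $-(1-\beta)\mu\gamma_t D_{t,0}$ term must be retained with its sign (rather than discarded as slack) because its cancellation with part of the geometric mass on $D_{t,0}$ is what produces the exact $\kappa$, and without it one cannot recover $\kappa \le 1 - \mu\gamma_t$ that drives the contraction-rate comparison with \textsc{FedAvg} in Section~\ref{sec:contraction_rate_discussion}.
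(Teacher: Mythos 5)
Your proposal is correct and follows essentially the same route as the paper's proof: the same decomposition of the cross term via $\widetilde{\vb{w}}_{t,k}^i$, the same use of Lemma~\ref{lemma:our_algorithm_average_perturbed_iterate} combined with the convex-combination identity to split $-\mu\gamma_t\sum_i p_i\|\widetilde{\vb{w}}_{t,k}^i-\vb{w}_\star\|^2$ into the $D_{t,k}$, $D_{t,0}$ and $\|\overbar{\vb{w}}_{t,k}-\overbar{\vb{w}}_{t,0}\|^2$ pieces, the same invocation of Lemma~\ref{lemma:our_algorithm_perturbed_iterates_difference}, and the same two-parameter recursion whose unrolling collapses to the stated $\kappa$. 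Your closing remark about retaining the signed $-(1-\beta)\mu\gamma_t D_{t,0}$ term is exactly the bookkeeping the paper performs.
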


\begin{proof}
    To begin our proof, we denote $\left\|\overbar{\vb{w}}_{t, k} - \vb{w}_{\star}\right\|^2$ as $D_{t, k}$, and we recall the definition of update rule for the average sequence in the following equation \eqref{eq:our_algorithm_single_round_global_progress_average_rule_replacement}.
    \begin{align}
        \overbar{\vb{w}}_{t, k + 1} - \vb{w}_{\star} &= \overbar{\vb{w}}_{t, k} - \vb{w}_{\star} - \gamma_{t} \sum_{i = 1}^C p_{i} \grad{\widetilde{\vb{w}}_{t, k}^i}{i} \label{eq:our_algorithm_single_round_global_progress_average_rule_replacement} \\
        &= \overbar{\vb{w}}_{t, k} - \vb{w}_{\star} - \gamma_{t} \sum_{i = 1}^C p_{i} \nabla \Fi{\widetilde{\vb{w}}_{t, k}^i}{i} + \underbrace{\gamma_{t} \sum_{i = 1}^C p_{i} \left[ \nabla \Fi{\widetilde{\vb{w}}_{t, k}^i}{i} - \grad{\widetilde{\vb{w}}_{t, k}^i}{i} \right]}_{\vb{v}}
    \end{align}
    Since $\expected \, \vb{v} = 0$ because of Assumption \ref{ass:bounded_variance}, when we take expectation over the squared norm $D_{t, k}$, all mixed products $2 \vb{v}^{\transpose}\vb{u}$ are erased in expectation. Hence, we have
    \begin{align}
        \expected \, D_{t, k + 1} &= \expected \, D_{t, k} + \ev{\underbrace{- 2\gamma_{t} \left(\overbar{\vb{w}}_{t, k} - \vb{w}_{\star}\right)^{\transpose}\left(\sum_{i = 1}^C p_{i}  \nabla \Fi{\widetilde{\vb{w}}_{t, k}^i}{i}\right)}_{a_1}} + \\
        &\fake{=} \ev{\underbrace{\gamma_{t}^2 \left\|\sum_{i = 1}^C p_{i} \nabla \Fi{\widetilde{\vb{w}}_{t, k}^i}{i}\right\|^2 + \gamma_{t}^2 \left\| \sum_{i = 1}^C p_{i} \left[ \nabla \Fi{\widetilde{\vb{w}}_{t, k}^i}{i} - \grad{\widetilde{\vb{w}}_{t, k}^i}{i} \right] \right\|^2}_{a_2}}
    \end{align}
    First we bound term $a_{1}$ in expectation
    \begin{align}
        a_{1} &= - 2\gamma_{t} \left(\overbar{\vb{w}}_{t, k} - \vb{w}_{\star}\right)^{\transpose}\left(\sum_{i = 1}^C p_{i} \nabla \Fi{\widetilde{\vb{w}}_{t, k}^i}{i}\right) \label{eq:lemma_progress_our_algorithm_use_of_unbiased_gradient} \\
        &= - 2\gamma_{t} \sum_{i = 1}^C p_{i} \nabla \Fi{\widetilde{\vb{w}}_{t, k}^i}{i}^{\transpose} \left(\overbar{\vb{w}}_{t, k} - \vb{w}_{\star}\right) \\
        &= - 2\gamma_{t} \sum_{i = 1}^C p_{i} \nabla \Fi{\widetilde{\vb{w}}_{t, k}^i}{i}^{\transpose} \left(\overbar{\vb{w}}_{t, k} - \widetilde{\vb{w}}_{t, k}^i\right) + \\
        &\fake{=} - 2\gamma_{t} \sum_{i = 1}^C p_{i} \nabla \Fi{\widetilde{\vb{w}}_{t, k}^i}{i}^{\transpose} \left( \widetilde{\vb{w}}_{t, k}^i - \vb{w}_{\star}\right) \\
        &\le \gamma_{t} \sum_{i = 1}^C p_{i} \left[ \gamma_{t}\left\|\nabla \Fi{\widetilde{\vb{w}}_{t, k}^i}{i}\right\|^2 + \frac{1}{\gamma_{t}}  \left\|\overbar{\vb{w}}_{t, k} - \widetilde{\vb{w}}_{t, k}^i\right\|^2 \right] + \label{eq:lemma_progress_our_algorithm_use_of_peter_paul} \\
        &\fake{\le} 2\gamma_{t} \sum_{i = 1}^C p_{i} \left[ \Fi{\vb{w}_{\star}}{i} - \Fi{\widetilde{\vb{w}}_{t, k}^i}{i} - \frac{\mu}{2}\left\|\widetilde{\vb{w}}_{t, k}^i - \vb{w}_{\star}\right\|^2 \right] \label{eq:lemma_progress_our_algorithm_use_of_convexity} \\
        &\le 2L\gamma_{t}^2 \sum_{i = 1}^C p_{i} \left(\Fi{\widetilde{\vb{w}}_{t, k}^i}{i} - \Fi{\vb{w}_{\star}^i}{i}\right) + \underbrace{\sum_{i = 1}^C p_{i} \left\|\overbar{\vb{w}}_{t, k} - \widetilde{\vb{w}}_{t, k}^i\right\|^2}_{a_{12}} + \label{eq:lemma_progress_our_algorithm_use_of_smoothness} \\
        &\fake{\le} 2\gamma_{t} \sum_{i = 1}^C p_{i} \left(\Fi{\vb{w}_{\star}}{i} - \Fi{\widetilde{\vb{w}}_{t, k}^i}{i}\right) \underbrace{- \mu\gamma_{t} \sum_{i = 1}^C p_{i}\left\|\widetilde{\vb{w}}_{t, k}^i - \vb{w}_{\star}\right\|^2}_{a_{11}}
    \end{align}
    where we use Assumption \ref{ass:bounded_variance} in equation \eqref{eq:lemma_progress_our_algorithm_use_of_unbiased_gradient}, Peter-Paul's inequality in equation \eqref{eq:lemma_progress_our_algorithm_use_of_peter_paul}, strong convexity in equation \eqref{eq:lemma_progress_our_algorithm_use_of_convexity}, and smoothness in  \eqref{eq:lemma_progress_our_algorithm_use_of_smoothness}. Addressing $a_{11}$, we use Jensen's inequality in equation \eqref{eq:lemma_our_algorithm_progress_jensen_0}, the result of Lemma \ref{lemma:our_algorithm_average_perturbed_iterate} in equation \eqref{eq:lemma_our_algorithm_progress_lemma_average_perturbed_replacement}, and the rule $-2\vb{u}^{\transpose}\vb{v} = \left\|\vb{u} - \vb{v}\right\|^2 - \left\|\vb{u}\right\|^2 - \left\|\vb{v}\right\|^2$ in equation \eqref{eq:lemma_our_algorithm_progress_parallelogram_law}.
    \begin{align}
        a_{11} &\le -\mu\gamma_{t} \left\| \sum_{i = 1}^C p_{i} \widetilde{\vb{w}}_{t, k}^i - \vb{w}_{\star} \right\|^2 \label{eq:lemma_our_algorithm_progress_jensen_0} \\
        &\le -\mu\gamma_{t} \left\| \beta \overbar{\vb{w}}_{t, k} + (1 - \beta) \overbar{\vb{w}}_{t, 0} - \vb{w}_{\star} \right\|^2 \label{eq:lemma_our_algorithm_progress_lemma_average_perturbed_replacement}\\
        &= -\mu\gamma_{t} \left\| \beta \left( \overbar{\vb{w}}_{t, k}  - \vb{w}_{\star} \right) + (1 - \beta)  \left(\overbar{\vb{w}}_{t, 0} - \vb{w}_{\star} \right) \right\|^2 \\
        &= -\mu\gamma_{t}\beta^2 D_{t, k} - \mu\gamma_{t}(1 - \beta)^2 D_{t, 0} - 2\mu\gamma_{t}\beta(1 - \beta) \left( \overbar{\vb{w}}_{t, k}  - \vb{w}_{\star} \right)^{\transpose} \left( \overbar{\vb{w}}_{t, 0}  - \vb{w}_{\star} \right) \\
        &= -\mu\gamma_{t}\beta^2 D_{t, k} - \mu\gamma_{t}(1 - \beta)^2 D_{t, 0} + \\
        &\fake{=} - \mu\gamma_{t}\beta(1 - \beta) D_{t, k} - \mu\gamma_{t}\beta(1 - \beta) D_{t, 0} +  \mu\gamma_{t}\beta(1 - \beta) \left\| \overbar{\vb{w}}_{t, k}  - \overbar{\vb{w}}_{t, 0} \right\|^2 \label{eq:lemma_our_algorithm_progress_parallelogram_law} \\
        &= -\mu\gamma_{t}\beta D_{t, k} - \mu\gamma_{t}(1 - \beta) D_{t, 0} + \mu\gamma_{t}\beta(1 - \beta) \left\| \overbar{\vb{w}}_{t, k}  - \overbar{\vb{w}}_{t, 0} \right\|^2
    \end{align}
    Using the definition of average iterate, we rewrite $\overbar{\vb{w}}_{t, k}  - \overbar{\vb{w}}_{t, 0}$ in \eqref{eq:lemma_our_algorithm_progress_average_update_rule} by applying recursion.
    \begin{align}
        a_{11} &\le -\mu\gamma_{t}\beta D_{t, k} - \mu\gamma_{t}(1 - \beta) D_{t, 0} + \mu\gamma_{t}\beta(1 - \beta) \left\| \gamma_{t} \sum_{m = 0}^{k - 1} \sum_{i = 1}^C p_{i} \grad{\widetilde{\vb{w}}_{t, m}^i}{i} \right\|^2 \label{eq:lemma_our_algorithm_progress_average_update_rule} \\
        &\le -\mu\gamma_{t}\beta D_{t, k} - \mu\gamma_{t}(1 - \beta) D_{t, 0} + \mu\gamma_{t}^3\beta(1 - \beta)k \sum_{m = 0}^{k - 1} \sum_{i = 1}^C p_{i} \left\| \grad{\widetilde{\vb{w}}_{t, m}^i}{i} \right\|^2 \label{eq:lemma_our_algorithm_progress_average_convexity}
    \end{align}
    Here, we recall Jensen's inequality in equation \eqref{eq:lemma_our_algorithm_progress_average_convexity}, then we use Assumption \ref{ass:bounded_stochastic_norm} in \eqref{eq:lemma_our_algorithm_progress_average_bounded_norm}. Under expectation, we have that $k \le E - 1$ in equation \eqref{eq:lemma_our_algorithm_progress_average_bounded_steps_count}.
    \begin{align}
        \expected \, a_{11} &\le -\mu\gamma_{t}\beta D_{t, k} - \mu\gamma_{t}(1 - \beta) D_{t, 0} + \mu\gamma_{t}^3\beta(1 - \beta)k^2 G^2 \label{eq:lemma_our_algorithm_progress_average_bounded_norm} \\
        &\le -\mu\gamma_{t}\beta D_{t, k} - \mu\gamma_{t}(1 - \beta) D_{t, 0} + \mu\gamma_{t}^3\beta(1 - \beta)E^2 G^2 \label{eq:lemma_our_algorithm_progress_average_bounded_steps_count}
    \end{align}
    Finally, after bounding $\expected \, a_{12}$ using the result of Lemma \ref{lemma:our_algorithm_perturbed_iterates_difference}, we bound $A_1$ in expectation.
    \begin{align}
        \expected \, a_1 &\le -\mu\gamma_{t}\beta \, \expected \, D_{t, k} - \mu\gamma_{t}(1 - \beta) \, \expected \,  D_{t, 0} + \\
        &\fake{\le} 4 \gamma_{t}^2 E^2 G^2 \left[4 + (1 - \beta)^2 + \mathds{1}_{t \ge 1} \frac{8\gamma_{t - 1}^2}{\gamma_{t}^2} \left(1 - \frac{1}{\beta}\right)^2 \right] + \mu\gamma_{t}^3\beta(1 - \beta)E^2 G^2 + \\
        &\fake{\le} \ev{2L\gamma_{t}^2 \sum_{i = 1}^C p_{i} \left(\Fi{\widetilde{\vb{w}}_{t, k}^i}{i} - \Fi{\vb{w}_{\star}^i}{i}\right) + 2\gamma_{t} \sum_{i = 1}^C p_{i} \left(\Fi{\vb{w}_{\star}}{i} - \Fi{\widetilde{\vb{w}}_{t, k}^i}{i}\right)}
    \end{align}
    Now, we bound term $a_{2}$.
    \begin{align*}
        \expected \, a_2 &= \gamma_{t}^2 \, \expected \underbrace{\left\| \sum_{i = 1}^C p_{i} \left[ \grad{\widetilde{\vb{w}}_{t, k}^i}{i} - \nabla \Fi{\widetilde{\vb{w}}_{t, k}^i}{i} \right] \right\|^2}_{a_{21}} + \gamma_{t}^2 \, \expected \underbrace{\left\|\sum_{i = 1}^C p_{i} \nabla \Fi{\widetilde{\vb{w}}_{t, k}^i}{i}\right\|^2}_{a_{22}}
    \end{align*}
    To bound term $a_{21}$, we recall Assumption \ref{ass:bounded_variance} to nullify the dot products between terms in $a_{21}$ and bound the squared norms. Hence, we obtain
    \begin{align*}
        \expected \, a_{21} = \sum_{i = 1}^C p_{i}^2 \, \expected \, \left\|\grad{\widetilde{\vb{w}}_{t, k}^i}{i} - \nabla \Fi{\widetilde{\vb{w}}_{t, k}^i}{i}\right\|^2 \le \sigma^2 \sum_{i = 1}^C p_{i}^2
    \end{align*}
    We recall Jensen's inequality on $\|\cdot\|^2$ to bound term $a_{22}$.
    \begin{align}
        \expected \, a_{22} = \expected \, \left\|\sum_{i = 1}^C p_{i} \nabla \Fi{\widetilde{\vb{w}}_{t, k}^i}{i}\right\|^2 \le \sum_{i = 1}^C p_{i} \, \expected \, \left\| \nabla \Fi{\widetilde{\vb{w}}_{t, k}^i}{i}\right\|^2
    \end{align}
    After combining these intermediate results, we eventually use smoothness in \eqref{eq:lemma_our_algorithm_progress_smoothness_gradient_replacement}.
    \begin{align}
        \expected \, a_2 &\le 
        \gamma_{t}^2 \sigma^2 \sum_{i = 1}^C p_{i}^2 + \gamma_{t}^2 \sum_{i = 1}^C p_{i} \, \expected \, \left\| \nabla \Fi{\widetilde{\vb{w}}_{t, k}^i}{i} \right\|^2 \\
        &\le \gamma_{t}^2 \sigma^2 \sum_{i = 1}^C p_{i}^2 + \ev{2L\gamma_{t}^2 \sum_{i = 1}^C p_{i} \left(\Fi{\widetilde{\vb{w}}_{t, k}^i}{i} - \Fi{\vb{w}_{\star}^i}{i}\right)} \label{eq:lemma_our_algorithm_progress_smoothness_gradient_replacement}
    \end{align}
    In expectation, we combine the bounds on $a_1$ and $a_2$ into our main equation.
    \begin{align}
        \expected \, D_{t, k + 1} &\le \left(1 - \mu \gamma_{t} \beta \right) \, \expected \, D_{t, k} - \mu \gamma_{t} (1 - \beta) \, \expected \, D_{t, 0} + \mu\gamma_{t}^3\beta(1 - \beta)E^2 G^2 + \\
        &\fake{\le} \gamma_{t}^2 \sigma^2 \sum_{i = 1}^C p_{i}^2 + 4 \gamma_{t}^2 E^2 G^2 \left[4 + (1 - \beta)^2 + \mathds{1}_{t \ge 1} \frac{8\gamma_{t - 1}^2}{\gamma_{t}^2} \left(1 - \frac{1}{\beta}\right)^2 \right]  + \\
        &\fake{\le} \ev{\underbrace{2\gamma_{t} \sum_{i = 1}^C p_{i} \left(\Fi{\vb{w}_{\star}}{i} - \Fi{\widetilde{\vb{w}}_{t, k}^i}{i}\right) + 4L\gamma_{t}^2 \sum_{i = 1}^C p_{i} \left(\Fi{\widetilde{\vb{w}}_{t, k}^i}{i} - \Fi{\vb{w}_{\star}^i}{i}\right)}_{b_1}}
    \end{align}
    We rewrite term $b_1$ as follows by adding and subtracting $4L\gamma_{t}^2 \sum_{i = 1}^C p_i \Fi{\vb{w}_{\star}}{i}$.
    \begin{align}
        b_1 &= -2\gamma_{t}(1 - 2 L \gamma_{t}) \sum_{i = 1}^C p_{i} \left(\Fi{\widetilde{\vb{w}}_{t, k}^i}{i} - \Fi{\vb{w}_{\star}}{i}\right) + 4L\gamma_{t}^2 \sum_{i = 1}^C p_{i} \left(\Fi{\vb{w}_{\star}}{i} - \Fi{\vb{w}_{\star}^i}{i}\right) \\
        &= -2\gamma_{t}(1 - 2 L \gamma_{t}) \sum_{i = 1}^C p_{i} \left(\Fi{\widetilde{\vb{w}}_{t, k}^i}{i} - \Fi{\vb{w}_{\star}}{i}\right) + 4\gamma_{t}^2L\Gamma  \label{eq:lemma_our_algorithm_progress_heterogeneity_substitution}
    \end{align}
    In equation \eqref{eq:lemma_our_algorithm_progress_heterogeneity_substitution}, we use the fact that $\sum_{i = 1}^C p_i \Fi{\cdot}{i} = \F{\cdot}$, and we recall the definition of statistical heterogeneity $\Gamma$. Now, we introduce term $\Fi{\overbar{{\vb{w}}}_{t, k}}{i}$ in the summation from expression \eqref{eq:lemma_our_algorithm_progress_heterogeneity_substitution}.
    \begin{align}
        b_1 &= - 2\gamma_{t}(1 - 2 L \gamma_{t}) \sum_{i = 1}^C p_{i} \left(\Fi{\overbar{{\vb{w}}}_{t, k}}{i} - \Fi{\vb{w}_{\star}}{i}\right) + 4\gamma_{t}^2L\Gamma  + \\
        &\fake{=} -2\gamma_{t}(1 - 2 L \gamma_{t}) \sum_{i = 1}^C p_{i} \left(\Fi{\widetilde{\vb{w}}_{t, k}^i}{i} - \Fi{\overbar{{\vb{w}}}_{t, k}}{i}\right) \\
        &= - 2\gamma_{t}(1 - 2 L \gamma_{t}) \left(\F{\overbar{{\vb{w}}}_{t, k}} - \F{\vb{w}_{\star}}\right) + 4\gamma_{t}^2L\Gamma  + \\
        &\fake{=} 2\gamma_{t}(1 - 2 L \gamma_{t}) \sum_{i = 1}^C p_{i} \left(\Fi{\overbar{{\vb{w}}}_{t, k}}{i} - \Fi{\widetilde{\vb{w}}_{t, k}^i}{i}\right) \\
        &\le - 2\gamma_{t}(1 - 2 L \gamma_{t}) \left(\F{\overbar{{\vb{w}}}_{t, k}} - \F{\vb{w}_{\star}}\right) + 4\gamma_{t}^2L\Gamma  + \\
        &\fake{=} 2\gamma_{t}(1 - 2 L \gamma_{t}) \sum_{i = 1}^C p_{i} \nabla \Fi{\overbar{{\vb{w}}}_{t, k}}{i}^{\transpose}\left(\overbar{{\vb{w}}}_{t, k} - \widetilde{{\vb{w}}}_{t, k}^i\right) \label{eq:lemma_our_algorithm_progress_convexity_usage} \\
        &\le - 2\gamma_{t}(1 - 2 L \gamma_{t}) \left(\F{\overbar{{\vb{w}}}_{t, k}} - \F{\vb{w}_{\star}}\right) + 4\gamma_{t}^2L\Gamma  + \\
        &\fake{=} 2\gamma_{t}(1 - 2 L \gamma_{t}) \sum_{i = 1}^C p_{i} \left[\frac{\gamma_{t}}{2} \left\|\nabla \Fi{\overbar{{\vb{w}}}_{t, k}}{i}\right\|^2 + \frac{1}{2\gamma_{t}} \left\|\overbar{{\vb{w}}}_{t, k} - \widetilde{{\vb{w}}}_{t, k}^i\right\|^2 \right] \label{eq:lemma_our_algorithm_progress_peter_paul_in_B} \\
        &\le - 2\gamma_{t}(1 - 2 L \gamma_{t}) \left(\F{\overbar{{\vb{w}}}_{t, k}} - \F{\vb{w}_{\star}}\right) + 4\gamma_{t}^2L\Gamma  + \\
        &\fake{=} 2\gamma_{t}(1 - 2 L \gamma_{t}) \sum_{i = 1}^C p_{i} \left[L\gamma_{t}\left(\Fi{\overbar{{\vb{w}}}_{t, k}}{i} - \Fi{\vb{w}_{\star}^i}{i}\right) + \frac{1}{2\gamma_{t}} \left\|\overbar{{\vb{w}}}_{t, k} - \widetilde{{\vb{w}}}_{t, k}^i\right\|^2 \right] \label{eq:lemma_our_algorithm_progress_smoothness_in_B}
    \end{align}
    Peculiarly, we leverage convexity in \eqref{eq:lemma_our_algorithm_progress_convexity_usage}, Peter-Paul's inequality in expression \eqref{eq:lemma_our_algorithm_progress_peter_paul_in_B}, and smoothness in \eqref{eq:lemma_our_algorithm_progress_smoothness_in_B}. From \eqref{eq:lemma_our_algorithm_progress_smoothness_in_B}, we add and subtract $2L\gamma_{t}^2(1 - 2L\gamma_{t})\F{\vb{w}_{\star}}$.
    \begin{align}
        b_1 &\le -2\gamma_{t}(1 - 2L\gamma_{t})(1 - L\gamma_{t}) \left(\F{\overbar{{\vb{w}}}_{t, k}} - \F{\vb{w}_{\star}}\right) + 4\gamma_{t}^2L\Gamma +  \\
        &\fake{\le} 2L\gamma_{t}^2(1 - 2L\gamma_{t})\left(\F{\vb{w}_{\star}} - \sum_{i = 1}^C p_{i} \Fi{\vb{w}_{\star}^i}{i}\right) + \sum_{i = 1}^C p_{i} \left\|\overbar{{\vb{w}}}_{t, k} - \widetilde{{\vb{w}}}_{t, k}^i\right\|^2 \label{eq:lemma_our_algorithm_progress_1_2L_approx}  \\
        &= -2\gamma_{t}(1 - 2L\gamma_{t})(1 - L\gamma_{t}) \left(\F{\overbar{{\vb{w}}}_{t, k}} - \F{\vb{w}_{\star}}\right) + 2L\Gamma\gamma_{t}^2(3 - 2L\gamma_{t}) + \\
        &\fake{=} \sum_{i = 1}^C p_{i} \left\|\overbar{{\vb{w}}}_{t, k} - \widetilde{{\vb{w}}}_{t, k}^i\right\|^2 \label{eq:lemma_our_algorithm_progress_heterogeneity_substitution_2}
    \end{align}
    In expectation, we have once again
    \begin{align}
        \expected \, b_1 &\le 6L\Gamma\gamma_{t}^2 + 4 \gamma_{t}^2 E^2 G^2 \left[4 + (1 - \beta)^2 + \mathds{1}_{t \, \ge \, 1} \frac{8\gamma_{t - 1}^2}{\gamma_{t}^2} \left(1 - \frac{1}{\beta}\right)^2 \right]\label{eq:lemma_our_algorithm_progress_bound_B_end}
    \end{align}
    Again, we use the definition of heterogeneity in \eqref{eq:lemma_our_algorithm_progress_heterogeneity_substitution_2}. On the other hand, in expression \eqref{eq:lemma_our_algorithm_progress_bound_B_end}, we exploit the fact that $(1 - 2L\gamma_{t})(1 - L\gamma_{t}) \ge 0$ since $\gamma_{t} \le 1 / (2L)$ due to Assumption \ref{lemma:our_algorithm_one_step_progress_step_decay}, and also $1 - 2L\gamma_{t} \le 1$ in \eqref{eq:lemma_our_algorithm_progress_1_2L_approx} and \eqref{eq:lemma_our_algorithm_progress_bound_B_end}. Eventually, we use the outcome of Lemma \ref{lemma:our_algorithm_perturbed_iterates_difference} in \eqref{eq:lemma_our_algorithm_progress_bound_B_end}. In expectation, we utilize this bound back into our main expression.
    \begin{align}
        \expected \, D_{t, k + 1} &\le \left(1 - \mu \gamma_{t} \beta \right) \, \expected \, D_{t, k} - \mu \gamma_{t} (1 - \beta) \, \expected \,  D_{t, 0} + \gamma_{t}^2 \sigma^2 \sum_{i = 1}^C p_{i}^2 + 6 \gamma_{t}^2 L \Gamma + \\
        &\fake{\le} 8 \gamma_{t}^2 E^2 G^2 \left[4 + (1 - \beta)^2 + \mathds{1}_{t \, \ge \, 1} \frac{8\gamma_{t - 1}^2}{\gamma_{t}^2} \left(1 - \frac{1}{\beta}\right)^2 \right] + \mu\gamma_{t}^3\beta(1 - \beta)E^2 G^2
    \end{align}
    We define variables
    \begin{align*}
        a &= 1 - \mu \gamma_{t} \beta \\
        b &= - \mu \gamma_{t} (1 - \beta) \\
        c&= 8 \gamma_{t}^2 E^2 G^2 \left[4 + (1 - \beta)^2 + \frac{\mathds{1}_{t \, \ge \, 1} 8\gamma_{t - 1}^2 (1 - \beta)^2}{\gamma_{t}^2 \beta^2} \right] + \mu\gamma_{t}^3\beta(1 - \beta)E^2 G^2 + \gamma_{t}^2  S \sigma^2 + 6 \gamma_{t}^2 L \Gamma
    \end{align*}
    where $S \eqdef \sum_{i = 1}^C p_{i}^2$. Hence, we apply the usual recursion technique to the following expression.
    \begin{align}
        \expected \, D_{t, k + 1} &\le a \, \expected \, D_{t, k} + b \, \expected \, D_{t, 0} + c \\
        &\ldots \\
        &\le \left[a^{k + 1} + b \sum_{m = 0}^{k} a^{m}\right] \expected \, D_{t, 0} + c \sum_{m = 0}^{k} a^{m} \\
        &\le \left[a^{k + 1} + b \frac{1 - a^{k + 1}}{1 - a} \right] \expected \, D_{t, 0} + c (k + 1) \label{eq:lemma_our_algorithm_progress_rough_approximation_geometric} \\
        &= \frac{b + (1 - a - b)a^{k + 1}}{1 - a} \, \expected \, D_{t, 0} + c (k + 1)
    \end{align}
    In expression \eqref{eq:lemma_our_algorithm_progress_rough_approximation_geometric}, we roughly approximate the geometric series that multiplies term $c$ using the fact that $(1 - \mu \gamma_{t} \beta)^{m} \le 1$ to preserve $\gamma_{t}^2$ within $c$. To conclude, after setting $k + 1 = E$ to establish the bound for a round of communication, we replace $a, b$ and $c$.
\end{proof}

We derive the convergence rate for a specific choice of step size.

\OurAlgorithmConvergenceStronglyConvex*

\begin{proof}
    We first apply the principle of recursion on the result of Lemma \ref{lemma:our_algorithm_single_round_global_progress} in equation \eqref{eq:our_algorithm_convergence_constant_step_size_lemma_replacement}. Namely, if we denote $\expected \, \left\|\overbar{\vb{w}}_{t, 0} - \vb{w}_{\star}\right\|^2$ as $D_{t}$, we have
    \begin{align}
        D_{t} &\le \kappa D_{t - 1} + A \label{eq:our_algorithm_convergence_constant_step_size_lemma_replacement} \\
        &\le \kappa  \left(\kappa D_{t - 2} + A\right) + A \\
        &\ldots \\
        &\le \kappa^{t} D_{0} + A \sum_{m = 0}^{t - 1} \kappa^{m} \\
        &= \kappa^{t} D_{0} + A \sum_{m = 0}^{t - 1} \frac{1 - \kappa^{t}}{1 - \kappa} \\
        &\le \kappa^{t} D_{0} + \frac{A}{1 - (1 - \gamma\mu)} \left(1 - \kappa^{t}\right) \label{eq:our_algorithm_convergence_constant_step_size_contraction_replacement} \\
        &\le \kappa^{t} \left(D_{0} - \frac{A}{\gamma\mu}\right) + \frac{A}{\gamma\mu}
    \end{align}
    where we use the coarser but simpler approximation $\kappa \le 1 - \gamma\mu$ in equation \eqref{eq:our_algorithm_convergence_constant_step_size_contraction_replacement}. Finally, under total expectation, we invoke smoothness.
    \begin{align*}
        \expected \, \F{\overbar{\vb{w}}_{t, 0}} - \F{\vb{w}_{\star}} &\le \frac{L}{2} D_{t} \le \frac{L}{2}\kappa^{t} \left(D_{0} - \frac{A}{\gamma\mu}\right) + \frac{L}{2\gamma\mu}A
    \end{align*}
    With strong convexity we have $D_{0} \le 2\left(\F{\overbar{\vb{w}}_{0, 0}} - \F{\vb{w}_{\star}}\right)/\mu$. We bound the contraction factor $\kappa$ using the definition of step size. 
    \begin{align}
        \kappa &= 1 - \frac{1}{\beta} + \frac{1}{\beta}\left(1 - \frac{\beta \mu}{2EL}\right)^E \\
        &= 1 - \frac{1}{\beta} + \frac{1}{\beta}\left[\left(1 - \frac{\beta\mu}{2EL}\right)^{-\frac{2EL}{\beta\mu}}\right]^{-\frac{\beta\mu}{2L}} \\
        &\le 1 - \frac{1}{\beta} + \frac{1}{\beta}e^{-\frac{\beta\mu}{2L}} \label{eq:our_algorithm_convergence_constant_step_size_e_bound} \\
        &\le 1 - \frac{1}{\beta} + \frac{1}{\beta}\frac{2L}{\beta\mu + 2L} \label{eq:our_algorithm_convergence_constant_step_size_e_bound_2} \\
        &= 1 - \frac{\mu}{\beta \mu + 2L}
    \end{align}
    We use fact that $(1 + 1/x)^x \le e$ in equation \eqref{eq:our_algorithm_convergence_constant_step_size_e_bound}, and $e^{-x} \le 1/(x + 1)$ for any $x > -1$ in \eqref{eq:our_algorithm_convergence_constant_step_size_e_bound_2}. Moreover, we replace the chosen $\gamma$ in the error term. Lastly, we approximate $\beta \mu + 2L \le L(\beta + 2)$ and neglect the negative term in the bound.
\end{proof}

\subsection{Main results for nonconvex analysis}

The following lemma quantifies the progress made by our algorithm in a global round for nonconvex loss objectives.

\begin{lemma}[Single round progress of our algorithm for nonconvex loss]
    \label{lemma:our_algorithm_nonconvex_single_round_global_progress}
    Assume that $\gamma_{t} \le 1 / L \label{ass:our_algorithm_nonconvex_single_round_global_progress_step_size_assumption}$ and Assumptions \ref{ass:full_participation} to \ref{ass:smoothness} and \ref{ass:lower_bounded_objective} hold. Then, in a single round, we have
    \begin{align}
        \frac{1}{E} \sum_{k = 0}^{E - 1} \expected \, \left\|\nabla \F{\overbar{\vb{w}}_{t, k}}\right\|^2 \le \frac{2}{\gamma_{t} E} \, \ev{\F{\overbar{\vb{w}}_{t, 0}} - \F{\overbar{\vb{w}}_{t + 1, 0}}} + A
    \end{align}
    where $A = \gamma_{t} L \sigma^2 \sum_{i = 1}^C p_i^2 + 4 \gamma_{t}^2 L^2 E^2 G^2 \left[4 + (1 - \beta)^2 + \mathds{1}_{t \, \ge \, 1}  \dfrac{8\gamma_{t - 1}^2}{\gamma_{t}^2} \left(1 - \dfrac{1}{\beta}\right)^2 \right]$.
\end{lemma}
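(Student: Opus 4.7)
The plan is to follow the same scheme used for the \textsc{FedProx} nonconvex analysis in Lemma~\ref{lemma:fedprox_nonconvex_single_round_global_progress}, but with the update rule simplified to the pure SGD form $\overbar{\vb{w}}_{t, k+1} - \overbar{\vb{w}}_{t, k} = -\gamma_{t} \sum_{i=1}^C p_i \grad{\widetilde{\vb{w}}_{t, k}^i}{i}$, which is free of the proximal drift term $\alpha\gamma_t(\overbar{\vb{w}}_{t,0}-\overbar{\vb{w}}_{t,k})$. First, I would apply Assumption~\ref{ass:smoothness} in its descent-lemma form to the global objective $\F{\cdot}$ evaluated at two consecutive average iterates and substitute the update rule on the right-hand side, producing a first-order term in $\nabla\F{\overbar{\vb{w}}_{t,k}}^\transpose \sum_i p_i\grad{\widetilde{\vb{w}}_{t,k}^i}{i}$ and a quadratic term of the form $(L\gamma_t^2/2)\|\sum_i p_i\grad{\widetilde{\vb{w}}_{t,k}^i}{i}\|^2$.

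Next, I would take expectation to eliminate the noise component by Assumption~\ref{ass:bounded_variance}: the cross term in the quadratic decomposes into a variance piece bounded by $\sigma^2 \sum_i p_i^2$ (exploiting independence of clients) and a deterministic piece $\|\sum_i p_i\nabla\Fi{\widetilde{\vb{w}}_{t,k}^i}{i}\|^2$. For the first-order term I would use the identity $-2\vb{u}^\transpose\vb{v}=\|\vb{u}-\vb{v}\|^2-\|\vb{u}\|^2-\|\vb{v}\|^2$ with $\vb{u}=\nabla\F{\overbar{\vb{w}}_{t,k}}$ and $\vb{v}=\sum_i p_i\nabla\Fi{\widetilde{\vb{w}}_{t,k}^i}{i}$. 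Since $\nabla\F{\overbar{\vb{w}}_{t,k}}=\sum_i p_i\nabla\Fi{\overbar{\vb{w}}_{t,k}}{i}$, the residual $\|\vb{u}-\vb{v}\|^2$ can be controlled through Jensen's inequality and $L$-smoothness by $L^2\sum_i p_i\|\overbar{\vb{w}}_{t,k}-\widetilde{\vb{w}}_{t,k}^i\|^2$, at which point Lemma~\ref{lemma:our_algorithm_perturbed_iterates_difference} finishes that estimate.

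Then I would collect the coefficients of $\|\sum_i p_i\nabla\Fi{\widetilde{\vb{w}}_{t,k}^i}{i}\|^2$: one copy comes with $-\gamma_t/2$ from the parallelogram identity and another with $+L\gamma_t^2/2$ from the smoothness quadratic, combining to $-(\gamma_t/2)(1-L\gamma_t)\|\cdot\|^2$, which is nonpositive under the step-size assumption $\gamma_t\le 1/L$ and hence can be dropped. The remaining inequality reads
\begin{align*}
\ev{\F{\overbar{\vb{w}}_{t,k+1}}-\F{\overbar{\vb{w}}_{t,k}}}\le -\tfrac{\gamma_t}{2}\expected\|\nabla\F{\overbar{\vb{w}}_{t,k}}\|^2 + \tfrac{\gamma_t L^2}{2}\sum_i p_i\expected\|\overbar{\vb{w}}_{t,k}-\widetilde{\vb{w}}_{t,k}^i\|^2 + \tfrac{L\gamma_t^2\sigma^2}{2}\sum_i p_i^2.
\end{align*}
Substituting the bound from Lemma~\ref{lemma:our_algorithm_perturbed_iterates_difference} produces a $k$-independent constant on the middle term, so summing over $k=0,\ldots,E-1$ telescopes the left-hand side to $\F{\overbar{\vb{w}}_{t,0}}-\F{\overbar{\vb{w}}_{t+1,0}}$ and finally dividing by $\gamma_t E/2$ yields precisely the claimed bound, with the $4\gamma_t^2 L^2 E^2 G^2[\cdots]$ factor arising from $2\cdot\gamma_t L^2\cdot 4\gamma_t^2 E^2 G^2/\gamma_t$ and the $\gamma_t L\sigma^2\sum_i p_i^2$ factor arising from $L\gamma_t^2\sigma^2\cdot E/(\gamma_t E/2)$.

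The only real obstacle is bookkeeping: making sure the step-size condition $\gamma_t\le 1/L$ is exactly tight enough to wipe out the stochastic-gradient-norm square term (no heterogeneity term $\Gamma$ is needed here because we never invoked convexity, and the unbiasedness suffices), and keeping straight that the perturbed-iterate bound $\expected\|\overbar{\vb{w}}_{t,k}-\widetilde{\vb{w}}_{t,k}^i\|^2$ is independent of $k$ so that summation gives an extra factor $E$ cleanly. Everything else follows the template already executed in Lemma~\ref{lemma:fedprox_nonconvex_single_round_global_progress}.
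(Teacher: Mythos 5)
Your proposal is correct and follows essentially the same route as the paper's own proof: descent lemma on the averaged update, unbiasedness to kill the noise cross term and decompose the quadratic into a $\sigma^2\sum_i p_i^2$ piece plus $\|\sum_i p_i\nabla\Fi{\widetilde{\vb{w}}_{t,k}^i}{i}\|^2$, the identity $-2\vb{u}^{\transpose}\vb{v}=\|\vb{u}-\vb{v}\|^2-\|\vb{u}\|^2-\|\vb{v}\|^2$ plus Jensen and $L$-smoothness to reduce the first-order term to $\frac{\gamma_t L^2}{2}\sum_i p_i\|\overbar{\vb{w}}_{t,k}-\widetilde{\vb{w}}_{t,k}^i\|^2$ controlled by Lemma \ref{lemma:our_algorithm_perturbed_iterates_difference}, dropping the $-(\gamma_t/2)(1-L\gamma_t)\|\cdot\|^2$ term under $\gamma_t\le 1/L$, then telescoping. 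The only blemish is a dropped factor of $\tfrac12$ in your closing arithmetic sanity checks (the per-step terms carry coefficients $\tfrac{\gamma_t L^2}{2}$ and $\tfrac{\gamma_t^2 L\sigma^2}{2}$, which after multiplying by $2/(\gamma_t)$ give exactly the stated $A$), but your displayed inequality and final bound are consistent with the lemma.
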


\begin{proof}
    This proof is very similar to the one of Lemma \ref{lemma:fedprox_nonconvex_single_round_global_progress} with problem-specific adjustments. Therefore, we begin by stating the definition of the update rule for the average iterate.
    \begin{align}
        \overbar{\vb{w}}_{t, k + 1} - \overbar{\vb{w}}_{t, k} = - \gamma_{t} \sum_{i = 1}^C p_i \grad{\widetilde{{\vb{w}}}_{t, k}^i}{i}
    \end{align}
    Iterates $\overbar{\vb{w}}_{t, k + 1}$ and $\overbar{\vb{w}}_{t, k}$ are replaced in the definition of smoothness from \ref{ass:smoothness}. We also define $\delta_{t, k} = \F{\overbar{\vb{w}}_{t, k + 1}} - \F{\overbar{\vb{w}}_{t, k}}$.
    \begin{align}
        \expected \, \delta_{t, k} &\le \ev{\underbrace{-\gamma_{t} \sum_{i = 1}^C p_i \nabla \Fi{\widetilde{{\vb{w}}}_{t, k}^i}{i}^{\transpose} \nabla \F{\overbar{\vb{w}}_{t, k}}}_{a_1}} + \ev{\underbrace{\frac{\gamma_{t}^2 L}{2} \left\| \sum_{i = 1}^C p_i \grad{\widetilde{{\vb{w}}}_{t, k}^i}{i}\right\|^2}_{a_2}} \\
        &\fake{\le} \ev{\underbrace{-\gamma_{t} \, \sum_{i = 1}^C p_i \left[\grad{\widetilde{{\vb{w}}}_{t, k}^i}{i} - \nabla \Fi{\widetilde{{\vb{w}}}_{t, k}^i}{i} \right]^{\transpose} \nabla \F{\overbar{\vb{w}}_{t, k}}}_{\widetilde{a}_1}}
    \end{align}
    Because of Assumption \ref{ass:bounded_variance} on the unbiasedness of the stochastic gradient, we observe that $\expected \, \widetilde{a}_1 = 0$. Let us bound $a_1$ first.
    \begin{align}
        a_1 &\le \frac{\gamma_{t}}{2}\left\|\sum_{i = 1}^C p_i \nabla \Fi{\widetilde{{\vb{w}}}_{t, k}^i}{i} - \nabla \F{\overbar{\vb{w}}_{t, k}}\right\|^2 - \frac{\gamma_{t}}{2}\left\|\sum_{i = 1}^C p_i \nabla \Fi{\widetilde{{\vb{w}}}_{t, k}^i}{i}\right\|^2 \\
        &\fake{\le} -\frac{\gamma_{t}}{2} \left\|\nabla \F{\overbar{\vb{w}}_{t, k}}\right\|^2 \label{eq:lemma_our_algorithm_progress_non_convex_parallelogram_law} \\
        &= \frac{\gamma_{t}}{2}\left\|\sum_{i = 1}^C p_i \left( \nabla \Fi{\widetilde{{\vb{w}}}_{t, k}^i}{i} - \nabla \Fi{\overbar{\vb{w}}_{t, k}}{i}\right)\right\|^2 - \frac{\gamma_{t}}{2}\left\|\sum_{i = 1}^C p_i \nabla \Fi{\widetilde{{\vb{w}}}_{t, k}^i}{i}\right\|^2 + \\
        &\fake{\le} -\frac{\gamma_{t}}{2} \left\|\nabla \F{\overbar{\vb{w}}_{t, k}}\right\|^2 \label{eq:lemma_our_algorithm_progress_non_convex_average_gradient_decomposition} \\
        &\le \frac{\gamma_{t}}{2}\sum_{i = 1}^C p_i \left\|\nabla \Fi{\widetilde{{\vb{w}}}_{t, k}^i}{i} - \nabla \Fi{\overbar{\vb{w}}_{t, k}}{i}\right\|^2 - \frac{\gamma_{t}}{2}\left\|\sum_{i = 1}^C p_i \nabla \Fi{\widetilde{{\vb{w}}}_{t, k}^i }{i}\right\|^2 + \\
        &\fake{\le} -\frac{\gamma_{t}}{2} \left\|\nabla \F{\overbar{\vb{w}}_{t, k}}\right\|^2 \label{eq:lemma_our_algorithm_progress_non_convex_average_gradient_jensen} \\
        &\le \frac{\gamma_{t} L^2}{2}\sum_{i = 1}^C p_i \left\|\widetilde{{\vb{w}}}_{t, k}^i - \overbar{\vb{w}}_{t, k}\right\|^2 - \frac{\gamma_{t}}{2}\left\|\sum_{i = 1}^C p_i \nabla \Fi{\widetilde{{\vb{w}}}_{t, k}^i}{i}\right\|^2 + \\
        &\fake{\le} - \frac{\gamma_{t}}{2} \left\|\nabla \F{\overbar{\vb{w}}_{t, k}}\right\|^2
        \label{eq:lemma_our_algorithm_progress_non_convex_lipschitz_gradient} 
    \end{align}
    In equation \eqref{eq:lemma_our_algorithm_progress_non_convex_parallelogram_law}, we use the fact that $2\vb{u}^{\transpose}\vb{v} = \|\vb{u}\|^2 + \|\vb{v}\|^2 - \|\vb{u} - \vb{v}\|^2$, while we leverage $\nabla \F{\cdot} = \sum_{i = 1}^C p_i \nabla \Fi{\cdot}{i}$ in \eqref{eq:lemma_our_algorithm_progress_non_convex_average_gradient_decomposition}, and we recall Jensen's inequality in \eqref{eq:lemma_our_algorithm_progress_non_convex_average_gradient_jensen}.
    In equation \eqref{eq:lemma_our_algorithm_progress_non_convex_lipschitz_gradient}, we use the Lipschitz characterization of smoothness. To bound term $a_2$, we exploit Assumptions \ref{ass:bounded_variance} as we already did in previous proofs.
    \begin{align}
        a_2 &=  \frac{\gamma_{t}^2 L}{2} \left\| \sum_{i = 1}^C p_i \left(\grad{\vb{w}_{t, k}^i}{i} - \Fi{\vb{w}_{t, k}^i}{i}\right) + \sum_{i = 1}^C p_i \Fi{\vb{w}_{t, k}^i}{i}\right\|^2 \\
        &= \frac{\gamma_{t}^2 L}{2} \, \left[\left\| \sum_{i = 1}^C p_i \left(\grad{\vb{w}_{t, k}^i}{i} - \Fi{\vb{w}_{t, k}^i}{i}\right) \right\|^2 + \left\| \sum_{i = 1}^C p_i \Fi{\vb{w}_{t, k}^i}{i}\right\|^2\right] \\
        &=  \frac{\gamma_{t}^2 L}{2} \, \left[\sum_{i = 1}^C p_i^2 \left\| \grad{\vb{w}_{t, k}^i}{i} - \Fi{\vb{w}_{t, k}^i}{i} \right\|^2 + \left\| \sum_{i = 1}^C p_i \Fi{\vb{w}_{t, k}^i}{i}\right\|^2\right]
    \end{align}
    Taking the expectation, we have
    \begin{align}
        \expected \, a_2 \le  \frac{\gamma_{t}^2 L \sigma^2}{2} \sum_{i = 1}^C p_i^2 + \frac{\gamma_{t}^2 L}{2} \, \expected \, \left\| \sum_{i = 1}^C p_i \nabla \Fi{\vb{w}_{t, k}^i}{i} \right\|^2
    \end{align}
    We join the bounds on $a_1$ and $a_2$. Thus, under expectation, we attain
    \begin{align}
        \expected \, \delta_{t, k} &\le -\frac{\gamma_{t}(1 - L\gamma_{t})}{2} \, \expected  \left\| \sum_{i = 1}^C p_i \nabla \Fi{\widetilde{\vb{w}}_{t, k}^i}{i} \right\|^2 - \frac{\gamma_{t}}{2} \, \expected \, \left\|\nabla \F{\overbar{\vb{w}}_{t, k}}\right\|^2 + \frac{\gamma_{t}^2 L \sigma^2}{2} \sum_{i = 1}^C p_i^2 + \\
        &\fake{\le} \underbrace{\frac{\gamma_{t} L^2}{2} \sum_{i = 1}^C p_i \, \expected \, \left\|\overbar{\vb{w}}_{t, k} - \widetilde{\vb{w}}_{t, k}^i\right\|^2}_{b}
    \end{align}
    We notice that $-\gamma_{t}(1 - L\gamma_{t}) \le 0$ because of assumption \eqref{ass:our_algorithm_nonconvex_single_round_global_progress_step_size_assumption}, and we replace $b$ using the result of Lemma \ref{lemma:our_algorithm_perturbed_iterates_difference}. Eventually, we attain
    \begin{align*}
        \expected \, \delta_{t, k} &\le 
        \underbrace{2 \gamma_{t}^3 L^2 E^2 G^2 \left[4 + (1 - \beta)^2 + \frac{\mathds{1}_{t \, \ge \, 1} 8 \gamma_{t - 1}^2 (1 - \beta)^2}{\gamma_{t}^2 \beta^2} \right] + \frac{\gamma_{t}^2 L \sigma^2}{2} \sum_{i = 1}^C p_i^2}_{c} - \expected \,\frac{\gamma_{t}}{2} \left\|\nabla \F{\overbar{\vb{w}}_{t, k}}\right\|^2
    \end{align*}
    After rearranging the terms, we sum from $k = 0$ to $E - 1$.
    \begin{align}
        \sum_{k = 0}^{E - 1} \expected \, \left\|\nabla \F{\overbar{\vb{w}}_{t, k}}\right\|^2 \le \frac{2}{\gamma_{t}} \, \ev{\F{\overbar{\vb{w}}_{t, 0}} - \F{\overbar{\vb{w}}_{t, E}} } + \frac{2Ec}{\gamma_{t}}
    \end{align}
    We replace $\overbar{\vb{w}}_{t, E} \equiv \overbar{\vb{w}}_{t + 1, 0}$, and we conclude our proof by dividing by $E$.
\end{proof}

We now address the convergence behavior when opting for a constant step size.

\OurAlgorithmConvergenceNonConvex*

\begin{proof}
    We first unroll the inequality from Lemma \ref{lemma:our_algorithm_nonconvex_single_round_global_progress} using recursion as we did in Theorem \ref{theorem:convergence_fedprox_nonconvex}.
    Then we substitute our chosen $\gamma$, and we use the definition of $\vb{\widehat{w}}_{t, k}$.
\end{proof}

Finally, we refine the convergence rate of the previous theorem when using a fixed step size. Specifically, we focus on a choice of $E$ that minimizes the bound.

\optimalNumberLocalStepsNonConvex*

\begin{proof}
   We rewrite the convergence rate from the case $I$ of Theorem \ref{theorem:convergence_our_algorithm_nonconvex} as a function of $E$, namely $r(E) = A / \sqrt{TE} + BE / T$ where 
   \begin{align}
       A = 4 L \Delta  + S \sigma^2 / 2 \quad \mathrm{and} \quad B = G^2 \left[4 + (1 - \beta)^2 + 8\left(1 - \dfrac{1}{\beta}\right)^2 \right]
   \end{align}
   Minimizing $r(E)$ in relation to $E$ leads to the critical point $E_{\mathrm{opt}} = [A/(2B)]^{2 / 3} T^{1 / 3}$. After replacing $E_{\mathrm{opt}}$ in $r(E)$, we obtain $r(E_{\mathrm{opt}}) = 3 / 2^{2/3} A^{2 / 3} B^{1 / 3} T^{-2 / 3}$. To conclude, we ignore the constants that depend on $A$ and $B$ in the $\mathcal{O}(\cdot)$ notation.
\end{proof}

\section{Additional details on empirical results}
\label{sec:appendix_experimentation}

This appendix is dedicated to explaining in detail the experiments of the main part of the paper as well as further results that we have obtained.

\subsection{Datasets}
\label{sec:datasets_detailed}

We illustrate the procedure adopted to split each dataset into training and testing sets, and the generation process on each set to simulate the partitioning among $C = 100$ clients in the context of an arbitrarily heterogeneous federated network.

\paragraph{Splitting in training and testing datasets}

We split each full dataset into training and testing datasets following approximately the $80$:$20$ convention. We keep the partition fixed across all experiments to favor reproducibility.

\paragraph{Preprocessing}

Data samples undergo a standardization step along each channel. For every data sample $i$, we transform each pixel value $\vb{x}_i[m]$ to $\vb{z}_i[m] = (\vb{x}_i[m] - \vb{\mu}[m]) / \vb{\sigma}[m]$, where $\vb{\mu}[m]$ and $\vb{\sigma}[m]$ are the $m$-th pixel mean and standard deviation across all samples, respectively.

\paragraph{Dividing samples among clients}
\label{sec:federated_dataset_generation}

We generate a federated dataset from the centralized version by partitioning the latter into $C$ disjoint subsets of samples, where $C$ is the number of clients. We control the degree of heterogeneity for the generated dataset through two parameters: class imbalance and data imbalance. The class imbalance expresses the disproportion in the number of samples of a specific class assigned across all clients. This measure is parameterized through $a \ge 0$ where $a \approx 0$ corresponds to perfect class balance, while larger values imply a higher imbalance. 
Taking inspiration from \cite{measuring_niid_data}, to achieve this, we sample the number of examples of a specific class assigned to each client from a Dirichlet distribution $\mathrm{Dir}((1 / a) \cdot \vb{1}_{K})$ where $K$ is the number of classes. All previous steps are repeated both for the training and testing datasets.
On the other hand, the data imbalance represents the discrepancy between the number of data points given to each client. We parameterize such a measure through $b > 0$, which is employed to sample the size of each client's dataset from a log-normal distribution $\ln \mathcal{N}(\lfloor N / C \rfloor, b)$, where $N$ is the total amount of samples across all participants. Again, by using $b \approx 0$ we expect all clients to share almost the same number of samples. 
Both these two parameters let us construct accurately a federated dataset by calibrating the extent of imbalance that we aim to introduce in the resulting network of clients. Furthermore, whenever some samples are left in the partitioning phase, these are then divided among clients to approximately ensure the extent of class and data imbalance requested.

The following computer vision datasets are released under the MIT license and are widely employed for federated learning. For instance, \cite{fedprox} used FEMNIST to assess the performance of \textsc{FedProx}, and \cite{parallel_sgd_guide_optimization, fednova} used CIFAR10 to study the behavior of federated algorithms in nonconvex settings.

\paragraph{CIFAR10}

CIFAR10, introduced by \cite{cifar10} in addition to CIFAR100, is a dataset of 60000 RGB images of size $32 \times 32$ belonging to 10 classes, where each class takes exactly 6000 samples. The 10 classes correspond to daily observable objects such as \textit{airplace}, \textit{truck}, and others.
\begin{figure}[thb]
    \centering
    \includegraphics[width=1 \textwidth]{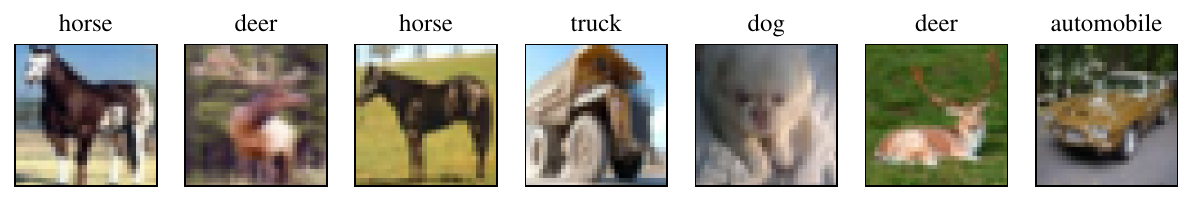}
    \caption{Visualization of samples from CIFAR10 (\cite{cifar10}) dataset.}
    \label{fig:cifar10-samples}
\end{figure}

\paragraph{CIFAR100}

CIFAR100, similarly to CIFAR10, contains RGB 60000 images of the same size but belonging to 100 classes. All 100 labels can be further grouped into 20 more generic superclasses. Nonetheless, we only consider the finer subdivision for our experiments in Subsection \ref{sec:auxiliary_results}.

\paragraph{FEMNIST}

\cite{leaf} published the FEMNIST dataset to deliberately study federated learning and to establish a common benchmark to investigate the performance of distributed optimization algorithms. The original dataset contains $805263$ grayscale images of size $28 \times 28$ subdivided into $62$ classes. We use a subset of $382705$ samples picturing handwritten digits from $0$ to $9$. 
\begin{figure}[h]
    \centering
    \includegraphics[width=1 \textwidth]{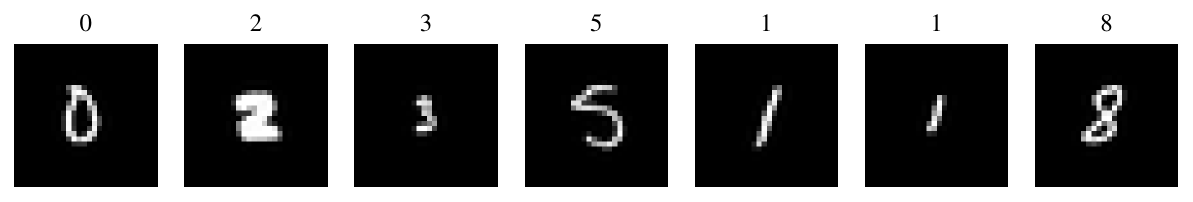}
    \caption{Visualization of samples from FEMNIST (\cite{leaf}) dataset.}
    \label{fig:femnist-samples}
\end{figure}

\subsection{Loss objective}

We describe the two kinds of loss functions that we use in practice according to our convex and nonconvex theoretical analysis.

\paragraph{Strongly convex}

Since part of our theoretical analysis applies to smooth and strongly convex loss objectives, we employ a multinomial logistic regression model with $L_2$ penalty of the parameters $\vb{w} \eqdef \{ \, \vb{v}_k \, \}_{k = 0}^K$ where each $\vb{v}_k \in \mathbb{R}^D$ for $k = 0, \ldots, K$. We denote the number of classes as $K$, and dataset $\mathcal{D}_i \eqdef \{ \, (\vb{x}_n, \vb{y}_n) \, \}_{n = 1}^{N_i}$ for each client $i \in \{ \, 1, 2, \ldots, C \, \}$. Therefore, we define the local loss function for client $i$ as follows.
\begin{align}
    \Fi{\vb{w}}{i} \eqdef \frac{1}{N_i} \sum_{n = 1}^{N_i} \ell(\vb{w}; (\vb{x}_n, \vb{y}_n)) + \frac{\lambda}{2} \sum_{k = 0}^K \left\|\vb{v}_k\right\|^2
\end{align}
Notice that the label vector $\vb{y}$ is a one-hot encoded vector where the true class is $1$ and other entries $0$. In addition, we deliberately decide to include the bias $\vb{v}_0$ in the computation of the linear mappings, where $\vb{e}_i$ is the $i$-th column vector of the canonical basis. Ultimately, we choose the sample loss $\ell(\vb{w}; (\vb{x}, \vb{y}))$ as the cross entropy on the output of the multinomial logistic regression, that is
\begin{align}
    \ell(\vb{w}; (\vb{x}, \vb{y})) \eqdef - \sum_{k = 1}^K y_k \left[z_k - \ln\left[ \, \sum_{s = 1}^K \exp(z_s) \, \right] \right] \quad \textrm{where} \quad z_i \eqdef \vb{v}_i^{\transpose} \vb{x} + \vb{v}_0^{\transpose} \vb{e}_i
\end{align}
We select the predicted class as $\arg\max_{k}(z_k)$.

\paragraph{Nonconvex}

In the nonconvex scenario, we choose an elementary neural network with a single hidden layer of $128$ neurons using ReLU activation. The input layer accepts flattened images, and the output layer emits class probabilities fed to a cross-entropy loss. Moreover, the weights are subject to $L_2$ regularization as in the strongly convex case.

\subsection{Validation metrics}

We use two different metrics to assess the performance of each algorithm on unseen data, namely those $C$ clients generated from the testing set of each dataset.

\paragraph{Accuracy}

The accuracy that we represent in several plots and tables is the mean accuracy across all testing clients. More precisely, each client $i$ computes the local accuracy of the global model $\vb{w}$ on its data samples $\{ \,(\vb{x}_j, \vb{y}_j) \, \}_{j = 1}^{N_i}$, and local accuracies are then averaged, according to the amount of data samples held by each client, to compute the overall accuracy metric.
\begin{align}
    \mathrm{accuracy}(\vb{w}) \eqdef \sum_{i = 1}^C \frac{N_i}{N} \underbrace{\sum_{j = 1}^{N_i} \frac{\mathds{1}\left\{ \, h(\vb{w}; \vb{x}_j) = \vb{y}_j \, \right\}}{N_i}}_{\textrm{accuracy on testing client }i} \quad \textrm{where} \quad N \eqdef \sum_{i = 1}^C N_i
    \label{eq:accuracy}
\end{align}
The expression $\mathds{1}\left\{ \, h(\vb{w}; \vb{x}_j) = \vb{y}_j \, \right\}$ is the 0/1 score on the predicted output $h(\vb{w}; \vb{x}_j)$.

\paragraph{Convergence speed}

The convergence speed is given by the minimum number of rounds that the testing accuracy \eqref{eq:accuracy} needs to surpass the threshold $c$ (see Table \ref{tab:federated_experiments_parameter_grid}), namely $\mathrm{accuracy}(\overbar{\vb{w}}_{t, 0}) \ge c$.

\subsection{Hyper-parameters}

Table \ref{tab:federated_experiments_parameter_grid} contains the set of hyper-parameters employed to schedule the experiments found in \ref{sec:results} and in this appendix.
\begin{table}[t]
    \caption{Grid of parameters used for the convergence simulations on FEMNIST and CIFAR10. We use this grid to compare our algorithm against the baseline methods and to study the effect of the variation of both $E$ and $G$ on our algorithm while keeping the other parameters fixed.}
    \renewcommand\arraystretch{1.25}
    \setlength{\tabcolsep}{5pt}
    \centering
    \small
    \begin{tabular}[t]{llll}
        \toprule
        & & \textbf{FEMNIST} & \textbf{CIFAR10} \\
        \midrule
        $\mathcal{A}$ & Federated algorithm & \textsc{FedAvg}, Ours & \textsc{FedAvg}, Ours \\
        $\mathcal{W}$ & Aggregation scheme & \textsc{Adjacency} & \textsc{Adjacency} \\
        $\alpha$ & Proximal parameter & 0 & 0 \\
        $\beta$ & Our algorithm's parameter & 0.5, 0.7, 0.9 & 0.5, 0.7, 0.9 \\
        $\gamma$ & Local step size & $10^{-3}$ & $10^{-3}$ \\
        $\lambda$ & $L_2$ regularization & $10^{-4}$ & $10^{-4}$ \\
        $B$ & Minibatch size & 256 & 256 \\
        $C$ & Number of clients & 100 & 100 \\
        $E$ & Number of local epochs & 10 & 10 \\
        $T$ & Number of rounds & 200 & 200 \\
        $a$ & Class imbalance & 0, 10 & 0, 100 \\
        $b$ & Data imbalance & 0, 1 & 0, 1 \\
        $c$ & Convergence threshold & 0.75 & 0.30 \\
        $s$ & Random seed & 0 & 0 \\
        \bottomrule
        \label{tab:federated_experiments_parameter_grid}
    \end{tabular}
\end{table}

\subsection{Implementation}
\label{sec:implementation}

We implement the datasets and the algorithmic frameworks using PyTorch $\ge 2.0$ from \cite{pytorch}. We run all experiments on an Ubuntu 22.04 laptop mounting 16GB DDR4 RAM, Intel Core i7-7500U CPU 2.7GHz processor, and Nvidia GeForce 940MX (2GB VRAM) GPU. Each experiment takes approximately 9 to 24 hours on a single worker with the CUDA accelerator. Additionally, to promote reproducibility, we use the same random seed initialization (when not tuned) and the deterministic option offered by PyTorch.

\paragraph{Our technical limitations and possible suggestions}
\label{sec:implementation_limitations}

The limited hardware resources allow for a reduced set of feasible simulations because of the demanding running time and computationally intensive nature of the experiments. We suggest running the convergence simulations enabling the CUDA accelerator whenever possible, especially when using the nonconvex loss.

\subsection{Auxiliary results on the nature of the graph representation of federated networks}
\label{sec:auxiliary_results}

This part contains supplemental information to support our definition of the graph representation of a federated network from paragraph \ref{sec:graph_based_model}. In all results, we fix the number of clients $C = 100$ and change the imbalance parameters as shown in figures and the random seed using $\{ \, 0, 1, 41 \, \}$.

\paragraph{How imbalance affects client misalignment}
\begin{figure}[h]
    \centering
    \includegraphics[width=1 \textwidth]{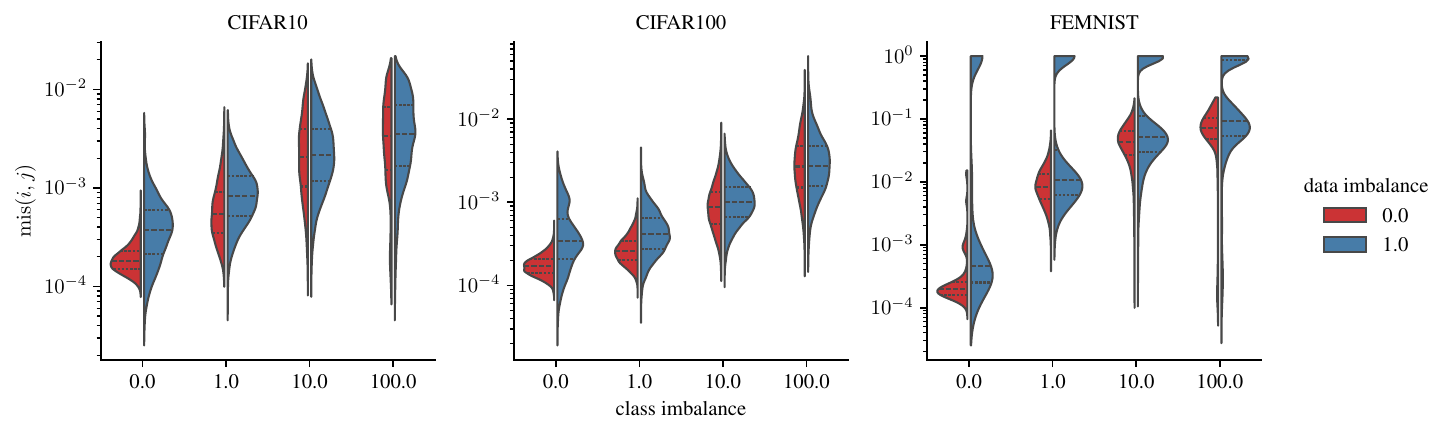}
    \caption{Client misalignment (see \ref{eq:client_misalignment}) for the three datasets under examination. We compute the misalignment distributions for each class imbalance and data imbalance extent. When increasing the imbalance, the distributions move toward a higher degree of misalignment.}
    \label{fig:client-misalignments}
\end{figure}
We apply the definition of client misalignment \ref{eq:client_misalignment} on the chosen federated datasets, generated using the approach depicted in \ref{sec:federated_dataset_generation}. In this regard, following the idea presented in \ref{sec:graph_based_model}, the allocated subsets are interpreted as the clients that constitute the federated network, whereas each one corresponds to a node of the graph and exchanges a message $\vb{m}_i$ corresponding to the first principal component of its dataset (uncentered and unscaled PCA). Therefore, repeating the experiment for multiple choices of imbalance, we compute the misalignment between each pair of clients inside each generated network. We decide to utilize the logarithmic scale to represent all client misalignment distributions. Adopting such a scale allows for better discrimination between different scenarios where the imbalance parameters change. Indeed, by augmenting the class imbalance, we notably affect the misalignment distribution. Specifically, the mean of the distribution grows as we exponentially increase the class imbalance. Likewise, with fixed class imbalance, higher data imbalance negatively affects both the skewness and kurtosis of the misalignment distributions, since these present shifted peaks and longer tails. To conclude, such results suggest that client misalignment $\mathrm{mis}(i, j)$ could be a representative measure of the statistical dissimilarity between different clients, and, therefore, can be reasonably leveraged to interpret federated networks as similarity graphs.

\paragraph{More insights from the graph-laplacian $\vb{L}$}
\begin{figure}[h]
    \centering
    \includegraphics[width=1 \textwidth]{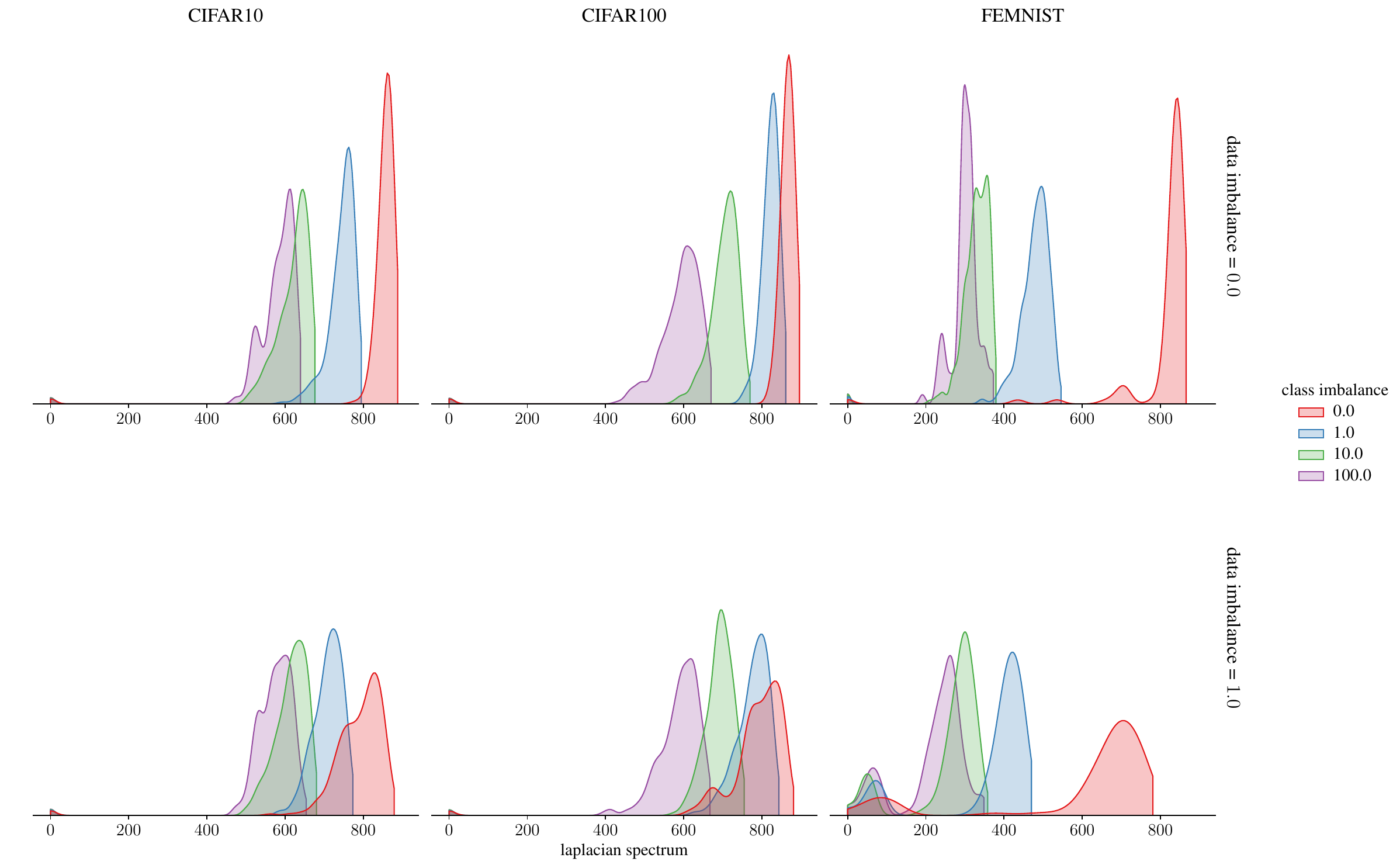}
    \caption{The distribution of the eigenvalues of the laplacian matrix $\vb{L}$ that is associated with the graph representation of the federated network. We inspect the distribution under different conditions of imbalance. The spectrum is highly skewed toward lower eigenvalues as the heterogeneity increases.}
    \label{fig:laplacian-spectrum-distribution}
\end{figure}
Given the adjacency matrix $\vb{A}$ of the graph representation of a federated network from Definition \ref{eq:federated_network_adjacency_matrix}, we construct its laplacian matrix $\vb{L} = \textrm{diag}(\vb{A}\vb{1}) - \vb{A}$. 
On this subject, we purposefully compare the spectra of multiple federated networks' laplacians, conceived with different rates of imbalance. We accurately portray the results of our simulation in Figure \ref{fig:laplacian-spectrum-distribution}. Noticeably, the behavior of the spectrum significantly depends on the degree of class imbalance. Indeed, as we increase it, the spectrum is moved to lower eigenvalues. Additionally, the spectra of highly homogeneous networks ($\text{class imbalance} = 0.0$) are more affected by an increment in data imbalance than highly heterogeneous networks ($\text{class imbalance} \gg 0.0$), and this fact becomes evident through a reduction in the kurtosis of the eigenvalues distribution.
\begin{table}[b]
    \caption{Running times of experiments from Table \ref{tab:algorithms_experiments_comparison}. We report execution times as (mean $\pm$ std) across the balanced and imbalanced cases for each configuration of parameters.}
    \renewcommand\arraystretch{1}
    \newcommand{\NA}{---}
    \newcommand{\MISSING}{\textcolor{red}{TODO}}
    \setlength{\tabcolsep}{5pt}
    \centering
    \small
    \begin{tabular}[t]{lccc}
        \toprule
        & \textbf{Convex} & \multicolumn{2}{c}{\textbf{Running time} (mean $\pm$ std)} \\
        \cmidrule{3-4}
        & & \textbf{CIFAR10} & \textbf{FEMNIST} \\
        \midrule
        \multirow{2}{*}{\textsc{FedAvg}} & Yes & 08h58m $\pm$ 00h02m & 21h34m $\pm$ 00h21m \\
        & No & 09h45m $\pm$ 00h00m & 23h03m $\pm$ 00h04m \\
        \midrule
        \multirow{2}{*}{$\textrm{Ours}_{\{0.9\}}$} & Yes & 09h08m $\pm$ 00h02m & 22h15m $\pm$ 00h14m \\
        & No & 11h11m $\pm$ 00h08m & 23h58m $\pm$ 00h01m \\
        \cmidrule{2-4}
        \multirow{2}{*}{$\textrm{Ours}_{\{0.7\}}$} & Yes & 09h06m $\pm$ 00h00m & 22h22m $\pm$ 00h02m \\
        & No & 11h09m $\pm$ 00h03m & 24h46m $\pm$ 00h11m \\
        \cmidrule{2-4}
        \multirow{2}{*}{$\textrm{Ours}_{\{0.5\}}$} & Yes & 09h09m $\pm$ 00h00m & 22h02m $\pm$ 00h03m \\
        & No & 11h06m $\pm$ 00h03m & 24h10m $\pm$ 00h15m \\
        \bottomrule
    \end{tabular}
    \label{tab:algorithms_experiments_running_times}
\end{table}

\subsection{Experiments on the convergence and robustness of our algorithm}

This part contains the experiments used to evaluate the performance of our approach. In the first place, we aim to assess its behavior across different extents of balancedness and convexity guarantees in comparison with the baseline algorithms \textsc{FedAvg}. In addition, due to the theoretical limitation of our method that we explain in section \ref{sec:analysis_our_algorithm}, we are also interested in understanding its robustness to variations in the number of local optimization epochs $E$ and the norm of the stochastic gradient $G$.

\paragraph{The aggregation scheme}

To enable comparisons across the simulations, we define and employ the \textsc{Adjacency} scheme to average the updated weights from clients. To this end, each client $i$ initially shares the first principal component related to its local dataset $\mathcal{D}_i$ as message $\vb{m}_i$. We define similarity weights and aggregation weights from shared messages accordingly to Algorithm \ref{alg:our_algorithm}.

\paragraph{Running times of our main simulations}

We present the running times on the reference hardware (see \ref{sec:implementation}) of our main simulations from Table \ref{tab:algorithms_experiments_comparison} when tuning the parameters drawn from \ref{tab:federated_experiments_parameter_grid}. For each combination of parameters identifying a row of Table \ref{tab:algorithms_experiments_running_times}, we compute the mean and the standard deviation of running times between the balanced and imbalanced experiments.

\paragraph{The efficacy of our algorithm persists as $E$ changes}
\begin{figure}[tbh]
    \centering
    \includegraphics[width=1 \textwidth]{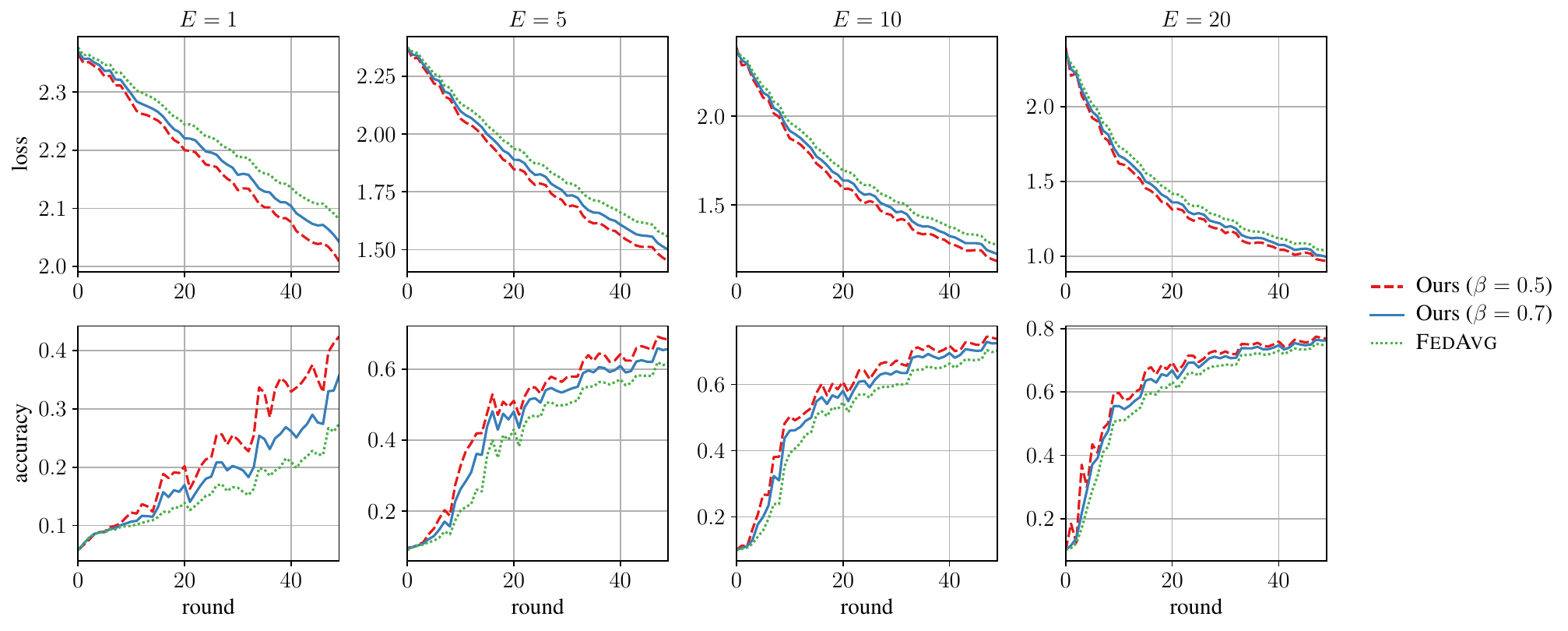}
    \caption{We run \textsc{FedAvg} and our algorithm for multiple values of $\beta$ on the imbalanced FEMNIST, specifically on unseen clients. We vary the number of local epochs $E$ using our strongly convex loss. }
    \label{fig:femnist-imbalance-convex-vary-epochs-extended}
\end{figure}
\begin{figure}[tbh]
    \centering
    \includegraphics[width=1 \textwidth]{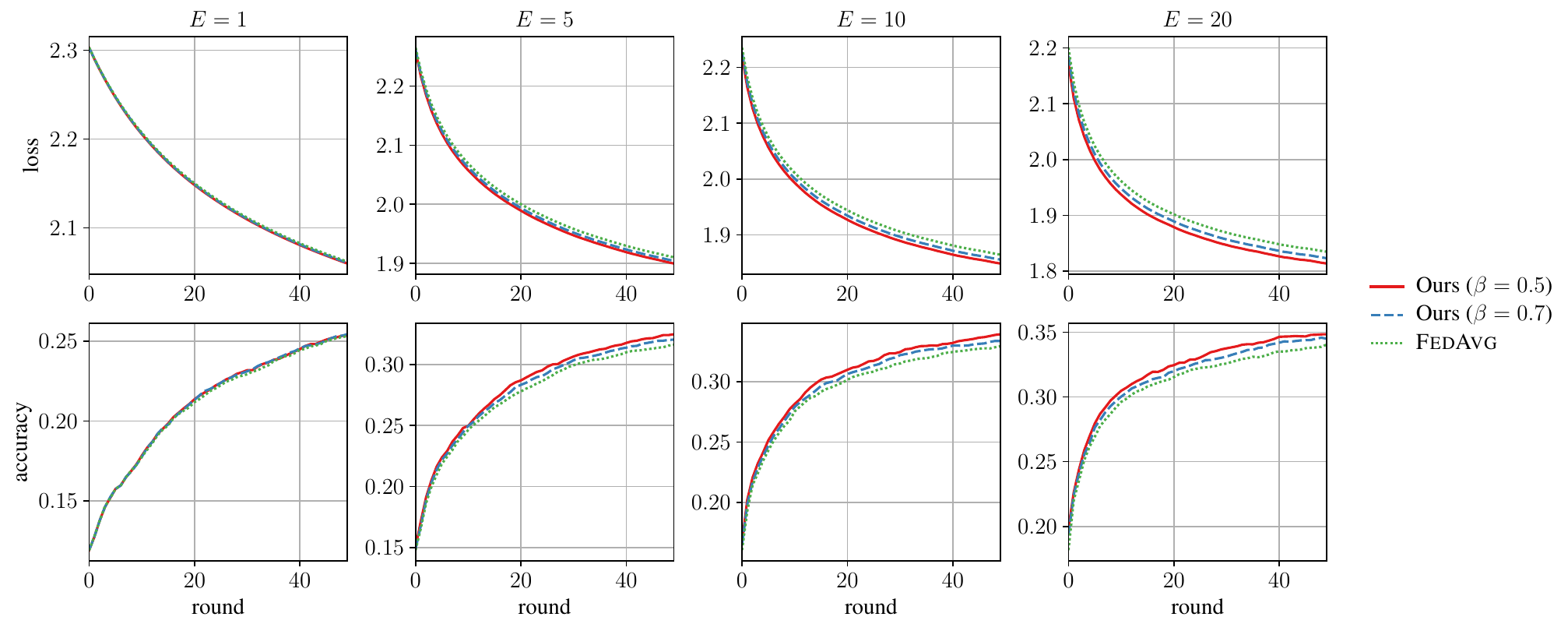}
    \caption{We run the same strongly convex simulation of Figure \ref{fig:femnist-varying-E-simulation} on the imbalanced CIFAR10. }
    \label{fig:cifar10-imbalance-convex-vary-epochs-extended}
\end{figure}
In this set of experiments, we test our algorithm and \textsc{FedAvg} for $T = 50$ rounds on unseen data in imbalanced scenarios when using the multinomial logistic regression as strongly convex loss. We again draw the values of the parameters from Table \ref{tab:federated_experiments_parameter_grid}. We only vary the number of epochs $E \in \{ \, 1, 5, 10, 20 \, \}$ devoted to local optimization on each client, and our algorithm's parameter $\beta \in \{ \, 0.5, 0.7 \, \}$. Differently from our theoretical results where $\gamma \propto 1 / E$, we keep the same step size $\gamma = 10^{-3}$ across all experiments, regardless of the value of $E$. Figure \ref{fig:femnist-imbalance-convex-vary-epochs-extended} and \ref{fig:cifar10-imbalance-convex-vary-epochs-extended} show that our algorithm consistently improves over \textsc{FedAvg} for multiple configurations of $E$.

\paragraph{Studying the effect of gradient clipping on our algorithm}

We now consider studying the performance of our algorithm in comparison with the baseline \textsc{FedAvg} on unseen clients (testing dataset) as we vary the maximum allowed norm $G \in \{ \, 1.0, 10.0, \infty \textrm{ (unbounded)} \, \}$ of stochastic gradients. We accomplish this task by applying the gradient clipping operation implemented by PyTorch. For this simulation, we consider the strongly convex loss, namely the multinomial logistic regression, and we pick $\beta \in \{ \, 0.5, 0.7 \, \}$. We again run the experiments for $T = 50$ rounds on the imbalanced datasets. We choose all the other parameters from Table \ref{tab:federated_experiments_parameter_grid}. From Figure \ref{fig:femnist-imbalance-convex-vary-gradient-norm-extended} and \ref{fig:cifar10-imbalance-convex-vary-gradient-norm-extended}, we discern that our algorithm performs comparably to the baseline if not better.
\begin{figure}[tbh]
    \centering
    \includegraphics[width=0.8 \textwidth]{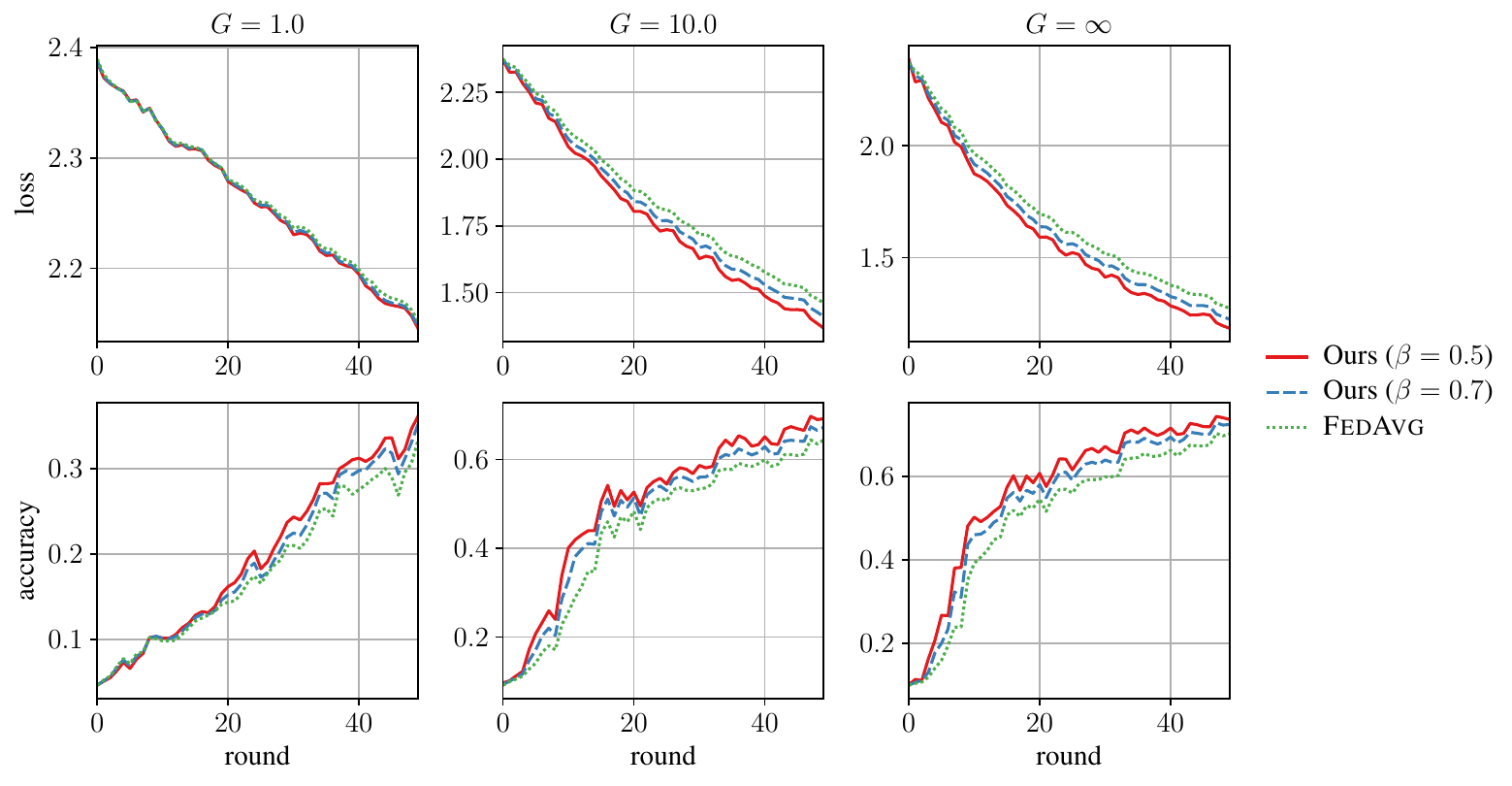}
    \caption{In this strongly convex simulation on the imbalanced FEMNIST, we observe the behavior of the baseline \textsc{FedAvg} and our algorithm when gradient clipping is applied. Specifically, we set the maximum norm of the stochastic gradient as $G$ in each depicted experiment. Having $G = 1.0$ significantly slows down convergence.}
    \label{fig:femnist-imbalance-convex-vary-gradient-norm-extended}
\end{figure}
\begin{figure}[tbh]
    \centering
    \includegraphics[width=0.8 \textwidth]{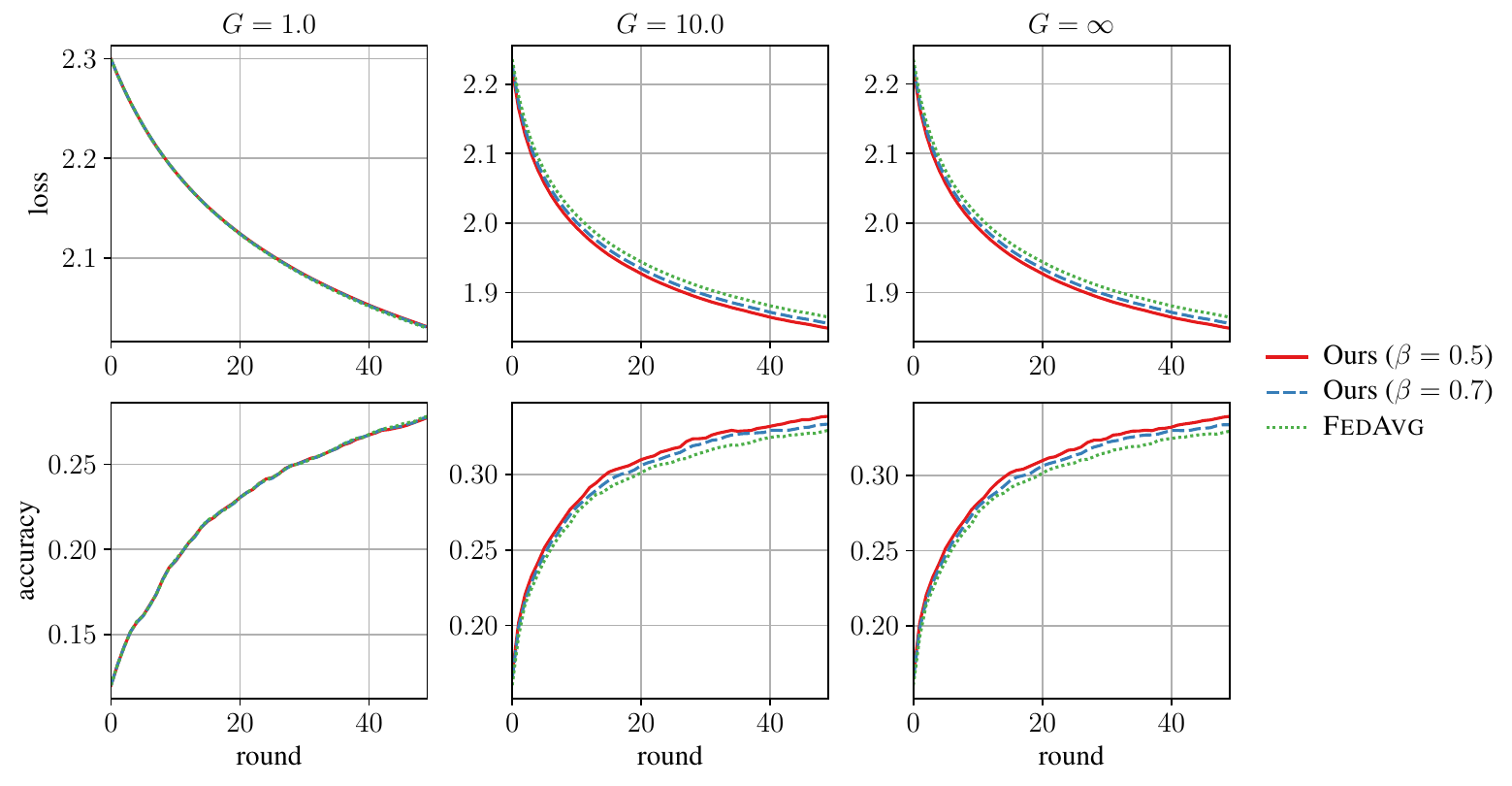}
    \caption{As in figure \ref{fig:femnist-imbalance-convex-vary-gradient-norm-extended}, we run the same strongly convex simulation on the imbalanced CIFAR10. We observe that the consequence of clipping the gradient is practically imperceptible when $G \ge 10$.}
    \label{fig:cifar10-imbalance-convex-vary-gradient-norm-extended}
\end{figure}

\paragraph{How the step size impacts the stability of convergence}

We now assess how the convergence our algorithm for $\beta \in \{ \, 0.5, 0.7, 0.9 \, \}$ is affected when employing a different step size $\gamma \in \{ \, 10^{-3}, 10^{-2}, 10^{-1} \, \}$ in the strongly convex case. Once more, we utilize \textsc{FedAvg} as our baseline, and we run these simulations for $T = 50$ rounds on the imbalanced CIFAR10 and FEMNIST datasets. All other parameters are fixed and chosen from Table \ref{tab:federated_experiments_parameter_grid}. In both Figure \ref{fig:femnist-imbalance-convex-vary-step-size} and \ref{fig:cifar10-imbalance-convex-vary-step-size}, we observe that the combination of the step size $\gamma$ and the perturbation parameter $\beta$ is crucial to guarantee a stable convergence for our algorithm. Precisely, and in line with our theoretical result from Theorem \ref{theorem:convergence_our_algorithm_strongly_convex}, a large step size $\gamma$ and small $\beta$ (high perturbation) imply evident spikes in the (testing) loss and accuracy curves. However, when $\beta$ is sufficiently large (minimal perturbation) and the magnitude of the step size is limited enough, our algorithm visibly performs better than \textsc{FedAvg}.
\begin{figure}[tbh]
    \centering
    \includegraphics[width=0.8 \textwidth]{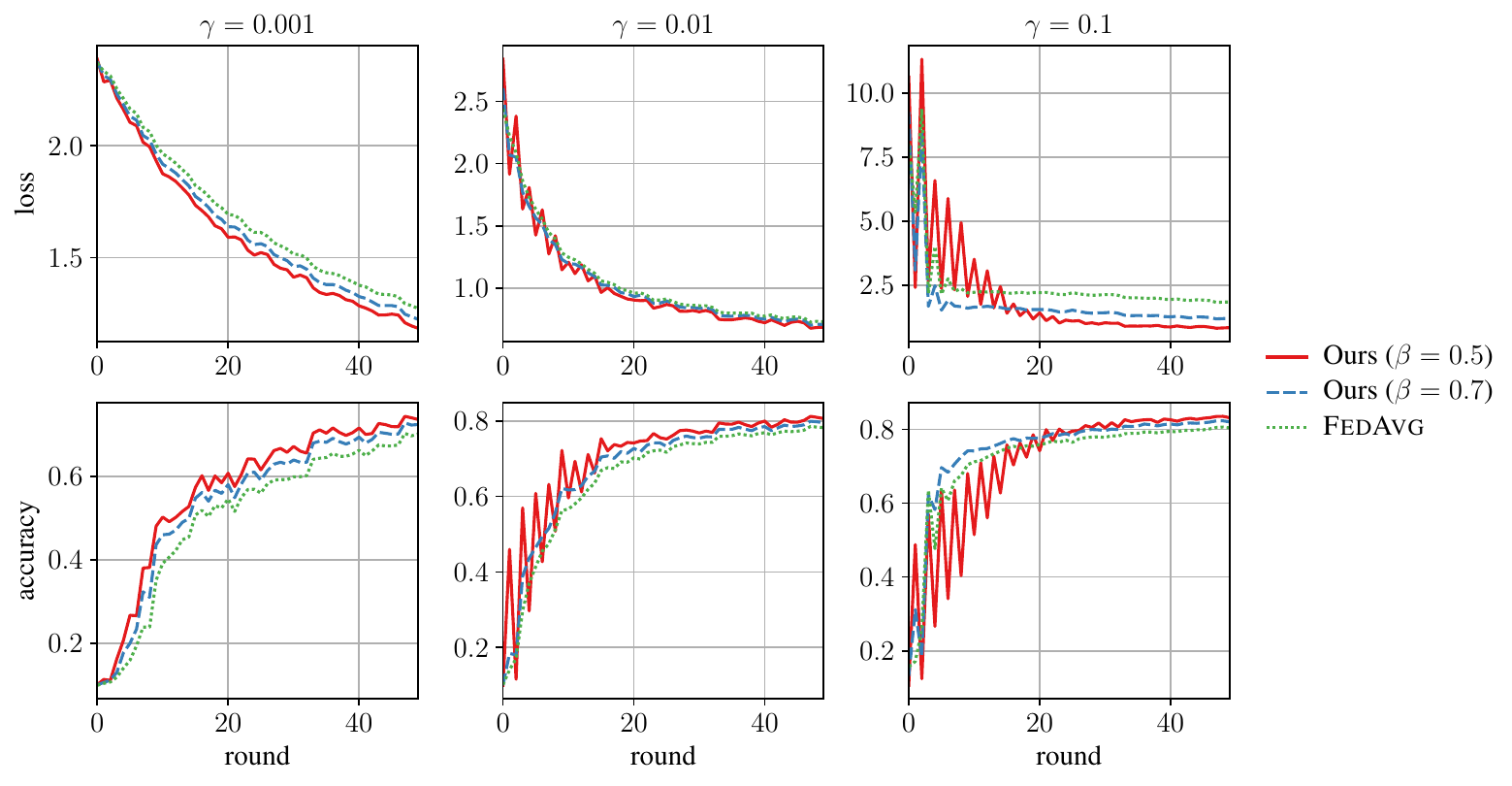}
    \caption{We vary the step size when running our algorithm with a strongly convex loss, namely the multinomial logistic regression, on the imbalanced FEMNIST dataset.}
    \label{fig:femnist-imbalance-convex-vary-step-size}
\end{figure}
\begin{figure}[tbh]
    \centering
    \includegraphics[width=0.8 \textwidth]{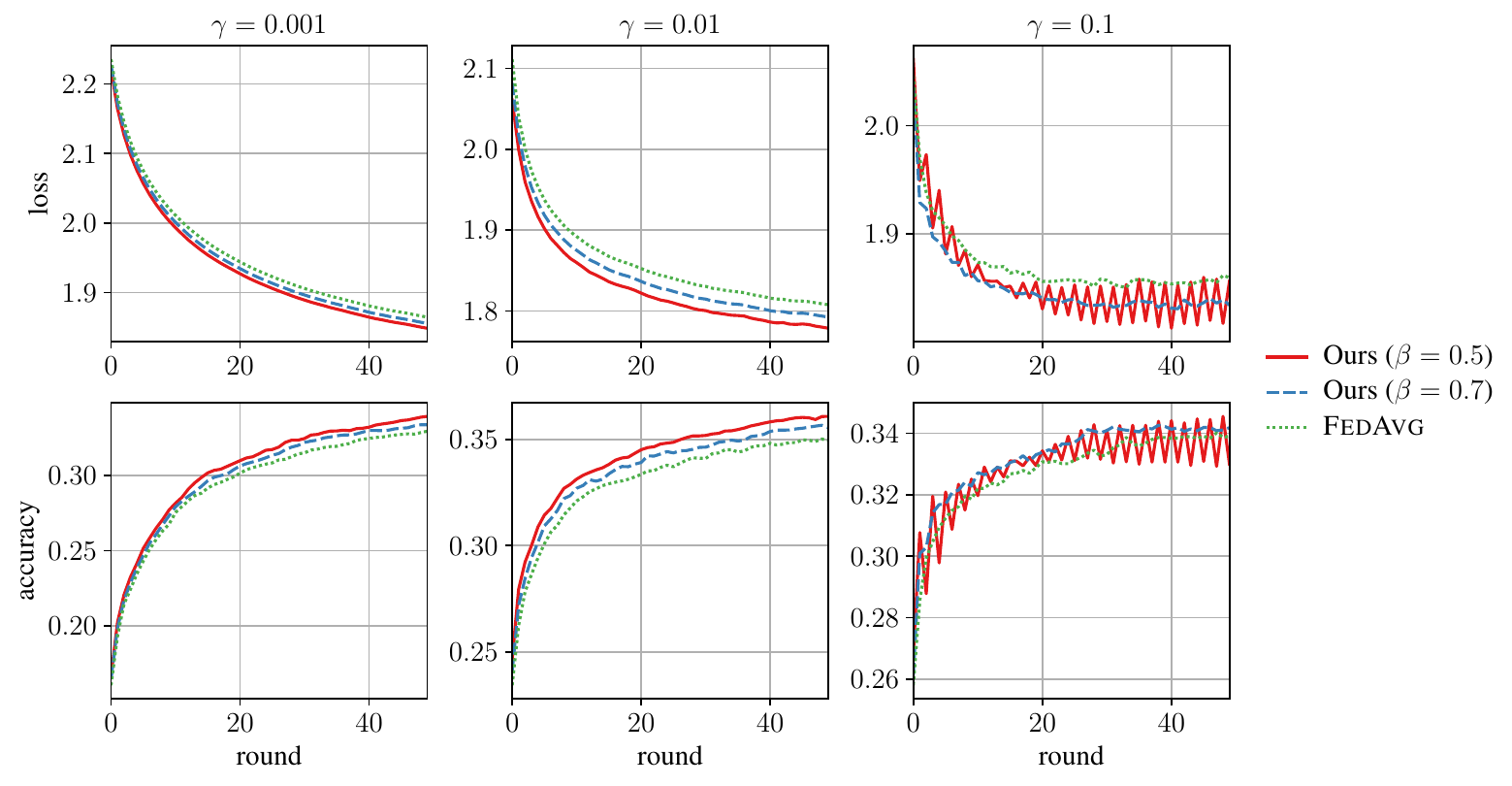}
    \caption{On the imbalanced CIFAR10 dataset, we run the same set of experiments of Figure \ref{fig:femnist-imbalance-convex-vary-step-size}.}
    \label{fig:cifar10-imbalance-convex-vary-step-size}
\end{figure}

\section{Further discussion on the scope and utility of our study}
\label{sec:motivation}

As pointed out by \cite{fed_optimization_guide}, an intrinsic problem of federated learning is represented by the update operation each client independently carries out for multiple local steps. \cite{fedaveraging} introduced this scheme as \textsc{FedAvg}, where each client undertakes more than one stochastic gradient update to reduce the synchronization steps with the server and thus the communication cost. However, when the server aggregates the computed updates from the clients, the whole procedure results in an inexact gradient descent in terms of the average iterate $\overbar{\vb{w}}_{t, k}$ since the clients evaluate local gradients in their respective local iterates in place of $\overbar{\vb{w}}_{t, k}$. We discuss in Section \ref{sec:related_works} how previous works attack this issue and the related client drift phenomenon. 

Nevertheless, our novel approach is different yet elementary regarding how it addresses the previously mentioned problem and corrects the optimization procedure performed on each client's device. 
To better mimic the classic and centralized stochastic gradient descent, we believe it is worth finding a locally perturbed iterate $\widetilde{\vb{w}}_{t, k}^i$ closer than $\vb{w}_{t, k}^i$ to the global average $\overbar{\vb{w}}_{t, k}$. 
Specifically, we realign locally computed gradients through calculated and "personalized" perturbations that carry information about other clients based on statistical affinity. We introduce our framework in detail in Section \ref{sec:our_algorithm_framework_definition}.

Both Theorem \ref{theorem:convergence_our_algorithm_strongly_convex} and \ref{theorem:convergence_our_algorithm_nonconvex} highlight how much we have to pay in terms of expected convergence error when raising the extent of perturbation (by reducing $\beta$). In other words, a higher perturbation implies injecting more mutual similarity information $\vb{u}_t^i$ into the update, to the detriment of $\vb{w}_{t, k}^i$. We achieve this by defining the perturbed iterate as the weighted mean $\beta \vb{w}_{t, k}^i + (1 - \beta)\vb{u}_t^i$.
Although our theoretical results agree that lowering $\beta$ destabilizes the convergence to optimality, the empirical evidence shows that the proposed scheme consistently outperforms the baseline \textsc{FedAvg} across multiple scenarios when making an appropriate choice of $\beta$ (sufficiently large) and $\gamma$ (sufficiently small).

However, it becomes clear that our method has a more general and simple structure that can be useful in developing other federated algorithms. In this respect, there are no limitations on how the perturbed iterate $\widetilde{\vb{w}}_{t, k}^i$ can be defined, and we present a possible and specific way to do so. We hope such a consideration opens up unexplored possibilities for devising algorithms where clients implement more informed optimization steps while complying with the communication and privacy constraints imposed by federated learning.

\end{document}